\documentclass[11pt]{article}

\usepackage[utf8]{inputenc}
\usepackage[T1]{fontenc}
\usepackage{lmodern}
\usepackage[margin=1in]{geometry}
\usepackage{amsmath,amssymb,amsthm}
\usepackage{mathtools}
\usepackage{booktabs}
\usepackage{longtable}
\usepackage{array}
\usepackage{enumitem}
\usepackage{microtype}
\usepackage[dvipsnames]{xcolor}
\usepackage{graphicx}
\usepackage{float}
\usepackage[textsize=scriptsize]{todonotes}
\usepackage{placeins}
\usepackage{tikz}
\usepackage[framemethod=tikz]{mdframed}
\usepackage{nimi-title}
\usepackage[round,authoryear]{natbib}
\usepackage[hidelinks,colorlinks=true,linkcolor=BrickRed,citecolor=NavyBlue,urlcolor=NavyBlue]{hyperref}
\usepackage{cleveref}
\usetikzlibrary{arrows.meta,backgrounds,calc,fit,positioning}
\graphicspath{{figures/}}
\NIMISetLogo{nimi_logo.png}
\NIMISetProjectLink[Open-source implementation]
  {https://github.com/NIMI-research/Tycho}
  {github.com/NIMI-research/Tycho}
\NIMISetProjectMeta{Apache-2.0}

\setlist[itemize]{leftmargin=1.35em,itemsep=1pt,topsep=2pt}
\setlist[enumerate]{leftmargin=1.45em,itemsep=1pt,topsep=2pt}

\newtheorem{definition}{Definition}
\newtheorem{proposition}{Proposition}
\newtheorem{remark}{Remark}

\newcommand{\States}{\mathcal{S}}
\newcommand{\Actions}{\mathcal{A}}
\newcommand{\Obs}{\mathcal{O}}
\newcommand{\Grid}{\mathcal{G}}
\newcommand{\Levels}{\mathcal{L}}

\newcommand{\Bel}{\mathcal{B}}
\newcommand{\Outcome}{\mathcal{Y}}
\newcommand{\trans}{\delta}
\newcommand{\render}{\rho}
\newcommand{\outcome}{\omega}
\newcommand{\init}{\iota}
\newcommand{\avail}{\alpha}
\newcommand{\emit}{\varepsilon}

\newcommand{\code}[1]{\texttt{#1}}

\hypersetup{
  pdftitle={Tycho: Active Abstraction with Programmatic World Models for ARC-AGI-3},
  pdfauthor={Jens Lehmann, Andrei Aioanei, Sahar Vahdati}
}

\NIMISetTitleNote{%
  \NIMINameNote{Why Tycho}{%
    Tycho Brahe, the sixteenth-century astronomer, charted the heavens with unmatched precision, but did
    not live to see their hidden order become law.  Kepler found that order in Tycho's observations.
    The name is our thesis: intelligence begins by looking closely at an unfamiliar
    world, turning evidence into a model, and using that model to see beyond the next observation.%
  }%
}

\title{Tycho: Active Abstraction with Programmatic World Models for ARC-AGI-3}
\author{%
  \NIMIAuthorBlock{0.32\linewidth}
    {Jens Lehmann}
    {Dresden University of Technology\\Amazon\\Dresden, Germany\NIMIEmail{jens.lehmann@tu-dresden.de}}
    {Work done outside of Amazon}%
  \hfill
  \NIMIAuthorBlock{0.32\linewidth}
    {Andrei Aioanei}
    {TIB -- Leibniz Information Centre for Science and Technology, \\ Hannover, Germany\NIMIEmail{andrei.aioanei@tib.eu}}
    {}%
  \hfill
  \NIMIAuthorBlock{0.32\linewidth}
    {Sahar Vahdati}
    {Leibniz University of Hannover \\  
    TIB -- Leibniz Information Centre for Science and Technology, \\ Hannover, Germany\NIMIEmail{sahar.vahdati@tib.eu}}
    {}%
}
\date{July 29, 2026}

\begin{document}
\maketitle

\begin{nimiabstract}
ARC-AGI-3 turns abstraction into an interactive problem of skill acquisition.
A player must figure out an unfamiliar game’s rules, hidden state, and goal while maintaining action efficiency, since every move counts.
We formalize these game environments as parameterized rendered deterministic Moore machines.
We then introduce Tycho, a coding-agent system for constructing and using game-specific models during
interaction.  Tycho distinguishes observations on which the agent can act from intermediate animation,
level-completion, and game-over frames.
Using this structured interaction history, an agent can model, test, plan with, repair, or bypass a free-form executable hypothesis about the game.

In one matched public-set run per policy, we compare four orchestration policies on all 25 public
ARC-AGI-3 games using Claude~Opus~4.8 under matched inference budgets: direct reasoning, actor-authored
modeling, actor-requested delegation to a builder, and automatically triggered model repair on
verification failure.  Actor-requested delegation obtains the highest observed mean Relative Human
Action Efficiency (RHAE), at $88.49$.
We then evaluate this selected policy with two frontier models.
GPT-5.6 Sol reaches $100.00$ RHAE, completing all 25 games and all 183 levels in $7{,}766$ scored actions.
Opus~5 also reaches $100.00$ RHAE, completing all 183 levels in $6{,}641$ actions.  The two runs obtain
game-balanced first-run human-replay midranks of 98.5 and 100.0, respectively.  Across the 183 completed
levels, Opus~5 uses 61\% fewer scored actions than the aggregate official human baselines.

Automatic repair after verification failures produces models that reproduce observed game transitions
much more accurately.  Yet the resulting policy reaches only $83.07$ RHAE, below actor-requested delegation.  
This gap illustrates that transition match shows whether a simulator reproduces observed
dynamics, not whether it has identified the objective or whether consulting it improves the next action.
Strong play therefore also requires deciding when to construct, repair, use, or bypass the model.  
We call this joint problem \emph{active abstraction}: generating a testable model from costly interaction and deciding when acquiring or using that model is worth its cost.
\end{nimiabstract}

\section{Introduction}

The hard part of intelligence is not storing many solutions.  It is acquiring useful new skills from a
small amount of experience.  Chollet's definition of intelligence as skill-acquisition efficiency
\citep{chollet2019measure} makes this point precise enough to make this view of intelligence measurable: a system that
solves a task only after exhaustive search, hidden pretraining leakage, or thousands of environment
trials has not demonstrated the same kind of generalization as a human who infers a rule from a
few examples.

The first two ARC-AGI benchmarks made this idea concrete through static reasoning tasks and became widely
regarded as exceptionally difficult tests of generalization in artificial intelligence. ARC-AGI-3
\citep{arcprize2026arcagi3} moves the same principle into an interactive setting. The agent observes a $64\times64$ colored grid,
chooses an action, receives the next grid, and must complete a sequence of levels.
No natural-language rule, goal description, or action semantics are provided. Every environment action
is part of the score.  The agent must therefore decide not only \emph{what is true}, but also \emph{which action is worth performing next}.

ARC-AGI-3 is a bounded instance of a broader challenge for general agents: entering a new domain,
forming hypotheses about its mechanics, selecting informative actions, and using the resulting model for
action selection. Unlike static ARC, evidence is not fixed in advance: each transition $(o_t,a_t,o_{t+1})$ is both
evidence about the world and an irreversible expenditure from the action budget. We call this problem
\emph{active abstraction}: automatically constructing an explicit, testable model from costly interaction and deciding when
acquiring or using that model is worth its cost. A useful abstraction must preserve hidden state and
cross-level regularities, guide action from limited evidence, and justify the actions spent testing it.

Tycho operationalizes several central elements of the world-model-induction agenda of
\citet{ying2025adaptive}: constructing and revising models from limited interaction, using them to guide
action and exploration, and making the induced representation directly inspectable. World models are
useful here for three reasons. 
They let the agent simulate alternative action sequences before spending
real actions; they make exploration purposeful when competing hypotheses predict different observations
or plans; and they help localize failure when modeled dynamics fit the interaction history but outcome
inference or planning remains wrong. 
Programmatic models are directly inspectable and editable.
They can be tested against observed
transitions, revised after mismatches, and used by standard planning algorithms.
Passing observed replay checks does not prove generalization, but it turns an
informal interpretation into a concrete hypothesis.
Independent analysis of frontier-model trajectories identifies the corresponding failures in practice:
agents can observe a correct local effect but form the wrong global model, import an inappropriate game
abstraction, or complete one level without extracting mechanics that transfer to the next
\citep{arcprize2026failureanalysis}.

We use \emph{world model} in the agent-relative sense, a model of the environment in which an agent
acts.  Here, that environment is one ARC-AGI-3 game. Tycho induces a game-specific model from interaction,
not a general model of the physical world.
More precisely, the model represents latent environment state,
action-conditioned dynamics, observation formation, and terminal outcomes in a form that supports
prospective simulation and action selection.  This paper studies one instantiation: \emph{programmatic} world models.  
Tycho externalizes the representation as an editable, task-specific executable hypothesis in Python
with freely chosen state variables, initialization and transition rules, a renderer back to the observed
grid, and an outcome classifier. It may also expose planner-facing search guidance.  
It can be verified against past experience and used for planning, and its structure can be read, edited, and simplified.

Tycho instantiates this research program through a loop over evidence, executable hypotheses, and action.
\Cref{fig:tycho-architecture} summarizes the shared loop and \Cref{tab:orchestration-policies} defines the
four model-maintenance policies varied in our matched comparison. Its contributions are:
\begin{itemize}
  \item We formalize ARC-AGI-3 as active identification of deterministic rendered Moore machines with
  transition emissions and a typed agent-facing evidence contract.  The formulation makes hidden state,
  non-injective observations, level boundaries, task-equivalent models, and information-gathering actions
  explicit.
  \item We define programmatic world models as a free-form, executable hypothesis language giving the agent freedom to decide whether to use it.  The same interaction interface supports four policies: (1) no model, (2) single-actor modeling, (3) actor-requested delegation, and (4) automatically triggered repair. We benchmark and compare all four policies.
  \item We separate model accuracy from game-playing performance through pre-action execution, accepted
  transition match, prediction coverage, terminal-outcome prediction, plan following, scored actions,
  and inference cost.  In a replay-audited public-set evaluation, the matched study selects Orchestrator;
  GPT-5.6 Sol and Opus~5 both reach $100.00$ RHAE, with Opus~5 using 14.5\% fewer actions.
  \item We describe the human-mediated cross-run adaptation loop used to identify candidate architectural
  changes from limited development feedback and validate them before they enter future frozen harnesses.
\end{itemize}

\begin{figure}[t]
\centering
\resizebox{0.9\linewidth}{!}{%
\begin{tikzpicture}[
  font=\small,
  box/.style={rounded corners=2pt, thick, align=center, minimum height=1.45cm,
    inner sep=6pt, text width=3.20cm},
  env/.style={box, draw=black!58, fill=black!3, text width=2.45cm},
  record/.style={box, draw=NavyBlue!75, fill=NavyBlue!6, text width=4.20cm},
  actor/.style={box, draw=ForestGreen!75!black, fill=ForestGreen!7, text width=3.80cm},
  construct/.style={box, draw=RoyalPurple!75, fill=RoyalPurple!6},
  model/.style={box, draw=BrickRed!75, fill=BrickRed!6},
  evaluate/.style={box, draw=ForestGreen!65!black, fill=ForestGreen!4},
  arr/.style={-{Latex[length=2.3mm]}, thick, draw=black!70},
  feedback/.style={-{Latex[length=2.3mm]}, thick, draw=RoyalPurple!72}
]
\node[env] (environment) at (0.70,0) {\textbf{ARC-AGI-3}\\environment};
\node[record] (record) at (5.05,0) {\textbf{Interaction record}\\frames and action masks,\\typed events and history};
\node[actor] (actor) at (11.60,0) {\textbf{Frontier actor}\\reasons over evidence,\\commits each action};

\draw[arr] (environment) -- (record);
\draw[arr] (record) -- node[above]{evidence} (actor);
\draw[arr] (actor.north) -- ++(0,0.52) -|
  node[pos=0.28, above]{action} (environment.north);

\node[construct] (construct) at (2.35,-4.00) {\textbf{Construct or repair}\\actor or builder};
\node[model] (wm) at (7.00,-4.00) {\textbf{Executable model}\\state and dynamics,\\render and outcome};
\node[evaluate] (evaluate) at (11.60,-4.00) {\textbf{Verify and plan}\\replay history,\\search model states};

\draw[arr] (record.south) -- ++(0,-0.78) -| (construct.north);
\draw[feedback] (construct) -- (wm);
\draw[arr] (wm) -- (evaluate);
\draw[arr] (evaluate.north) -- node[right]{advice} (actor.south);
\coordinate (feedbackbottom) at ([yshift=-0.62cm]evaluate.south);
\coordinate (feedbackleft) at (construct.south |- feedbackbottom);
\draw[feedback] (evaluate.south) -- (feedbackbottom) --
  node[midway, above] (feedbacklabel) {repair feedback} (feedbackleft) -- (construct.south);

\node[font=\bfseries\small, text=BrickRed!78!black] (workbenchtitle)
  at (7.00,-2.55) {Optional executable-model workbench};

\begin{scope}[on background layer]
  \node[draw=BrickRed!35, fill=BrickRed!2, rounded corners=4pt, inner sep=12pt,
    fit=(workbenchtitle)(construct)(wm)(evaluate)(feedbackbottom)(feedbackleft)(feedbacklabel)] (workbench) {};
\end{scope}
\end{tikzpicture}%
}
\caption{Tycho's shared task-time architecture.  The actor reasons over the interaction record and alone
commits environment actions.  The optional workbench constructs or repairs an executable model, verifies it,
and returns plans or diagnostics.}
\label{fig:tycho-architecture}
\end{figure}

\section{Related Work}
\label{sec:related}

\paragraph{ARC, skill acquisition, and induction versus transduction.}
ARC was designed to measure broad generalization and skill-acquisition efficiency rather than
task-specific training performance \citep{chollet2019measure}.  In static ARC, \citet{li2025inductiontransduction}
distinguish induction---constructing an intermediate rule that can be applied to the test input---from
transduction, which predicts the test output directly.  They find the approaches complementary: induction
is stronger when exact computation and composition matter, whereas transduction can better capture less
precisely specified perceptual concepts.  ARC-AGI-3 carries a related tension into sequential interaction.
An executable world model makes the inferred rule persistent and testable while direct actor reasoning is
\emph{transduction-like}, because it maps the current observation and
interaction record to an action without first externalizing a reusable transition function.  This
complementarity motivates Tycho's central design choice: world modeling is available throughout the task,
but is not mandatory when direct reasoning is sufficient.

ARC-AGI-3 extends skill-acquisition to interactive agents
\citep{arcprize2026arcagi3}: the agent must infer reusable structure while every
exploratory action is scored.  It operationalizes the proposal to evaluate adaptive world-model induction
through novel games \citep{ying2025adaptive}.  A cross-generation survey identifies test-time
adaptation and refinement loops as recurring strengths of approaches solving ARC, while compositional generalization, interactive
learning, and computational efficiency remain central bottlenecks \citep{vahdati2026arcprogress}.  Tycho
therefore treats action efficiency as evidence of useful abstraction, not merely of eventual task success.

\paragraph{ARC-AGI-3 agent architectures.}
Recent systems emphasize different parts of the interactive inference problem.  DreamTeam treats the
editable workspace---artifacts, evidence, feedback, and agent roles---as an object of inference-time
optimization, with specialist roles for executable modeling, probing, and planning
\citep{sarafian2026workspace}.  AERA separates exploration, verification, and planning
\citep{han2026explore}.  Its released evaluator maps \code{WIN}, \code{GAME\_OVER}, and a
\code{None} step return to the same termination path, which the agent records as solved
\citep{han2026aeracode}.  We therefore do not treat its completion counts as directly comparable to
official outcome-based counts.  The Duck
Harness demonstrates a deliberately lightweight alternative in which a local model writes and executes
Python in a short-context REPL \citep{tufalabs2026duck}. Graph-based exploration instead makes systematic
state-space coverage the organizing principle \citep{rudakov2026graph}.  Together these systems expose the
need to distinguish base-model capability, evidence and memory design, exploration, model construction,
metareasoning, and evaluation protocol.

\paragraph{Contemporaneous coding-agent systems for ARC-AGI-3.}
Several contemporaneous systems independently combine persistent evidence, executable hypotheses, replay,
and planning.
Rodionov's coding-agent system uses Python world models, a scripted controller, predefined interfaces,
replay verification, plan execution, and refactoring as a practical MDL proxy
\citep{rodionov2026executable}.  Its component study compares textual reasoning, flexible executable
modeling, scheduled simplification, and fixed-interface verification \citep{rodionov2026components}.
Verification ranks first in all four matched settings but uses the most resources, while requiring a
persistent executable deliverable is not uniformly beneficial.  OPINE-World couples acting and
model-synthesis agents through counterexample-guided repair, exact replay admission, forward planning, and
ontology-error-prioritized exploration \citep{courtis2026opine}.  The Schema harness combines an
append-only transition record, persistent notes, editable Python hypotheses, full-history backtesting,
certified-model search, discriminating probes, and guarded action queues invalidated by prediction
mismatches \citep{schema2026}.
\mbox{PRO-LONG isolates} a simpler memory intervention: every observation, action, and outcome is appended to one
structured log, which a coding agent searches with tools such as \code{grep} and Python
\citep{fox2026prolong}.  It neither requires an executable model nor supplies a dedicated model interface,
although its agents sometimes synthesize transition programs and breadth-first search on their own.

The main architectural differences concern representation and model allocation.  OPINE-World commits to an
object-centric factorization.  It synthesizes a typed extractor, uses Bayesian effect statistics to
prioritize unresolved object types, and admits only exact-replay models into a fixed actor--synthesizer
loop.  Its formal analysis assumes an observable-Markov representation.  The Schema harness likewise
places a persistent executable hypothesis at the center of control.  Backtesting certifies the hypothesis
for search, and prediction mismatches invalidate queued actions.  Rodionov instead varies executable-model
components within a prescribed controller.  Tycho imposes neither a fixed object decomposition nor a fixed
internal state representation.  It also makes the executable model optional.  Across a shared evidence and action
interface, the experiment varies who constructs or repairs a model and when.  This makes model allocation,
rather than model synthesis alone, the object of study.  PRO-LONG is closest to Tycho's persistent-evidence
layer: both let a coding agent query exact earlier interactions programmatically.  PRO-LONG deliberately
stops at this general memory interface.  Tycho additionally represents transient and terminal evidence,
gives executable models explicit initialization, rendering, outcome, verification, and planning interfaces,
and tests when that machinery should be invoked.

The evaluation protocols also differ.  OPINE-World reports $78.4$ RHAE with Opus~4.8.  Rodionov's fixed
verification variant reports $98.97$ with GPT-5.6 Sol at \texttt{xhigh}.  The Schema harness reports
$98.98$ by running Opus~4.8 first, rerunning games below a score of 80 with Fable~5, and retaining the
better trajectory for each game \citep{schema2026}.  On every rerun game, taking the better trajectory can
only preserve or raise the first-pass score.  A fallback model that is stronger on average does not remove
this best-of-two advantage because it need not dominate the first model on every game or run.  The GPT
protocol applies the same rule to \texttt{xhigh} and \texttt{max}.  The released traces allow the 25 retained
trajectories to be rescored, but the aggregate is a conditional multi-model selection result rather than a
score from one fixed model pass.  PRO-LONG reports $94.6$ from one Fable~5 pass with a 2,000-action limit,
then $97.4$ after selectively rerunning some games and retaining the better result.  The authors call this
a lower bound on best@2 because several games have only one run.  With one fixed configuration and one
trajectory per game, Tycho reaches $100.00$ with GPT-5.6 Sol/\texttt{max} and $100.00$ with
Opus~5/\texttt{xhigh}.  The headline numbers alone therefore mix architecture, model
choice, rerunning, and inference allocation.  The protocol must be stated alongside the score.

Released usage artifacts add a cost dimension: our API-equivalent normalization estimates Tycho at
\$5.78k versus \$12.4--15.2k for OPINE-World in the Opus comparison, and at \$4.47k versus
\$13.6--15.5k for Rodionov's near-saturation GPT verification runs under the latter's reported
preliminary API-key cache behavior.  The Opus~5 run costs an estimated \$2.99k at the frozen Opus-rate
schedule.  PRO-LONG reports a substantially lower \$1.75k list-price equivalent
for its selective Fable protocol.  Its pricing formula is consistent with public Fable rates, but the
released bundle strips the per-call token telemetry and omits the additional runs, so the total cannot yet
be independently reconstructed (\Cref{sec:cross-system-cost}).

Tycho's four matched Opus scorecards were completed by July~13, before we learned of the Schema harness and
Rodionov's component study.  Tycho's distinguishing contributions are a rendered Moore-machine
formalization of interactive games, a matched comparison of four modeling policies, and a trace-grounded
qualitative analysis of agent adaptation.  The matched comparison shows that automatic repair makes
models much more exact yet underperforms actor-requested delegation.

\paragraph{Programmatic world models and metareasoning.}
WorldCoder studies LLM agents that build programmatic world models from interaction
\citep{tang2024worldcoder}.  Code World Models use generated Python dynamics and execution feedback for
offline model-based control \citep{dainese2024codeworldmodels}.  Theory-based reinforcement learning
starts from structured causal theories.  EMPA performs program-based model inference, exploration, and
planning \citep{tsividis2021human}.  TheoryCoder combines synthesized Python dynamics with human-specified
high-level operators \citep{ahmed2025theorycoder}.  TheoryCoder-2 also learns reusable planning abstractions
\citep{ahmed2026theorycoder2}.  PoE-World composes programmatic experts and extends sparse-data
program synthesis to stochastic, non-gridworld domains \citep{piriyakulkij2025poeworld}.  These systems
show that source-code hypotheses can support sample-efficient control.  Tycho asks the complementary
policy question: when should a frontier coding agent construct, repair, query, plan through, or bypass one?

That allocation is a form of metareasoning: computation is useful only through its effect on external
decisions \citep{russell1991metareasoning}.  Across visual and agentic tasks, agents may rarely invoke
available simulators, misuse their rollouts, or degrade when simulation is forced
\citep{qian2026foresight}.  Similarly, planning before every action can waste test-time compute and reduce
long-horizon performance, motivating policies that learn when to plan \citep{paglieri2025when}.  Tool-using
agents such as ReAct, Reflexion, and Voyager show the broader benefits of external tools, memory, and
self-revision \citep{yao2023react,shinn2023reflexion,wang2023voyager}.  Tycho makes the allocation decision
explicit and separately measures pre-action model execution, accepted transition match, surfaced
recommendations, exact following, and game-playing performance.

\paragraph{Active identification and decision-relevant models.}
World models are central to model-based agents and reinforcement learning
\citep{ha2018world,hafner2020dream,schrittwieser2020mastering}.  Tycho uses discrete programs because they
make hypotheses inspectable, testable against recorded history, and directly usable by ordinary planners.
Prediction accuracy and task performance are distinct objectives: globally accurate dynamics can be irrelevant to the current decision,
while a task-useful model may abstract away most observable detail
\citep{lambert2020objective,grimm2020value}.  WorldTest instead separates reward-free model acquisition
from later prediction and planning tests \citep{warrier2025worldtest}.  Tycho measures transition prediction
and game play within the same scarce-action trajectory.

POMDPs and belief-space planning formalize action under hidden state
\citep{kaelbling1998planning,bonet2000planning}.
Bayes-adaptive and dual-control perspectives couple action selection to uncertainty about the dynamics:
an action can improve both the physical state and the model used for later decisions
\citep{klenske2016dual}.  Value-of-information and active-data-selection results likewise explain why an
action can be useful because it discriminates among hypotheses rather than directly approaching a goal
\citep{howard1966information,mackay1992information}.  Tycho does not assign priors or likelihoods to the
free-form programs proposed for a one-off game.  Full Bayesian planning would therefore require an
additional probability model over programs and planning through that distribution.  Tycho instead uses
selected deterministic hypotheses consistent with the observed record, without maintaining an enumerated
version space.  This stops short of belief-space optimization but preserves the distinction between task
progress and information value.

Learning deterministic machines from queries and counterexamples provides a complementary identification
perspective \citep{angluin1987learning,rivest1993inference}.  ARC-AGI-3 is a rendered,
cost-sensitive variant: reset and membership queries are not freely available, observations are pixel
grids, and the terminal-outcome map is hidden.  Work on Moore and Mealy machines from traces and apartness
suggests tools for finite-state identification \citep{giantamidis2016learning,vaandrager2022apartness}.  Agentic
automata learning nevertheless finds frontier agents brittle in query planning, evidence integration, and
hypothesis construction as machines grow \citep{menaged2026automata}.  Tycho studies the analogous problem
with unknown goals and irreversible scored actions, where abstraction, simulation of alternative actions,
and model testing and revision must all fit within a scarce interaction budget.

\section{Formal Setting: Rendered Deterministic Moore Machines}
\label{sec:formal}

\subsection{Level mechanics and scored interaction}

\paragraph{Why rendered Moore machines?}
In ARC-AGI-3, an agent faces a sequence of decision frames. 
It observes a grid and the available
actions, chooses one, and receives a new observation or a terminal outcome.  
A model that supports action selection and planning must explain more than the visible frame.  
It must retain hidden facts from earlier interaction, predict how actions
change those facts, render the resulting observation, and recognize completion or failure.  
A Moore machine makes this separation explicit: actions update an underlying state, while each state produces the observation
and outcome available to the agent. 
We call the machine \emph{rendered} because its principal observation is a grid.  
We augment it with transition emissions to preserve informative animation frames between
decision frames.  \Cref{fig:rendered-moore-machine} illustrates the resulting cycle.

Let $\Levels=\{1,\ldots,L\}$ denote the ordered levels of a game, let $\Grid=\{0,\ldots,15\}^{64\times64}$ denote the space of rendered grids, and let
    \[
 \Outcome=\{\mathrm{ongoing},\mathrm{level\_complete},\mathrm{game\_over}\}
\]
   denote the possible outcomes. At the start of level $\ell$, the environment is in an underlying state ($s_0^\ell\in\States$). This state encodes the level’s layout, object positions, counters, and every other variable—whether visible in the
  rendered grid or hidden—needed to determine subsequent transitions and outcomes.

Formally, the object below is a deterministic transition system with Moore-style state outputs and
transition emissions.  We use \emph{rendered Moore machine} as shorthand.

\begin{definition}[Level mechanics]
The mechanics shared by the levels of one ARC-AGI-3 game form a parameterized rendered deterministic
Moore machine with transition emissions,
\[
M_\theta^\ell=
(\States,\Actions,s_0^\ell,\trans_\theta,q_\theta,\emit_\theta).
\]
Let $\States$ denote the environment’s underlying state space, containing every visible or hidden
variable needed to determine future behavior, and let $\Actions$ denote the set of game actions.  The output map
\[
q_\theta(s)=
\bigl(\underbrace{\render_\theta(s)}_{\text{rendered grid}},
      \underbrace{\avail_\theta(s)}_{\text{available actions}},
      \underbrace{\outcome_\theta(s)}_{\text{outcome}}\bigr)
\in \Grid\times 2^{\Actions}\times\Outcome
\]
gives the settled observation at a state.  If $\outcome_\theta(s)=\mathrm{ongoing}$, each available action
$a\in\avail_\theta(s)$ has the unique successor
$s'=\trans_\theta(s,a)$.  The emission
$\emit_\theta(s,a,s')\in\Grid^*$ is the finite sequence of transient frames shown between the two settled
outputs.  These frames provide evidence about the transition, but the agent cannot act from them.
\end{definition}

\begin{figure}[t]
\centering
\resizebox{0.98\linewidth}{!}{%
\begin{tikzpicture}[
  font=\small,
  state/.style={draw=RoyalPurple!72, fill=RoyalPurple!4, rounded corners=2pt, thick,
    align=center, inner sep=5pt, text width=3.25cm, minimum height=3.85cm},
  emission/.style={draw=ForestGreen!65!black, fill=ForestGreen!4, rounded corners=2pt, thick,
    align=center, font=\scriptsize, inner sep=2.5pt, text width=2.40cm, minimum height=0.50cm},
  note/.style={draw=BrickRed!55, fill=BrickRed!3, rounded corners=2pt, thick,
    align=center, inner sep=5pt, text width=14.45cm, minimum height=0.82cm},
  arr/.style={-{Latex[length=2.2mm]}, thick, draw=black!70}
]
  \node[font=\small, align=center] at (6.675,2.75)
    {\textbf{Toy scrolling level:} The blue cell is a player needing to discover and move to the yellow goal cell.};

  \node[state] (s0) at (0,0) {};
  \node[state] (s1) at (6.675,0) {};
  \node[state] (s2) at (13.35,0) {};

  \node[font=\small, align=center] at ([yshift=1.50cm]s0.center)
    {\textbf{Initial state} $s_0^\ell$};
  \node[font=\scriptsize, align=center] at ([yshift=1.06cm]s0.center)
    {latent world offset $x=0$};
  \node[font=\small, align=center] at ([yshift=1.50cm]s1.center)
    {\textbf{State} $s_1^\ell$};
  \node[font=\scriptsize, align=center] at ([yshift=1.06cm]s1.center)
    {latent world offset $x=1$};
  \node[font=\small, align=center] at ([yshift=1.50cm]s2.center)
    {\textbf{State} $s_2^\ell$};
  \node[font=\scriptsize, align=center] at ([yshift=1.06cm]s2.center)
    {latent world offset $x=2$};

  \begin{scope}[shift={(-0.76,-0.74)}]
    \fill[black!24] (0,0) rectangle (1.52,1.52);
    \fill[white] (0,0.38) rectangle (1.52,1.14);
    \fill[NavyBlue!72] (0.38,0.38) rectangle (0.76,0.76);
    \draw[step=0.38cm, draw=black!32, thick] (0,0) grid (1.52,1.52);
  \end{scope}
  \draw[NavyBlue!60, thick, rounded corners=1pt] (-0.81,-0.79) rectangle (0.81,0.83);

  \begin{scope}[shift={(5.915,-0.74)}]
    \fill[black!24] (0,0) rectangle (1.52,1.52);
    \fill[white] (0,0.38) rectangle (1.52,1.14);
    \fill[NavyBlue!72] (0.38,0.38) rectangle (0.76,0.76);
    \draw[step=0.38cm, draw=black!32, thick] (0,0) grid (1.52,1.52);
  \end{scope}
  \draw[NavyBlue!60, very thick, rounded corners=1pt] (5.865,-0.79) rectangle (7.485,0.83);

  \begin{scope}[shift={(12.59,-0.74)}]
    \fill[black!24] (0,0) rectangle (1.52,1.52);
    \fill[white] (0,0.38) rectangle (1.52,1.14);
    \fill[NavyBlue!72] (0.38,0.38) rectangle (0.76,0.76);
    \fill[Goldenrod!75] (1.14,0.38) rectangle (1.52,0.76);
    \draw[step=0.38cm, draw=black!32, thick] (0,0) grid (1.52,1.52);
  \end{scope}
  \draw[NavyBlue!60, thick, rounded corners=1pt] (12.54,-0.79) rectangle (14.16,0.83);

  \node[font=\scriptsize, align=center] at ([yshift=-1.40cm]s0.center)
    {rendered grid\\$\render_\theta(s_0^\ell)=g$};
  \node[font=\scriptsize, align=center] at ([yshift=-1.40cm]s1.center)
    {rendered grid\\$\render_\theta(s_1^\ell)=g$};
  \node[font=\scriptsize, align=center] at ([yshift=-1.40cm]s2.center)
    {rendered grid\\$\render_\theta(s_2^\ell)=g_{\mathrm{goal}}$};

  \draw[arr] (s0.east) -- node[above, font=\scriptsize] {RIGHT}
    node[below, font=\scriptsize] {$\trans_\theta$} (s1.west);
  \draw[arr] (s1.east) -- node[above, font=\scriptsize] {RIGHT}
    node[below, font=\scriptsize] {$\trans_\theta$} (s2.west);

  \node[emission] (em01) at (3.3375,-0.92)
    {\textbf{Camera pan}\enspace$\emit_\theta$};
  \node[emission] (em12) at (10.0125,-0.92)
    {\textbf{Camera pan}\enspace$\emit_\theta$};

  \node[note] (nonmarkov) at (6.675,-2.72)
    {\textbf{Non-Markov witness.}\quad
     $\render_\theta(s_0^\ell)=\render_\theta(s_1^\ell)=g$, yet RIGHT yields $g$ from $s_0^\ell$
     and $g_{\mathrm{goal}}$ from $s_1^\ell$.  The current grid is therefore not a sufficient state.};
\end{tikzpicture}%
}
\caption{A non-Markov scrolling viewport.  Identical grids at two world offsets have different successors
under RIGHT; green boxes denote transient camera frames.}
\label{fig:rendered-moore-machine}
\end{figure}

\paragraph{Determinism, hidden state, and level variation.}
ARC-AGI-3 is turn-based: the environment does not change asynchronously while the agent deliberates, and
the benchmark's environment qualification replays recorded winning and losing traces to verify faithful
re-execution \citep{arcprize2026arcagi3}.  We therefore represent the evaluated level instances with a
transition function rather than a probability distribution.  For a fixed environment version and initial
configuration, a complete underlying state and legal action determine one successor.

Determinism does not imply that the visible grid is Markov or that the agent knows the underlying state.  Every
fixed level property needed for later dynamics is part of $s_0^\ell$.  A counter, camera offset, or
off-screen object that changes during play is part of the evolving state $s_t$.  It may remain uncertain to
the agent even though the underlying transition is deterministic.  A stochastic environment could instead
use a transition kernel without changing the separation between underlying state, rendering, outcomes, and
protocol events.

The parameter $\theta$ denotes mechanics shared across the game, while the initial state $s_0^\ell$ varies
by level.  Tycho constructs its estimate
$\hat s_0^\ell=\hat\init(g_0^\ell,\ell-1)$ from the first rendered grid and the zero-based position of the
level in the game, then advances that state estimate through the recorded actions.  It does not reconstruct
state independently from every new grid.  As \Cref{fig:rendered-moore-machine} shows, doing so would erase
hidden changes whenever two points in the interaction have the same rendered frame.  The transition,
rendering, and outcome code is shared across levels, so evidence gathered earlier can reduce what must be
inferred again through scored interaction.

\paragraph{Why an outer game protocol?}
A level machine describes the mechanics within one attempt, but ARC-AGI-3 also governs resets, repeated
attempts, level advancement, and action accounting.  We represent these benchmark-wide rules as a small
outer protocol so that the same level model can be composed with the progression and scoring semantics.

\begin{definition}[Scored game protocol]
The ARC game protocol composes the level machines in order and adds attempts, recovery, advancement, and
action accounting.  A protocol configuration records the active level $\ell$, its underlying state $s$, whether the current
attempt is playable or fatal, and the scored-action counts $\mathbf a=(a_1,\ldots,a_{|\Levels|})$.
An ordinary action advances the current level machine $M_\theta^\ell$ and increments $a_\ell$.  A
$\mathrm{level\_complete}$ output closes the current level after preserving its terminal evidence, then
advances without action cost to $s_0^{\ell+1}$ or ends the game after the last level.  A
$\mathrm{game\_over}$ output closes only the current attempt.  The protocol control $\mathrm{RESET}$ is
available independently of $\avail_\theta(s)$, the set of ordinary game actions returned with the current
decision frame.  It costs one in-play action and starts a fresh attempt at $s_0^\ell$.  The
initialization RESET that creates a play is unscored.
\end{definition}

The outer protocol determines how level-machine outputs affect progression and score.  A terminal frame
documents how an attempt ended, whereas a reset or level advancement creates the next state from which an
action can be chosen.  Representing these events separately preserves this distinction and ensures that
attempts, completions, and scored actions are counted correctly.

If level $\ell$ has human baseline $h_\ell$, completion indicator $c_\ell$, and uses $a_\ell$ scored
actions, define its efficiency contribution as follows.  Under the protocol, completion follows an
ordinary action, so $c_\ell=1$ implies $a_\ell\geq 1$.
\[
e_\ell=
\begin{cases}
\min\!\left\{115,100(h_\ell/a_\ell)^2\right\},&c_\ell=1\text{ and }a_\ell>0,\\
0,&c_\ell=0.
\end{cases}
\]
Indexing levels from one, write $w_\ell=\ell$ for the official later-level weight.  Game RHAE is
\[
\min\!\left\{
 \frac{\sum_\ell w_\ell e_\ell}{\sum_\ell w_\ell},
 100\frac{\sum_\ell w_\ell c_\ell}{\sum_\ell w_\ell}
\right\}.
\]
Model identification is thus instrumental: it is valuable only when it improves weighted completion or
reduces scored interaction.

These protocol distinctions must be preserved in the evidence shown to the agent and verifier.  We
therefore record not only grids and actions, but also what role each observation played in the interaction.

\begin{definition}[Typed interaction history]
At each decision point, the history records the grid, available ordinary actions, and current
outcome.  After an action, it records the chosen action and its scored cost, any transient frames, the
successor decision output or terminal frame, and any reset or level boundary created by the protocol.
Each grid is tagged by its role: decision, transient, completion terminal, fatal terminal, reset
initialization, or next-level initialization.  A history is faithful when every recorded action is attached
to the decision frame from which it was chosen, never to a transient or terminal frame.
\end{definition}

Let $\Obs$ denote the space of these one-step records.  Their purpose is to preserve what the agent knew
when it chose an action and what the environment returned as a consequence.  This matters because identical
pixel grids can have different meanings when one is a decision frame and another is an animation or
terminal frame.  The tags prevent the model builder and verifier from fabricating transitions between
observations that were never consecutive decision frames.

\subsection{Identifiability and partial prediction}
\label{sec:partial-prediction}

Interaction can reveal relevant mechanics, but cannot distinguish internal implementations
that produce the same consequences for every future action sequence.  The operational target is therefore
to retain every distinction in the interaction history that could change what happens under a future action.
Two histories are \emph{interaction-distinguishable} if some common legal continuation yields different future
typed evidence, cost, outcome, or level progression.  A history encoding is \emph{interaction-sufficient}
when its value and a future action sequence determine these quantities.

\begin{proposition}[History-state lower bound]
Every interaction-sufficient encoding assigns different states to every pair of interaction-distinguishable
histories.  Consequently, the current rendered grid and action mask are insufficient whenever two such
histories share that visible output.
\end{proposition}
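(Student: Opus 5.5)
The plan is a direct contrapositive argument from the two definitions just given. An encoding is a map $\phi$ from histories to states, and interaction-sufficiency means there is a function $F$ such that for every history $h$ and every future action sequence $u$ legal after $h$, $F(\phi(h),u)$ gives the resulting typed evidence, scored cost, outcome, and level progression. To show that $\phi$ separates interaction-distinguishable histories, take such a pair $h,h'$. By definition there is a common legal continuation $u$ whose typed evidence, cost, outcome, or progression differs between $h$ and $h'$. If $\phi(h)=\phi(h')$, then sufficiency gives
\[
F(\phi(h),u)=F(\phi(h'),u),
\]
so these quantities would coincide, which is a contradiction. Hence $\phi(h)\neq\phi(h')$.

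For the consequence, I will treat "the current rendered grid and action mask" as a particular encoding, namely
\[
\phi_{\mathrm{vis}}(h)=\bigl(\render_\theta(s_h),\avail_\theta(s_h)\bigr),
\]
where $s_h$ is the underlying state reached by $h$ under the scored game protocol. This state is well defined because $M_\theta^\ell$ is deterministic and the protocol composes level machines deterministically. If two interaction-distinguishable histories share this visible output, then $\phi_{\mathrm{vis}}$ assigns them the same value. By the first part, $\phi_{\mathrm{vis}}$ is therefore not interaction-sufficient. I would also note, following \Cref{fig:rendered-moore-machine}, that the hypothesis is not vacuous: the scrolling-viewport witness gives two histories, one reaching $s_0^\ell$ and one reaching $s_1^\ell$, with equal grid and mask. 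Under the continuation RIGHT they yield $g$ and $g_{\mathrm{goal}}$ respectively, so they are interaction-distinguishable.

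The main obstacle is the legality qualifier rather than the logic. The continuation $u$ must be legal from both histories, so that $F(\phi(h),u)$ and $F(\phi(h'),u)$ are both meant to predict real interactions. I will handle this by using the definition exactly as stated, which requires a \emph{common} legal continuation. I would also remark that legality of the first step is determined by the shared action mask, with RESET always available. A secondary point is to make sure the sufficiency definition quantifies over all legal futures, including those crossing level boundaries and resets, so that progression and cost differences are covered. Once that is fixed, the argument is immediate.
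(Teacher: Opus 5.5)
Your proposal is correct and follows essentially the same argument as the paper. Both assume two distinguishable histories receive the same encoded state and use sufficiency on their common legal continuation to contradict distinguishability. Both then apply this to the encoding given by the current grid and action mask. Your scrolling-viewport witness makes concrete what the paper only gestures at with unrendered switches, timers, inventory bits, or off-screen structure.
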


\begin{proof}
Suppose distinguishable histories $h$ and $h'$ receive the same encoded state.  Interaction sufficiency
requires every common legal continuation to induce the same typed evidence, cost, outcome, and progression
from that state.  This contradicts interaction distinguishability.  The second claim follows by choosing
$h,h'$ with equal current render and mask, as can occur with an unrendered switch, timer, inventory bit,
or off-screen structure.
\end{proof}

A useful model need not predict every visual detail.  To distinguish an abstraction from an
incorrect full-frame prediction, we allow a modeled renderer to abstain on pixels it has not determined.

\begin{definition}[Partial output prediction]
A learned renderer may return
\[
\hat\render(\hat s)\in(\{0,\ldots,15\}\cup\{\bot\})^{64\times64},
\]
where $\bot$ abstains on a pixel not determined by the model's encoded knowledge.  It may also return a
bounded set $V(\hat s)$ of observation variants for display-level ambiguity such as quantization.  The
output concretization $\Gamma_{\!O}(\hat\render,V)\subseteq\Grid$ contains the full grids agreeing with
the canonical render or a variant on every predicted cell.  For an observed grid $g$, the prediction is
\emph{accepted} when it claims at least one cell and $g\in\Gamma_{\!O}(\hat\render,V)$.  Its
\emph{coverage} is the fraction of cells claimed by the canonical render,
\[
\operatorname{cov}(\hat\render)=
\frac{1}{4096}\left|\{(i,j):\hat\render(\hat s)_{ij}\neq\bot\}\right|.
\]
\end{definition}

These are Tycho's two mechanisms for uncertain output: local abstention and a set of alternatives.  This
is intentionally simpler than maintaining a distribution over colors at every cell.  Alternatives are
used only to check predictions against evidence.  Planning uses the canonical rendering.

\begin{figure}[H]
\centering
\resizebox{0.99\linewidth}{!}{%
\begin{tikzpicture}[
  font=\small,
  arr/.style={-{Latex[length=2mm]}, thick, draw=black!65},
  checkarr/.style={-{Latex[length=2mm]}, thick, draw=ForestGreen!70!black},
  note/.style={font=\scriptsize, align=center}
]
  \node[anchor=west, font=\small\bfseries] at (0.10,-0.28)
    {(a) Abstain on newly exposed cells};

  \node[note] at (0.94,-0.88) {observed $g_t$};
  \node[note] at (3.39,-0.88) {predicted $\hat\render(\hat s_{t+1})$};
  \node[note] at (5.86,-0.88) {observed $g_{t+1}$};

  \begin{scope}[shift={(0.20,-3.00)}, x=0.245cm, y=0.245cm]
    \fill[white] (0,0) rectangle (6,6);
    \fill[black!24] (0,0) rectangle (6,1);
    \fill[black!24] (0,5) rectangle (6,6);
    \fill[black!45] (1,4) rectangle (2,5);
    \fill[Goldenrod!78] (4,2) rectangle (5,3);
    \fill[NavyBlue!76] (3,3) rectangle (4,4);
    \draw[step=1, draw=black!30, thick] (0,0) grid (6,6);
    \draw[NavyBlue!62, very thick, rounded corners=1pt] (-0.15,-0.15) rectangle (6.15,6.15);
  \end{scope}

  \begin{scope}[shift={(2.65,-3.00)}, x=0.245cm, y=0.245cm]
    \fill[white] (0,0) rectangle (6,6);
    \fill[black!24] (0,0) rectangle (5,1);
    \fill[black!24] (0,5) rectangle (5,6);
    \fill[black!45] (0,4) rectangle (1,5);
    \fill[Goldenrod!78] (3,2) rectangle (4,3);
    \fill[NavyBlue!76] (3,3) rectangle (4,4);
    \fill[black!10] (5,0) rectangle (6,6);
    \foreach \yy in {0,...,5} {
      \node[font=\scriptsize, text=black!65] at (5.5,\yy+0.5) {$\bot$};
    }
    \draw[step=1, draw=black!30, thick] (0,0) grid (6,6);
    \draw[NavyBlue!62, very thick, rounded corners=1pt] (-0.15,-0.15) rectangle (6.15,6.15);
  \end{scope}

  \begin{scope}[shift={(5.12,-3.00)}, x=0.245cm, y=0.245cm]
    \fill[white] (0,0) rectangle (6,6);
    \fill[black!24] (0,0) rectangle (6,1);
    \fill[black!24] (0,5) rectangle (6,6);
    \fill[black!45] (0,4) rectangle (1,5);
    \fill[Goldenrod!78] (3,2) rectangle (4,3);
    \fill[NavyBlue!76] (3,3) rectangle (4,4);
    \fill[Magenta!72] (5,4) rectangle (6,5);
    \draw[step=1, draw=black!30, thick] (0,0) grid (6,6);
    \draw[NavyBlue!62, very thick, rounded corners=1pt] (-0.15,-0.15) rectangle (6.15,6.15);
  \end{scope}

  \draw[arr] (1.81,-2.18) -- node[above, font=\scriptsize] {right} (2.53,-2.18);
  \draw[checkarr] (4.27,-2.18) -- node[above, font=\scriptsize] {verify} (5.00,-2.18);
  \node[note, text width=6.55cm] at (3.38,-4.48)
    {$\bot$ claims nothing about the unseen column;\\known cells remain checkable.};

  \node[anchor=west, font=\small\bfseries] at (7.05,-0.28)
    {(b) Return bounded alternatives};

  \node[inner sep=0pt, anchor=north west] (lsframe) at (7.15,-0.62)
    {\includegraphics[width=2.80cm]{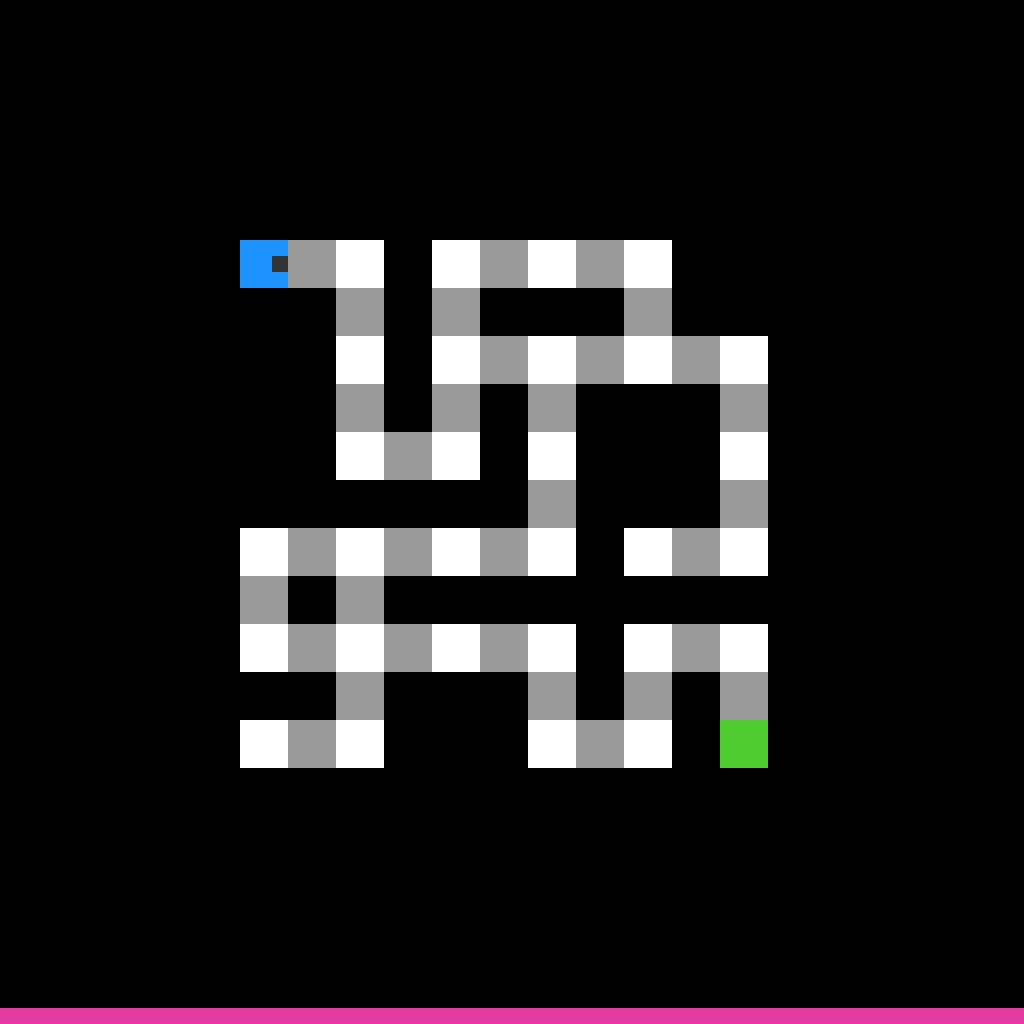}};
  \draw[BrickRed!75, very thick, rounded corners=1pt]
    (7.11,-3.37) rectangle (9.99,-3.46);
  \node[note] at (8.55,-3.67) {\texttt{tu93} start; true $M=50$};

  \node[note, font=\scriptsize\bfseries] at (12.75,-0.72) {Zoomed 64-cell bar};
  \node[inner sep=0pt, anchor=north west] (bar0) at (10.25,-0.91)
    {\includegraphics[trim=0 0 0 992,clip,width=5.00cm]{tu93_hud_t0.png}};
  \draw[BrickRed!75, thick, rounded corners=1pt]
    (10.21,-0.87) rectangle (15.29,-1.07);
  \draw[BrickRed!75, -{Latex[length=1.8mm]}, thick]
    (9.98,-3.41) to[out=10,in=190] (10.19,-0.98);
  \node[note] at (12.75,-1.25) {$n=0:\ p_0=64$ lit cells};

  \node[inner sep=0pt, anchor=north west] (bar1) at (10.25,-1.50)
    {\includegraphics[trim=0 0 0 992,clip,width=5.00cm]{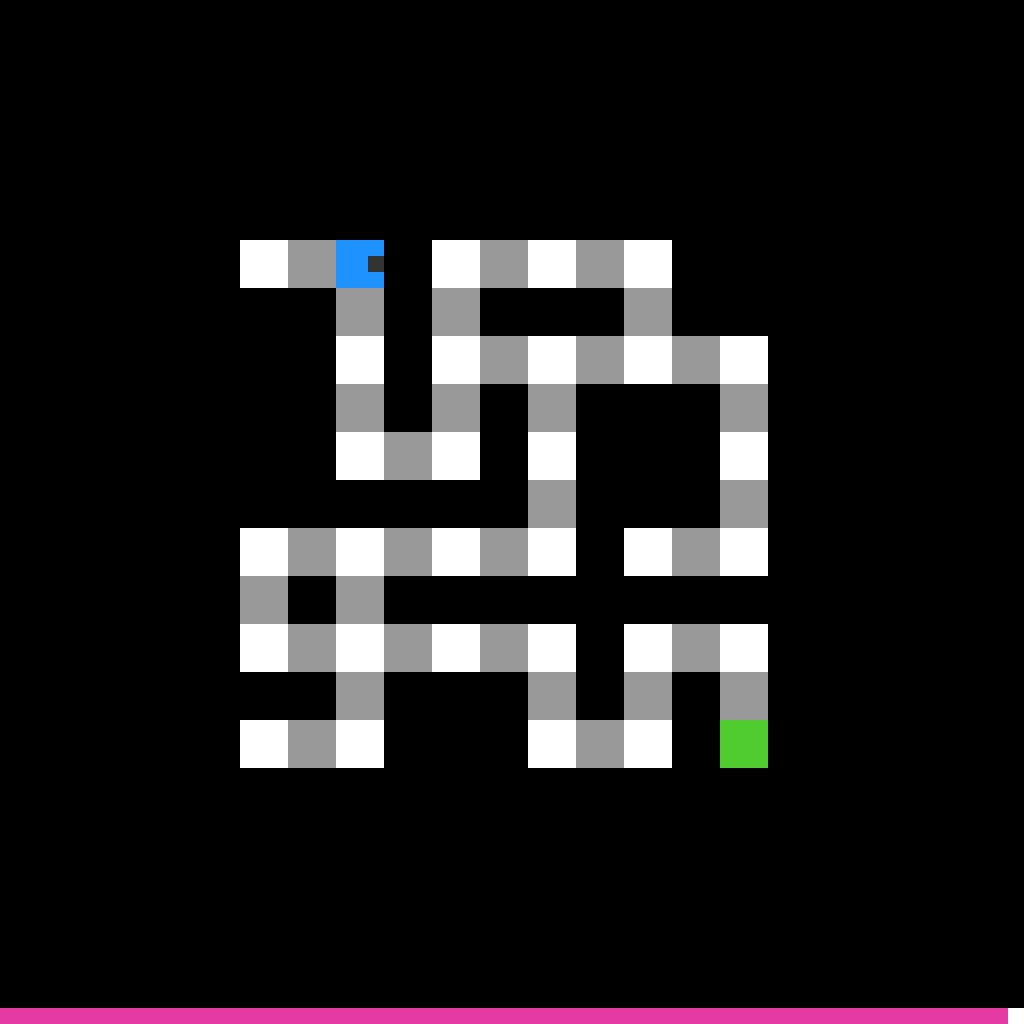}};
  \draw[black!45, thick, rounded corners=1pt]
    (10.21,-1.46) rectangle (15.29,-1.66);
  \node[note] at (12.75,-1.86) {$n=1:\ p_1=63$ lit cells};

  \node[note] at (12.75,-2.19)
    {$p_n=\operatorname{round}\!\left(64(M-n)/M\right)$};
  \node[note] at (12.75,-2.48)
    {$p_1=63\ \Longrightarrow\ M\in\{43,\ldots,127\}$};

  \node[note, font=\scriptsize\bfseries] at (12.75,-2.79)
    {Possible next outputs};

  \node[note, anchor=east] at (11.65,-3.11) {$p_2=61:$};
  \node[note] at (12.03,-3.11) {$\ldots$};
  \begin{scope}[shift={(12.33,-3.22)}, x=0.32cm, y=0.22cm]
    \fill[black] (0,0) rectangle (8,1);
    \fill[Magenta!72] (0,0) rectangle (5,1);
    \draw[step=1, draw=black!38, thick] (0,0) grid (8,1);
  \end{scope}

  \node[note, anchor=east] at (11.65,-3.46) {$p_2=62:$};
  \node[note] at (12.03,-3.46) {$\ldots$};
  \begin{scope}[shift={(12.33,-3.57)}, x=0.32cm, y=0.22cm]
    \fill[black] (0,0) rectangle (8,1);
    \fill[Magenta!72] (0,0) rectangle (6,1);
    \draw[step=1, draw=black!38, thick] (0,0) grid (8,1);
  \end{scope}

  \node[note, anchor=east] at (11.65,-3.81) {$p_2=63:$};
  \node[note] at (12.03,-3.81) {$\ldots$};
  \begin{scope}[shift={(12.33,-3.92)}, x=0.32cm, y=0.22cm]
    \fill[black] (0,0) rectangle (8,1);
    \fill[Magenta!72] (0,0) rectangle (7,1);
    \draw[step=1, draw=black!38, thick] (0,0) grid (8,1);
  \end{scope}

  \node[note] at (12.75,-4.17) {Observed next HUD: $p_2=61$};
  \node[inner sep=0pt, anchor=north west] (bar2) at (10.25,-4.37)
    {\includegraphics[trim=0 0 0 992,clip,width=5.00cm]{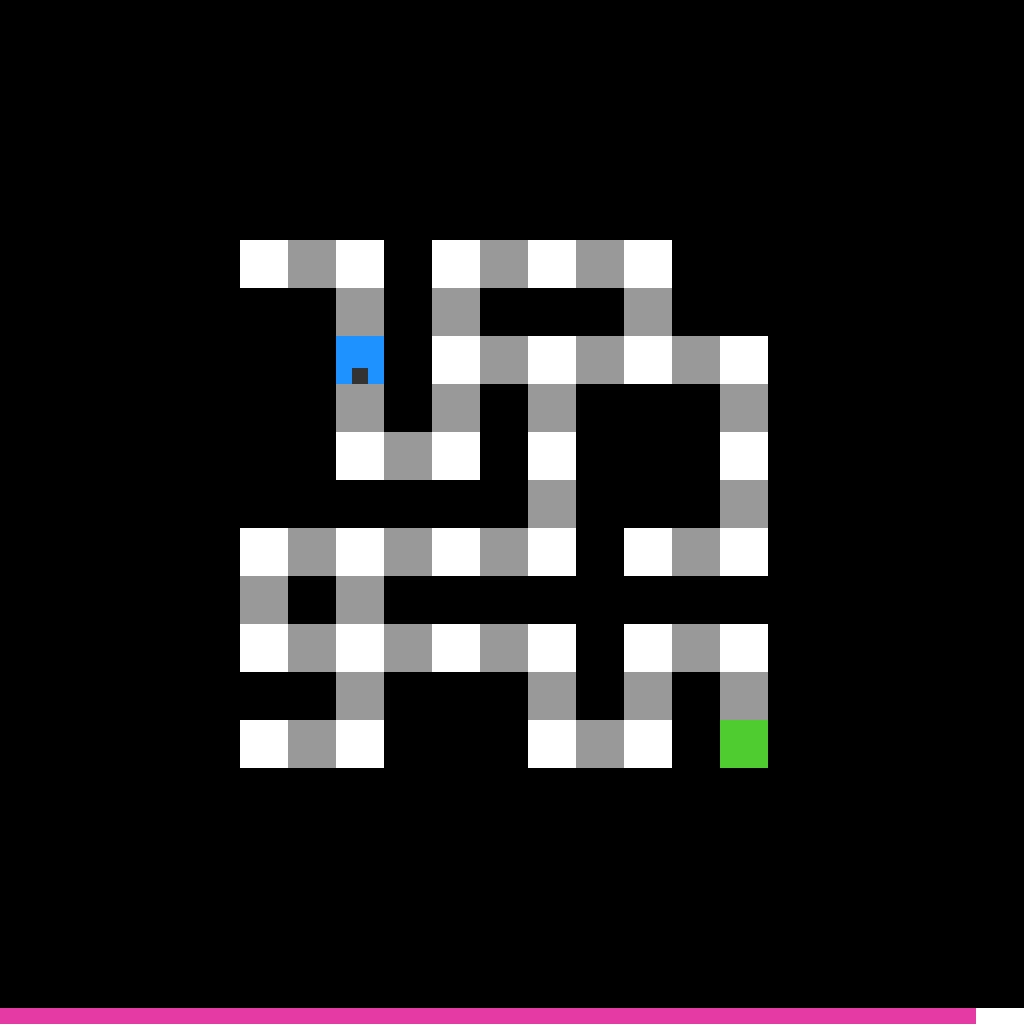}};
  \draw[ForestGreen!70!black, thick, rounded corners=1pt]
    (10.21,-4.33) rectangle (15.29,-4.53);
  \node[note, text=ForestGreen!60!black] at (12.75,-4.81)
    {Observation leaves $M\in\{43,\ldots,51\}$.};
\end{tikzpicture}%
}
\caption{Tycho's two mechanisms for uncertain output.  In (a), scrolling exposes cells the model has not
determined, so it abstains only there and receives lower coverage.  Panel (b) uses an actual
\texttt{tu93} frame.  Its 64-cell HUD quantizes a hidden move budget. After one action, three next bar
lengths remain compatible with the visible evidence.}
\label{fig:partial-output-prediction}
\end{figure}

\Cref{fig:partial-output-prediction} separates localized ignorance from finite display ambiguity.
Abstained cells make no correctness claim and reduce coverage.  Adding a complete observation variant
does not.  Panel (b) instantiates finite ambiguity with the actual \texttt{tu93} renderer.  If $M$ is
the unobserved initial move budget, then after $n$ ordinary actions it displays
$p_n=\operatorname{round}(64(M-n)/M)$ lit cells.  The observations $p_0=64$ and $p_1=63$ leave every
$M\in\{43,\ldots,127\}$ compatible, yielding $p_2\in\{61,62,63\}$.  Observing $p_2=61$ narrows the
range to $\{43,\ldots,51\}$ but still does not identify $M$.  The three complete next grids can be
returned as observation variants.  Variants describe alternative renderings of one model state, not
alternative underlying states.  Because the candidate budgets also imply different termination times, Tycho
keeps them as distinct hypotheses and plans from one selected executable hypothesis.  Neither mechanism
makes the environment stochastic.

\begin{remark}[Available-action masks]
Action legality is part of the machine output because it constrains action selection even when two rendered grids
are identical.  The public games inspected here did not vary the mask within a game, but the interface
also supports dynamic masks.  
\end{remark}

Finite histories often leave multiple hypotheses.  Tycho does not maintain an explicit version
space: the language-model policy can record competing hypotheses in its notes, choose one to encode as the
current executable model, and spend an action to distinguish them when useful.
\Cref{app:identification} formalizes this interpretation and the trade-off between information and scored
action cost.

\section{Tycho: Constructing Programmatic World Models}
\label{sec:programmatic}
\label{sec:tycho}

Tycho realizes the formal setting with four connected components.  The harness records observations and
submitted actions in a structured interaction record.  A free-form executable program represents a
revisable hypothesis about task-relevant state and dynamics.  Verification compares that hypothesis with
recorded experience, while planning exposes its consequences for future decisions.  A metareasoning
policy decides when to construct, repair, use, or bypass the model.  Only the actor can call
\code{take\_action} and thereby change the environment.

\subsection{Evidence and task memory}
\label{sec:interaction-interface}

Each game run has a separate on-disk workspace containing two kinds of information: observations and
events recorded by the harness, and notes and programs maintained by the actor or builder.  Recorded
experience is the empirical reference for verification.  A program may explain or predict that
experience, but its simulated states are not added to the interaction record.

\paragraph{Evidence interface.}
At each turn, the actor receives the current decision frame and its action mask, and may submit one control
through \code{take\_action}.  The harness saves the frame and metadata under
\code{level\_L/turn\_NNN.\{txt,png,json\}}.  Within Python code execution, the current grid is preloaded as \code{grid}, while a library (\code{wmlib}) provides parsed access to prior decision frames and ordinary transitions, archived reset attempts, completed-level terminal events, fatal events, and animation events.  Terminal, fatal, and animation frames remain separate from ordinary transitions because the actor never chooses an action from them.  The no-model policy receives the same evidence interface.  Model-using policies additionally maintain an editable world model and may verify it or search its predicted states.

\begin{figure}[H]
\centering
\includegraphics[width=\linewidth]{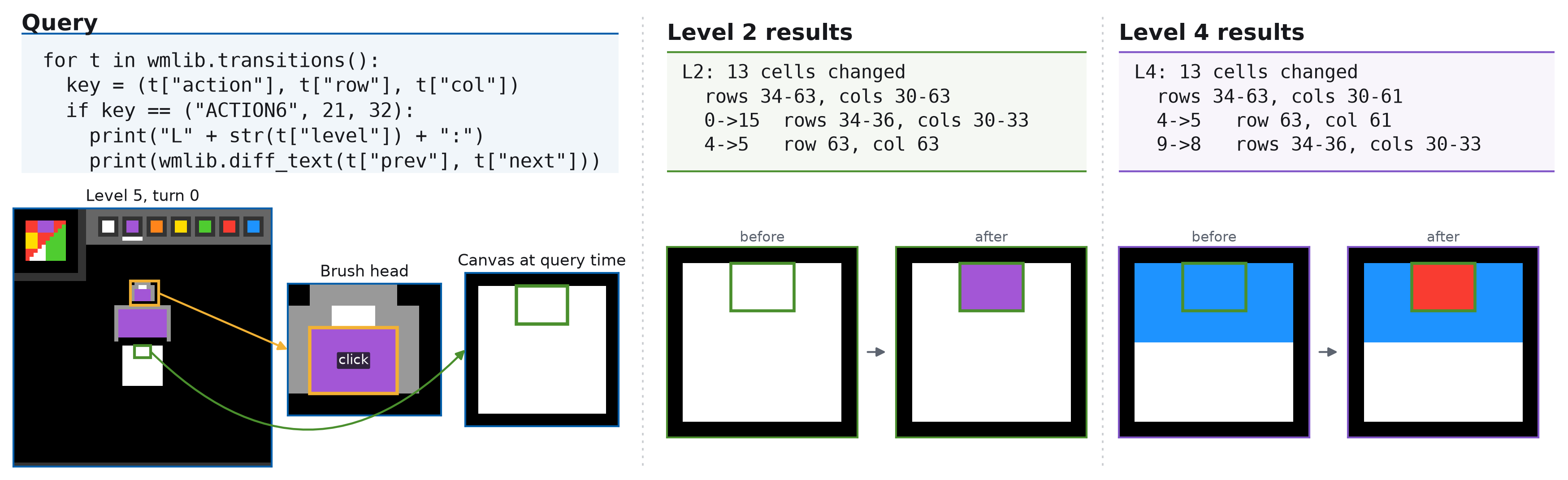}
\caption{Querying accumulated evidence in the GPT-5.6 Sol/Orchestrator run on \code{cd82}.  The displayed
filter simplifies a broader transition audit issued at the start of Level~5.  The focused view at left
identifies the brush head and canvas in the turn-0 frame.  The two matches below the results show the
brush stamping its color into the same $3\times4$ canvas region across Levels~2 and~4; each transition
also changes one HUD-counter cell.}
\label{fig:evidence-query}
\end{figure}

\paragraph{Persistent task memory.}
The workspace persists across levels of a game even though Tycho clears the conversation at each level
boundary.  Before clearing it, a consolidation pass writes level summaries.  Actor beliefs, builder
notes, helper programs, the executable model, and validated plans also remain in the workspace.  Resets
archive abandoned attempts rather than erasing them.  Independent runs use separate workspaces and never
share information across games.  Animation events have an index of selected keyframes.  The actor
loads exact frames only when needed.

\subsection{Executable hypothesis language}

A Tycho world model is a Python program that implements four required functions:
\begin{enumerate}
  \item \code{init\_state}, which maps a level's first observed frame and index to a modeled state.
  \item \code{transition}, which advances that state under an action.
  \item \code{render}, which maps the state back to an observed grid and may abstain on genuinely
  undetermined cells.
  \item \code{outcome}, which classifies the state as ongoing, level complete, or game over.
\end{enumerate}
It may also implement bounded \code{observation\_variants} for display-level quantization ambiguity and
planner-facing hooks for candidate actions, subgoals, a heuristic, a state key, or a custom planner.

The program constructs a separate start state for every level.  Layout, object positions, counters,
selected tools, and other setup values may therefore differ by level, while the transition, rendering,
and outcome functions express mechanics intended to generalize across the game.  This implements the
distinction between the level-specific state $s_0^\ell$ and shared mechanics $\theta$ introduced in
\Cref{sec:formal}.

The modeled state is deliberately game-specific.  It may store the grid directly, an object list, a
finite-state controller, a user-interface mode, a selected color or tool, a counter, geometric
constraints, or a combination of these.  Tycho therefore does not require an object-centric decomposition
in advance.  Although many ARC-AGI-3 games are naturally described in terms of objects, others are better
represented as user interfaces, cellular automata, editing tools, counters, hidden modes, or rule systems.
The agent chooses a state representation after observing the game.

Programs are not the only possible world-model representation.  Neural latent models can represent rich
continuous dynamics \citep{ha2018world,hafner2020dream}, and tree-search systems can use learned dynamics
for planning \citep{schrittwieser2020mastering}.  Programs are particularly suitable here because the
observations are discrete, many environments are algorithmic, individual prediction errors can be
localized exactly, scored actions reward reusable rules, and language-model agents can write and debug
code.  Tycho uses Python programs, but its evidence, verification, planning, and model-allocation
interfaces do not depend on that choice.

The program can carry information absent from the current grid, such as a counter, camera offset, or
selected tool, forward through the action history.  Re-parsing each frame independently would discard
this information.  When some pixels remain genuinely undetermined, the renderer may leave them unclaimed
as explained in Section~\ref{sec:formal}.  Planning operates on the model's explicit internal state,
while verification reports both accuracy on claimed pixels and prediction coverage.

\subsection{Verification and planning}

Verification and planning answer different questions.  Verification tests whether the learned program
replays interactions that have already been observed.  Planning starts from the replayed model state and
searches the program for an action sequence whose modeled endpoint is \code{level\_complete}.  When invoked
through \code{plan.py}, Tycho canonically replays the candidate and writes a validated artifact anchored
to the current model and starting frame, with an expected-frame hash for every step.  This establishes the
route's internal consistency under the learned program, not its correctness on unobserved environment
transitions.  The actor receives the validated route as conditional advice and commits one action at a
time.  On subsequent turns, Tycho surfaces the next action only while the model is unchanged, the observed
frame matches the stored hash, and the action remains available.
The automatic planner probe is separate and advisory.  It is level-local, requires an accepted replay of
the current-level prefix and no falsified current-level outcome evidence, and can surface a candidate first
action and short plan prefix.  It does not write the guarded artifact.  The actor must re-observe after
acting and re-plan from the resulting state.  Earlier-level mismatches remain warnings rather than
blocking search.
\Cref{app:identification} formalizes the stronger condition under which following a validated route
preserves the real outcome and action count.

\paragraph{Operational verification.}
Verification replays the most recent attempt for each level from its first frame and compares predicted
with recorded observations.  A transition enters the grading set if the recorded successor
differs from the recorded predecessor, or if the recorded grid is unchanged but the model predicts a
visible change.  The latter case penalizes false motion on a real no-op.  Transitions for which both
observation and model remain visually unchanged still advance the threaded model state but do not enter
this score.  Every claimed predicted cell is checked against the recorded successor.  Partial renders
therefore report both known-cell accuracy and coverage, while bounded observation variants affect only
compatibility testing.  Outcomes are verified separately because a model may reproduce motion while
misidentifying the goal or a hazard.  Recorded winning and fatal actions test whether the predicted
successor returns \code{level\_complete} or \code{game\_over} and, when available, renders the associated
terminal frame.  Ordinary decision states should remain \code{ongoing}.

\paragraph{Planning from threaded state.}
Planning starts from the state obtained by replaying the current level history through the model rather
than reparsing the latest grid, thereby preserving accumulated hidden variables.  For click-heavy games,
the lightweight planner relies on the model to propose a focused action set rather than expanding every
grid cell.

\Cref{fig:verification-planning-example} makes this data flow concrete in a toy $4\times4$ maze.  For the
recorded transition $(g_0,a_0,g_1)$, panel (a) instantiates the learned counterparts of the maps in
\Cref{sec:formal}: it constructs $\hat s_0$, applies $\hat\trans$ to obtain $\hat s_1$, renders
$\hat g_1=\hat\render(\hat s_1)$, and compares that prediction with $g_1$.  Starting from the threaded
state $\hat s_1$, the planner finds and canonically replays a four-action route to
\code{level\_complete}, recording a hash of the predicted observation after each action so that divergence
can be detected during execution.

\begin{figure}[H]
\centering
\resizebox{0.99\linewidth}{!}{%
\begin{tikzpicture}[
  font=\small,
  arr/.style={-{Latex[length=2mm]}, thick, draw=black!65},
  modelarr/.style={-{Latex[length=2mm]}, thick, draw=BrickRed!70},
  planarr/.style={-{Latex[length=2mm]}, thick, draw=ForestGreen!65!black},
  label/.style={font=\scriptsize, align=center},
  statebox/.style={draw=BrickRed!65, fill=BrickRed!4, rounded corners=2pt,
    font=\scriptsize, align=center, inner sep=4pt}
]
\newcommand{\toymaze}[4]{%
  \begin{scope}[shift={(#1,#2)}, x=0.31cm, y=0.31cm]
    \fill[white] (0,0) rectangle (4,4);
    \foreach \x/\y in {0/3,1/3,3/3,3/2,1/1,3/1,3/0}
      \fill[black!65] (\x,\y) rectangle ++(1,1);
    \fill[Goldenrod!78] (2,3) rectangle ++(1,1);
    \fill[NavyBlue!76] (#3+0.15,#4+0.15) rectangle ++(0.70,0.70);
    \draw[step=1, draw=black!35, thick] (0,0) grid (4,4);
    \draw[NavyBlue!55, thick, rounded corners=1pt] (-0.08,-0.08) rectangle (4.08,4.08);
  \end{scope}
}

\path[use as bounding box] (0,0) rectangle (16.15,4.65);

\node[anchor=north, font=\small\bfseries, align=center, text width=6.1cm]
  at (3.02,4.42) {(a) Learned transition check};
\node[label, anchor=east] at (0.80,3.18) {evidence};
\toymaze{0.95}{2.53}{0}{0}
\toymaze{4.05}{2.53}{1}{0}
\draw[arr] (2.27,3.15) -- node[above, label]{$a_0=\mathrm{RIGHT}$} (3.95,3.15);
\node[label] at (1.57,2.33) {$g_0$};
\node[label] at (4.67,2.33) {recorded $g_1$};

\node[label, anchor=east] at (0.80,1.25) {replay};
\node[statebox] (initbox) at (1.66,1.25)
  {$\hat s_0=$\\$\hat\init(g_0,\ell-1)$};
\toymaze{4.05}{0.63}{1}{0}
\draw[modelarr] (initbox.east) -- node[above, label]{$\hat\trans(\,\cdot\,,a_0)$} (3.95,1.25);
\node[label] at (4.67,0.43) {$\hat g_1=\hat\render(\hat s_1)$};
\draw[ForestGreen!65!black, thick, {Latex[length=1.7mm]}-{Latex[length=1.7mm]}]
  (5.56,1.83) -- (5.56,2.53);
\node[label, text=ForestGreen!55!black, fill=white, inner sep=1pt] at (5.56,2.18)
  {$=$};

\node[anchor=north west, font=\small\bfseries] at (6.43,4.42)
  {(b) Planner searches the threaded model state};
\node[label, text=ForestGreen!55!black] at (7.11,3.70)
  {\checkmark\ prefix accepted};
\draw[planarr] (7.11,3.58) -- (7.11,3.49);
\toymaze{6.49}{2.22}{1}{0}
\toymaze{8.49}{2.22}{2}{0}
\toymaze{10.49}{2.22}{2}{1}
\toymaze{12.49}{2.22}{2}{2}
\toymaze{14.49}{2.22}{2}{3}
\draw[planarr] (7.79,2.84) -- (8.39,2.84);
\draw[planarr] (9.79,2.84) -- (10.39,2.84);
\draw[planarr] (11.79,2.84) -- (12.39,2.84);
\draw[planarr] (13.79,2.84) -- (14.39,2.84);
\node[label] at (7.11,1.99) {threaded $\hat s_t$};
\node[label] at (9.11,1.99) {step 1};
\node[label] at (11.11,1.99) {step 2};
\node[label] at (13.11,1.99) {step 3};
\node[label, text=ForestGreen!55!black] at (15.11,1.94)
  {$\hat\outcome(\hat s_{t+4})=$\\\textbf{level complete}};

\node[font=\scriptsize, text=ForestGreen!55!black, align=center] at (11.18,0.82)
  {\textbf{Plan:}\quad RIGHT \(\longrightarrow\) UP \(\longrightarrow\) UP
   \(\longrightarrow\) UP \(\longrightarrow\) \textbf{level complete}};
\end{tikzpicture}%
}
\caption{Synthetic illustration of verification and planning.  Panel (a) connects the learned
initialization, transition, and rendering maps to the formal model in \Cref{sec:formal}.  Panel (b) starts
from the accepted current-level replay state and searches for a state classified as
\code{level\_complete}.}
\label{fig:verification-planning-example}
\end{figure}

\subsection{Metareasoning over model use}

Constructing and maintaining a model consumes inference that could otherwise be spent on direct reasoning.
Tycho therefore makes this allocation policy an explicit experimental object through four benchmarkable
policies.  Under \emph{no-world-model}, the actor receives
the game interface without an executable model contract.  This is the direct-reasoning baseline for the
matched comparison.  Under \emph{single}, one actor both reasons and edits the model.  Under
\emph{orchestrator}, the actor may call a focused model-building subagent.  Under \emph{trigger},
the harness invokes the model builder when verification reports an unusable model, insufficient prediction
coverage, a transition mismatch, or an outcome inconsistency.  It also invokes the builder at new-level and
fatal-reset boundaries.  These policies test whether world modeling should remain actor-owned, be delegated
to a specialist, or be triggered automatically after verification failure.

\begin{table}[!ht]
\centering
\small
\renewcommand{\arraystretch}{1.12}
\begin{tabular}{@{}>{\raggedright\arraybackslash}p{0.17\linewidth}>{\raggedright\arraybackslash}p{0.17\linewidth}>{\raggedright\arraybackslash}p{0.31\linewidth}>{\raggedright\arraybackslash}p{0.27\linewidth}@{}}
\toprule
Policy & Model editor & When construction or repair runs & Role in the comparison \\
\midrule
No world model & --- & never & direct-reasoning baseline \\
Single & actor & at the actor's discretion & integrated reasoning and modeling \\
Orchestrator & builder & when requested by the actor & delegated specialist modeling (subagent) \\
Trigger & builder & after falsification or a lifecycle event & automatic model repair \\
\bottomrule
\end{tabular}
\caption{The four evaluated task-time model-maintenance policies.  All share the interaction record and
actor-only action protocol.  They jointly vary model ownership, specialist access, and realized inference
allocation.}
\label{tab:orchestration-policies}
\end{table}

The policies differ in who edits the executable model and what initiates an edit.  They never delegate
environment actions.  In builder-based policies, the builder may update the model and return advice, but
only the actor can act in the environment.

\FloatBarrier

\section{Evaluation}
\label{sec:evaluation}

\renewcommand{\topfraction}{0.92}
\renewcommand{\bottomfraction}{0.75}
\renewcommand{\textfraction}{0.08}
\renewcommand{\floatpagefraction}{0.80}

We evaluate Tycho on all 25 public ARC-AGI-3 demonstration games.\footnote{\url{https://arcprize.org/arc-agi/3}}
Five were randomly designated for harness development: \code{tr87}, \code{vc33}, \code{r11l},
\code{bp35}, and \code{ft09}.  Reporting the full set preserves scorecard comparability and avoids a
selectively defined evaluation subset.

We organize the analysis around three hypotheses:
\begin{enumerate}
  \item Selective executable world-model use improves completion and action efficiency over direct
  reasoning.
  \item Accurate transition prediction alone does not ensure strong gameplay: correct outcome
  inference and effective use of model advice also matter.
  \item A free-form model contract can accommodate task-specific representations, while automatic repair
  trades more exact models for additional builder calls and inference.
\end{enumerate}

We evaluate the four policies in \Cref{tab:orchestration-policies} in a matched comparison over the full public demonstration set of 25 ARC-AGI-3 games (183 levels in total) using Opus as backend model.  
We use this matched comparison to select the policy with the highest RHAE score, then evaluate that policy
with GPT-5.6 Sol and Opus~5 as separate complete-system evaluations under the reported configurations.

\paragraph{Shared configuration.}
\Cref{tab:runconfig} gives the matched and selected-policy configurations.
Games
are played against the locally cached ARC engine. Each completed trace is checked for frame, state, and
level-count agreement, then replayed through the ARC-AGI API in competition mode
\citep{arcagi3methodology}.  
Inference ceilings use a post-action boundary: the action cycle crossing the nominal limit remains in the
trace, and the game stops before the next request.  The live harness and canonical exporter use the same
accounting rule.

\subsection{Experimental results}

\begin{table}[t]
\centering
\small
\begin{tabular}{@{}l>{\raggedright\arraybackslash}p{0.35\linewidth}>{\raggedright\arraybackslash}p{0.35\linewidth}@{}}
\toprule
Parameter & Policy selection & Selected-policy evaluation \\
\midrule
Language model & Claude Opus~4.8 & GPT-5.6 Sol / Opus~5 \\
Reasoning effort & \code{xhigh} & \code{max} / \code{xhigh} \\
LM-call budget / game & 3500 & 3500 \\
Inference budget / game & \$750 & \$1{,}500 (not reached) \\
Tool steps / turn & 40 & 40 \\
Answer budget / LM call & 24{,}000 & 24{,}000 \\
Policy & four matched policies & orchestrator \\
Scoring & \multicolumn{2}{l}{official scorecards from deterministic competition replay} \\
\bottomrule
\end{tabular}
\caption{Two-stage public-set evaluation.  The first stage holds the listed configuration fixed and varies
the model-maintenance policy. The second evaluates the selected policy with two frontier models.  The
configured answer budget is not the provider request cap: requests add effort-dependent headroom because
hidden reasoning and visible output share the provider limit.}
\label{tab:runconfig}
\end{table}
\paragraph{Metrics.}
Our primary performance metric is \emph{mean game RHAE}: the arithmetic mean of the official per-game
RHAE scores over the 25 games.  We also report games completed, levels completed, and scored environment
actions.  RHAE uses the upper-median first-run human action count for each level as its efficiency
baseline \citep{arcagi3methodology}.  Separately, we compare each agent trajectory with the released
distribution of first-run human replays; the resulting game-balanced midrank reveals near-ceiling
performance that capped RHAE can obscure.

The model diagnostics ask three questions: can the model execute before an action, does it predict the
next rendering, and does it classify boundaries correctly?  They use the model version available
\emph{before} each action.  The \emph{pre-action
model-execution rate} is the fraction of non-RESET actions for which replay, the committed action's transition,
rendering, and a valid outcome all execute.  Among graded ordinary non-boundary actions with an executable
render, a cell is \emph{claimed} when the model predicts one of the 16 colors rather than returning
\code{-1} to abstain.  \emph{Accepted transition match} requires at least one claimed cell and requires
every claimed cell to match the next frame, using either the canonical render or one of at most five
bounded variants with the same unknown-cell mask.  \emph{Prediction coverage} is the fraction of frame
cells claimed by the canonical render; an unknown cell is an abstention, not a correct prediction.
\emph{Terminal-outcome
recall} measures correct \code{level\_complete} or \code{game\_over} predictions on recorded boundary
actions for which the pre-action model executes, while the \emph{terminal false-positive rate} measures
terminal predictions on evaluable ongoing actions.  We call an evaluable level, meaning one with at least
one graded transition, \emph{transition-exact under the accepted verifier} when every graded transition is
accepted.  This binary level summary does not require strict full-frame equality.  End-of-run model
results are labeled separately.  Strict full-frame match and other supporting diagnostics appear in
\Cref{app:diagnostics}.

An \emph{exactly followed recommendation} is the same fully specified action, including click
coordinates, committed later in the turn in which it was surfaced.  Builder calls count explicit builder
invocations, and inference costs apply each provider's public list prices to recorded token and cache
traffic.  The \emph{game-balanced human-replay midrank} compares an agent and a first-run human replay
first by levels completed and, when tied, by actions through the last completed level.  It then averages
the resulting same-game midranks uniformly over games.

\paragraph{Matched policy selection.}
The direct-reasoning \emph{no-world-model} baseline reaches $79.07$ RHAE, confirming
that a frontier model with Tycho's durable typed evidence and workspace interface is already a strong
ARC-AGI-3 agent.  The integrated \emph{single} policy reaches $85.36$, and the builder-delegating
\emph{orchestrator} reaches $88.49$.  The \emph{trigger} policy,
which calls the builder after verification failures rather than leaving model construction under direct
actor choice, reaches $83.07$: above the no-world-model baseline, but below both single and
orchestrator.  The four policies complete 19, 19, 21, and 18 games and 157, 162, 166, and 162 levels,
respectively.  Trigger produces much higher accepted transition match, but makes more builder calls, spends
more inference, and completes fewer levels.  Accurate transition prediction therefore does not by itself
ensure strong game-playing performance: when the model is built and how its advice is used also matter.
All four closed competition cards are public: \href{https://arcprize.org/scorecards/30bdf730-7aaf-49db-aae4-937df15bc5da}{no world model},
\href{https://arcprize.org/scorecards/3732640f-0e6c-44d4-8e2c-bae7c6476fad}{single},
\href{https://arcprize.org/scorecards/5477a5f0-efe9-43ae-83c1-cda8426c318c}{orchestrator}, and
\href{https://arcprize.org/scorecards/f31be13c-411d-4c32-a6c3-99c0a9c1bdbc}{trigger}.
With the orchestrator policy fixed by the Opus~4.8 study, GPT-5.6 Sol reaches $100.00$ public-set
RHAE under official scoring (\href{https://arcprize.org/scorecards/18d94e34-fee4-4fa9-9433-b6ab76c55554}{public scorecard}), completing all 25 games and all 183 levels in $7{,}766$ scored actions.
An Opus~5 run also reaches $100.00$ RHAE
(\href{https://arcprize.org/scorecards/08b98aa0-5df0-42c0-b501-856f553a21e9}{public scorecard}),
completing all 25 games and all 183 levels in $6{,}641$ actions.

Selected using Opus~4.8, the orchestrator policy reaches $100.00$ public-set RHAE with GPT-5.6 Sol,
providing evidence of transfer across model families.  Opus~5 reaches the same score with fewer actions.

\begin{figure}[t]
\centering
\includegraphics[width=0.8\linewidth]{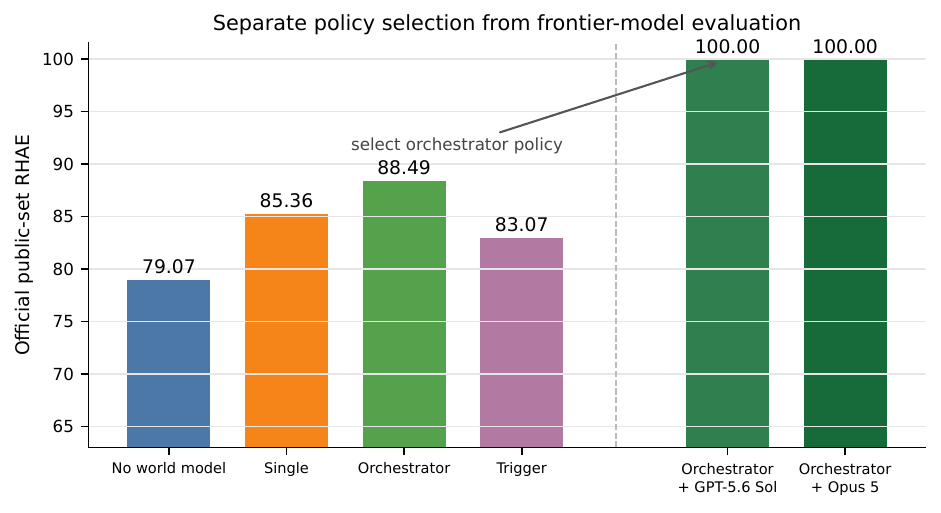}
\vspace{-0.5cm}
\caption{RHAE in the two-stage evaluation.  Four matched Opus~4.8 runs select the orchestrator policy,
which is then evaluated with GPT-5.6 Sol and Opus~5 as separate system results.}
\label{fig:policy-selection-transfer}
\end{figure}

\paragraph{Efficiency beyond capped RHAE.}
The headline score compresses substantial differences in action use.  Orchestrator completes the most
levels ($166$) while using $10{,}354$ actions, $20.3\%$ fewer than direct reasoning for nine additional
levels.  Trigger spends still fewer actions ($9{,}442$) and completes $162$ levels; single reaches the same
progress with $11{,}576$ actions.  Raw totals are not matched efficiency comparisons because the policies
reach different levels.  On the 157 levels completed by both orchestrator and direct reasoning,
orchestrator uses $7{,}187$ actions versus $8{,}857$, an $18.9\%$ reduction: it is more efficient on 81
levels, less efficient on 42, and tied on 34.  Against direct reasoning on their respective matched sets,
single reduces actions by $8.5\%$ over 153 levels and trigger by $11.9\%$ over 150 levels.  Trigger's
shorter unfinished tails are budget-censored.  It spends $1{,}361$ actions after the last completed level,
versus $4{,}140$ for direct reasoning, $2{,}403$ for single, and $1{,}842$ for orchestrator.  Every
unsuccessful matched-study game hits an imposed limit, including five trigger runs that hit the LM-call
limit (\Cref{tab:appendix-operational}).

The selected-policy GPT run moves the frontier in both dimensions: it completes 17 more levels than
the Opus~4.8 orchestrator while using $7{,}766$ actions, $25.0\%$ fewer overall.  Opus~5 also completes 17
more levels than the matched orchestrator in $6{,}641$ actions, $35.9\%$ fewer overall.  Relative to
GPT-5.6, it uses $14.5\%$ fewer actions at the same complete coverage.
RHAE only partially reflects this behavior.  Per-level efficiency is clipped at $115$, and game score
is further bounded by weighted completion.  Consequently, 114/157 direct, 133/162 single, 138/166
orchestrator, and 124/162 trigger completions already saturate the per-level cap.  The corresponding
counts are 167/183 for GPT-5.6 and 171/183 for Opus~5.  We therefore complement
RHAE with the completed-level actions, unfinished tails, and cap counts in
\Cref{fig:action-efficiency}.

\begin{figure}[!htbp]
\centering
\includegraphics[width=.9\linewidth]{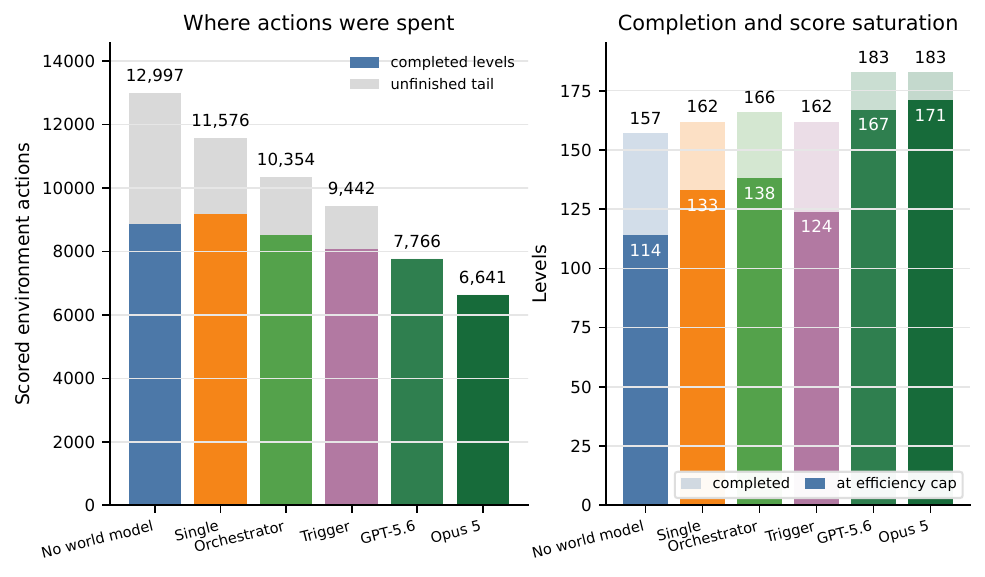}
\caption{Action use and score saturation.  Left: actions assigned to completed levels and to unfinished
tails.  Right: completed levels and the subset at the per-level RHAE efficiency ceiling.  Trigger uses the
fewest actions in the matched study, orchestrator completes the most levels, and both selected-policy
frontier runs advance the completion/action frontier.}
\label{fig:action-efficiency}
\end{figure}

\begin{figure}[!htbp]
\centering
\includegraphics[width=\linewidth]{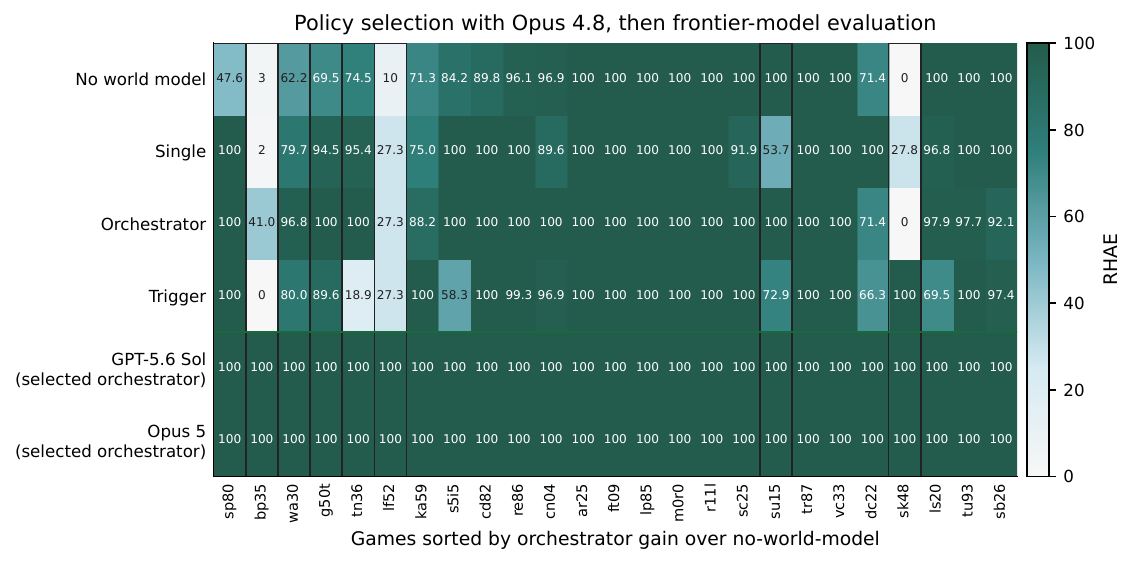}
\caption{Per-game public-set RHAE.  The first four rows are the matched Opus policy study; the separated
last two rows use the selected orchestrator policy with GPT-5.6 Sol and Opus~5.  Columns are sorted by the Opus
orchestrator's gain over direct reasoning.  Row colors match the policy color scheme used throughout the
paper, with paler cells indicating lower RHAE.  Black outlines mark games discussed as case studies or
diagnostics.}
\label{fig:mode-heatmap}
\end{figure}

\paragraph{Paired game-level effects.}
Because the public set contains only 25 games and many games saturate at $100$ RHAE, aggregate means alone
are not sufficient.  \Cref{fig:mode-heatmap} shows the saturation pattern, while
\Cref{tab:paired-deltas} reports paired per-game deltas with descriptive game-resampling intervals.
Resampling paired deltas over games asks whether a conclusion depends on the particular mix of games in
the public set.  It does not measure run-to-run model variability.  Within the observed trajectories,
orchestrator exceeds no world model by $9.42$ RHAE.  Its resampling interval excludes zero, and deleting
any one game leaves a difference between $+7.63$ and $+10.14$.  Differences among the three
model-using policies are less uniform.  Their intervals include zero because they trade wins across
games and tie on 9--12 games, often at saturation.  Thus, the observed delegation advantage is
distributed across the public games rather than driven by one game.  No model-maintenance policy
dominates every game.

\begin{table}[t]
\centering
\small
\begin{tabular}{@{}lrrr@{}}
\toprule
Paired comparison & Mean $\Delta$ RHAE & 95\% resampling interval & $+ / 0 / -$ games \\
\midrule
Single $-$ no world model & $+6.28$ & $[-0.67,\ 13.15]$ & 11 / 9 / 5 \\
Trigger $-$ no world model & $+4.00$ & $[-6.35,\ 15.68]$ & 8 / 9 / 8 \\
Orchestrator $-$ no world model & $+9.42$ & $[3.97,\ 15.76]$ & 11 / 11 / 3 \\
Orchestrator $-$ single & $+3.14$ & $[-2.61,\ 9.28]$ & 9 / 12 / 4 \\
Orchestrator $-$ trigger & $+5.43$ & $[-6.73,\ 16.44]$ & 10 / 11 / 4 \\
\bottomrule
\end{tabular}
\caption{Paired public-game deltas.  All comparisons have median zero because many games tie.  Descriptive
bootstrap intervals resample the 25 paired game deltas and measure sensitivity to benchmark composition,
not run-to-run model variability.}
\label{tab:paired-deltas}
\end{table}

\begin{figure}[!htbp]
\centering
\includegraphics[width=\linewidth]{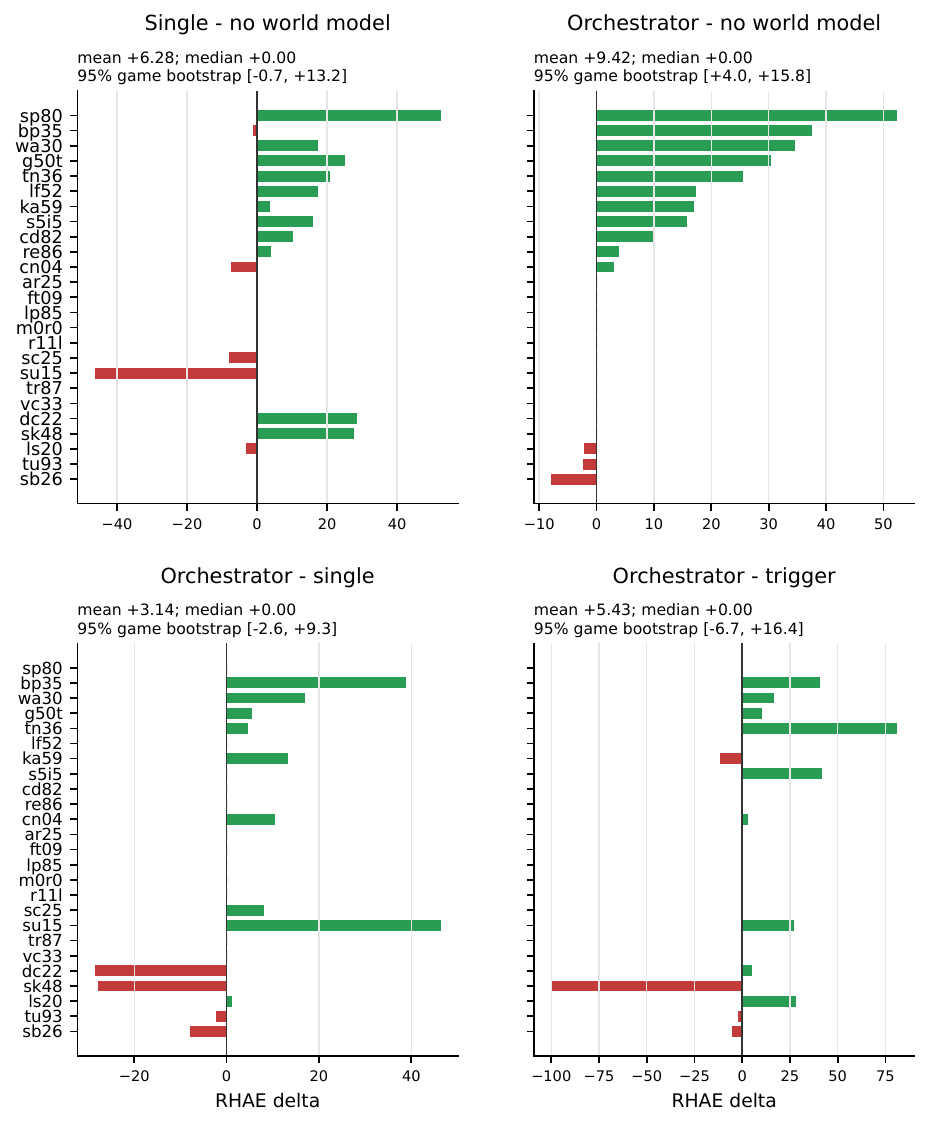}
\caption{Paired public-game RHAE deltas.  The no-world-model comparisons show large wins alongside many
saturated ties; the lower panels compare the three model-maintenance policies directly.}
\label{fig:paired-deltas}
\end{figure}

\paragraph{Human replay comparison.}
\label{sec:human-comparison}
We also compare against the ARC Prize public human replay release \citep{arcprize2026human}.  The release
contains 340 usable first-run replays.  We do not construct synthetic
25-game ``human players,'' because the public files do not expose stable participant identifiers across
games.  Instead, for each game we compare the agent to the empirical distribution of human replays for
that same game using lexicographic progress/action ordering: more completed levels is better, and among
runs with the same progress, fewer actions to the last completed level is better.  We then average the
resulting midrank uniformly over games: a strictly beaten replay contributes one, an exact tie one half,
and a replay ahead of the agent zero.  This is the expected midrank for a uniformly sampled public game
and first-run replay (it is not the percentile of one human participant across 25 games).  Human sample
sizes are uneven (10--54 usable replays per game), but each game receives equal weight in the
game-balanced midrank.

\begin{table}[t]
\centering
\small
\begin{tabular}{@{}>{\raggedright\arraybackslash}p{0.25\linewidth}
                    *{4}{>{\centering\arraybackslash}p{0.15\linewidth}}@{}}
\toprule
Policy / model & Game-balanced midrank & Pooled strict win & Top-decile games & Baseline action ratio \\
\midrule
No world model / Opus~4.8 & 79.86 & 81.18 & 13 / 25 & 1.43 \\
Single / Opus~4.8 & 83.23 & 84.12 & 14 / 25 & 1.54 \\
Orchestrator / Opus~4.8 & \textbf{86.14} & \textbf{87.06} & \textbf{17 / 25} & 1.67 \\
Trigger / Opus~4.8 & 81.97 & 82.94 & 15 / 25 & \textbf{1.75} \\
\addlinespace
Orchestrator / GPT-5.6 Sol & 98.47 & 98.53 & 24 / 25 & 2.21 \\
Orchestrator / Opus~5 & \textbf{100.00} & \textbf{100.00} & \textbf{25 / 25} & \textbf{2.58} \\
\bottomrule
\end{tabular}
\caption{Human replay comparison.  Game-balanced midrank is the exact empirical statistic over the
released first-run replays, with equal weight per game.  Pooled win is the fraction of replays strictly beaten, and ``top-decile'' counts games reaching the
empirical cutoff.  The baseline action ratio compares official human baselines with agent actions on
completed levels, so values above one mean the agent used fewer actions at its achieved progress.  The
two selected-policy frontier runs are shown separately.}
\label{tab:human-comparison}
\end{table}

The matched policies show that action efficiency and completion are distinct.  Trigger uses only
$57.2\%$ of the official human-baseline actions on the levels it completes, yet orchestrator completes
four more levels and attains the higher human midrank.  Both frontier runs are stronger on both
dimensions.  GPT-5.6 uses $45.3\%$ of the aggregate baseline over all 183 levels, reaches the empirical
top decile on 24 games, and meets or exceeds the best observed replay on 22.  Across all 183 levels,
Opus~5 uses $38.8\%$ of the aggregate baseline, reaches the top decile on all 25 games, and meets or
exceeds the best observed replay on all 25.

To place the human comparison in context, we applied the same ordering to Tycho's three orchestrator scorecards,
every nonhuman community-leaderboard entry whose linked scorecard exposes all 25 public games, and
contemporaneous systems that release equivalent 25-game progress and action data
(\Cref{tab:community-human-comparison}; \citealp{arcprize2026community}).  Official scorecards are parsed
directly.  For OPINE-World and Schema we sum released per-level action counts through the last completed
level; for Rodionov's component study we use its released ``Steps on Solved'' field, which is the same sum
\citep{courtis2026opine,schema2026,rodionov2026components}.  PRO-LONG releases 25 official single-game
scorecards for its first Fable~5 pass, while Continual Harness links a complete official scorecard
\citep{fox2026prolong,karten2026continual}.  NOOA likewise releases complete scorecards for its GPT-5.5
and GPT-5.6 Sol fleets \citep{furgale2026nooa}.

The resulting comparison changes the external reference point substantially.  Rodionov's fixed-verification
run reaches a human midrank of $96.38$, Schema's selected fallback result $95.27$, and PRO-LONG's first pass
$88.71$.  The older Rodionov row is the scorecard associated with the original executable-world-model paper
\citep{rodionov2026executable}; we identify it by the paper and method rather than its repository nickname.
Selection protocols remain material: Schema retains the better trajectory after conditionally rerunning
games, whereas the PRO-LONG row excludes the additional runs used for its $97.4$ headline.  The 25 released
first-pass scorecards average $94.71$, slightly above the paper's rounded $94.6$.  DreamTeam's row is its
released scorecard ($38.06$), not the paper's two-run mean of $38.4$
\citep{sarafian2026workspace}.  Continual Harness's released run reaches a $26.53$ human midrank at $20.54$
RHAE.  NOOA reaches human midranks of $81.89$ with GPT-5.6 Sol and $57.60$ with GPT-5.5 under its
two-hour fleet protocol.  Read-Grep-Bash exposes only one game.  We found no compatible current
public-25 artifact for Duck Harness; AERA does not provide directly comparable outcome-based counts for
the reason noted above; and graph-based exploration reports the earlier six-game preview.  These systems
are therefore not assigned a human midrank.

\begin{table}[t]
\centering
\small
\setlength{\tabcolsep}{5pt}
\begin{tabular}{@{}llrrr@{}}
\toprule
System & LLM & RHAE & Top-decile games & Human midrank$^\ast$ \\
\midrule
Tycho & Opus~5 & \textbf{100.00} & \textbf{25 / 25} & \textbf{100.00} \\
Tycho & GPT-5.6 Sol & \textbf{100.00} & 24 / 25 & 98.47 \\
Rodionov verification & GPT-5.6 Sol & 98.97 & 23 / 25 & 96.38 \\
Schema & Opus~4.8/Fable~5 & 98.98 & 20 / 25 & 95.27 \\
PRO-LONG (first pass) & Fable~5 & 94.71 & 15 / 25 & 88.71 \\
Tycho & Opus~4.8 & 88.49 & 17 / 25 & 86.14 \\
NOOA & GPT-5.6 Sol & 85.13 & 15 / 25 & 81.89 \\
OPINE-World & Opus~4.8 & 78.40 & 12 / 25 & 79.42 \\
Vision -- Continual Learning v1 & Vision Large & 63.15 & 11 / 25 & 68.28 \\
Rodionov executable WM (original) & GPT-5.5 & 63.74 & 9 / 25 & 63.08 \\
NOOA & GPT-5.5 & 50.22 & 5 / 25 & 57.60 \\
TELL & Opus~4.6 & 43.90 & 3 / 25 & 52.91 \\
DreamTeam (released run) & Opus~4.6/GPT-5.5 & 38.06 & 4 / 25 & 40.18 \\
Continual Harness & Gemini~3.1 Pro & 20.54 & 1 / 25 & 26.53 \\
a-evolve MAS Evolved & Opus~4.6 & 12.30 & 0 / 25 & 26.18 \\
OpenClaw & Opus~4.7 & 5.20 & 0 / 25 & 16.25 \\
\bottomrule
\end{tabular}
\caption{Human-replay comparison for publicly auditable 25-game trajectories using the comparison rule in \Cref{tab:human-comparison}.  Rows are sorted by human midrank, then RHAE.
They may differ in model and selection protocol; the text states the non-single-pass cases.
$^\ast$ Human midrank is shown only as cross-protocol context.}
\label{tab:community-human-comparison}
\end{table}

\paragraph{Cost and context.}
\Cref{tab:cost} reports cost-relevant serving statistics.  For the matched study we use Anthropic's public
Claude Opus~4.8 list prices \citep{anthropicpricing2026}: \$5 per million fresh input tokens, \$25 per
million output tokens, \$0.50 per million cache-read tokens, and the five-minute cache-write rate for
writes.  The GPT transfer uses its corresponding public schedule \citep{openaigpt56sol2026}: \$5 fresh input, \$30 output, \$0.50
cache read, and \$6.25 cache write per million tokens.  
For Opus~5 we retain the same Opus-rate vector recorded by the run's budget accounting
(\$5/\$25/\$0.50/\$6.25 per million fresh-input/output/cache-read/cache-write tokens), and therefore label
its dollar value an API-equivalent estimate.
The four 25-game runs correspond to
budgeted list-price totals of approximately \$5.66k (no world model), \$7.27k (single), \$5.78k
(orchestrator), and \$8.03k (trigger).  
Orchestrator therefore obtains the highest score at essentially the same estimated cost as no world model.
Single's per-game costs remain heavy-tailed: its mean is \$291, but its median is \$151.
The GPT transfer is estimated at \$4.47k total, or \$179 mean and \$114 median per game.  Its largest game
cost is \$649, so the nominal \$1.5k ceiling never fires.  Despite using the stronger model, this run is
cheaper than the matched Opus model-using policies because it consumes substantially fewer output and
cache-read tokens.  Opus~5 is estimated at \$2.99k total, or \$119 mean and \$97 median per game; its
largest game is \$303.  This accounting includes usage retained for interrupted or resumed inference
attempts even when no subsequent environment action was committed.

\paragraph{Cross-system cost normalization.}
\label{sec:cross-system-cost}
The available artifacts permit direct repricing for OPINE-World, sensitivity analysis for Rodionov, and a
partial audit for PRO-LONG, but not a dollar reconstruction for Schema.
Repricing OPINE-World's released actor and synthesis-agent counters at current Opus~4.8 API rates gives
\$12.4--15.2k across 25 games, compared with \$5.78k for Tycho's orchestrator.  The range is necessary
because the archive reports aggregate cache-creation tokens but not their TTL split (five minute or one hour rate).  On this normalized same-model comparison, Tycho
achieves the higher score at less than half the list-price inference cost.  

Directly converting the $90.75$M and $103.49$M cost tokens for Rodionov's GPT-5.6 Sol verification runs
gives \$2.72k and \$3.10k under the
observed subscription-backed cache mix, in which 97--98\% of input is recorded as cache reads and cache
writes are not reported.  The authors caution that preliminary API-key tests had lower cache-hit rates and
cost roughly five times this projection, yielding empirical reproduction sensitivities of approximately
\$13.6k and \$15.5k \citep{rodionov2026artifacts,rodionov2026components}.  These are $3.0$--$3.5\times$
Tycho's \$4.47k selected GPT transfer and $4.6$--$5.2\times$ its \$2.99k Opus~5 run.  Under Rodionov's
subscription-backed cache mix, however, the direct \$2.72k--\$3.10k projection is comparable to the
Opus~5 estimate.

PRO-LONG weights Fable~5 output, cache reads, and
one-hour cache writes at $5$, $0.1$, and $2$ fresh-input tokens, matching the public \$10/\$50/\$1/\$20
per-million schedule \citep{fox2026prolong,anthropicpricing2026}.  Its reported 150M billed-token
equivalents therefore yield \$1.50k for the first pass, while the selective best-of-up-to-two protocol is
reported at \$1.75k.  The latter is $1.7\times$ lower than Tycho's \$2.99k Opus~5 estimate.  This is a plausible
inference-cost advantage, helped by PRO-LONG's ability to execute up to 20 queued actions between model
invocations, whereas Tycho observes and checks every committed action.  The exact total cannot be
independently reconstructed because the additional selected trajectories and their token telemetry are
not included in the released artifacts \citep{fox2026prolongartifacts}.  Its different model,
2,000-action allowance, and selective
rerunning also preclude a controlled cost comparison.

\begin{table}[t]
\centering
\scriptsize
\begin{tabular}{@{}lrrrrrr@{}}
\toprule
Policy & Calls & Fresh & Cache read & Cache write & Output & Mean / median \\
\midrule
No world model & 22.9k & 404M & 4{,}068M & 125M & 32.8M & \$226 / \$106 \\
Single & 25.6k & 594M & 4{,}936M & 155M & 34.3M & \$291 / \$151 \\
Orchestrator & 24.1k & 390M & 3{,}286M & 207M & 35.8M & \$231 / \$169 \\
Trigger & 44.4k & 609M & 3{,}610M & 255M & 63.1M & \$321 / \$264 \\
\addlinespace
GPT transfer & 26.5k & 69.8M & 949M & 512M & 15.0M & \$179 / \$114 \\
Opus~5 & 15.1k & 322M & 947M & 50.7M & 23.4M & \$119 / \$97 \\
\bottomrule
\end{tabular}
\caption{Serving statistics and model-specific API-equivalent estimates.  Token columns total 25 games;
``Fresh'' excludes cache reads and writes, and the last column gives mean/median per-game cost.}
\label{tab:cost}
\end{table}

\begin{figure}[!htbp]
\centering
\begin{minipage}[t]{0.49\linewidth}
  \centering
  \includegraphics[width=\linewidth]{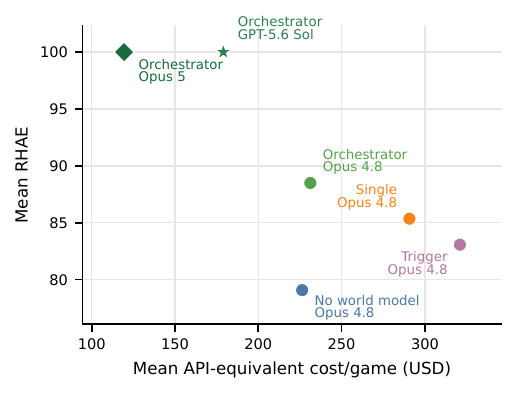}\\[-0.4em]
  {\small\textbf{(a)}}
\end{minipage}\hfill
\begin{minipage}[t]{0.49\linewidth}
  \centering
  \includegraphics[width=\linewidth]{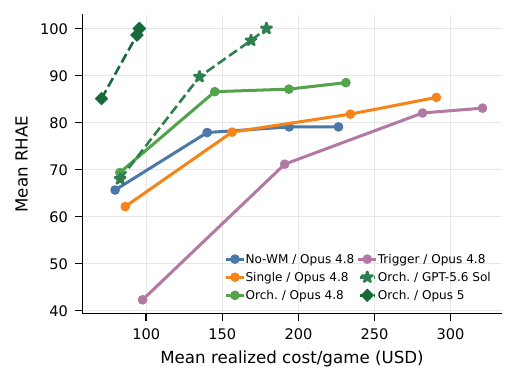}\\[-0.4em]
  {\small\textbf{(b)}}
\end{minipage}
\caption{Compute--performance tradeoffs; labels give policy and model.  \textbf{(a)} Mean public-game
RHAE versus API-equivalent cost.  \textbf{(b)} RHAE of canonical trace prefixes at \$100, \$250, \$500,
and \$750 ceilings (left to right).  Solid curves are matched-study policies and dashed curves are
frontier runs.  Horizontal position includes the crossing action cycle; these are prefix analyses,
not reruns.}
\label{fig:cost-analysis}
\end{figure}

\paragraph{Sensitivity to per-game cost limits.}
We replay each recorded trajectory only through the action cycle that crosses a nominal \$100, \$250,
\$500, or \$750 per-game limit, then recompute its score.  The agents were not rerun or told about these lower
limits, so the curves show how much of the final score had accumulated by each point.
The right panel of \Cref{fig:cost-analysis} shows two main patterns.  Within the matched study, at \$100
direct reasoning leads single and trigger because model construction has not yet paid off.  Orchestrator
reaches $86.57$ RHAE and 163 completed levels by the \$250 limit, then gains only three levels and $1.92$
RHAE by \$750, while trigger needs more inference to recover from its builder-heavy start.  The frontier
traces saturate faster: Opus~5 reaches $98.62$ at \$250 and $100$ by \$500, while GPT-5.6 reaches $89.75$
at \$250 and $100$ by \$750.

\paragraph{Models available before action.}
The pre-action diagnostic evaluates the Python model available before each committed action using only
preceding history.  The resulting pre-action model-execution rates are $89.3\%$, $83.5\%$, and $52.0\%$
for single, orchestrator, and trigger, respectively.  If no usable model version is available in the run
record, the action counts as unavailable 

\begin{table}[t]
\centering
\small
\setlength{\tabcolsep}{4pt}
\begin{tabular}{@{}lrrrrrr@{}}
\toprule
Policy & Model exec. & Accepted trans. & Coverage & LC recall ($n$) & GO recall ($n$) & Terminal FP \\
\midrule
Single & 89.3 & 13.9 & 99.7 & 21.1 (147) & 0.0 (12) & 0.97 \\
Orchestrator & 83.5 & 16.2 & 100.0 & 20.0 (155) & 0.0 (6) & 0.58 \\
Trigger & 52.0 & 88.1 & 99.3 & 77.7 (112) & 0.0 (4) & 0.59 \\
\bottomrule
\end{tabular}
\caption{Pre-action model diagnostics (percent), using the definitions above.  Accepted transition
match and coverage are micro-averaged over evaluable ordinary transitions; LC is
\code{level\_complete}, GO is \code{game\_over}, and FP is the false-positive rate.}
\label{tab:online-model-diagnostics}
\end{table}

The ordering rules out transition accuracy as a sufficient explanation of game-playing performance.
Trigger attains $88.1\%$ accepted transition match and recognizes most completion boundaries, yet scores
below single and orchestrator, whose accepted transition match is $13.9\%$ and $16.2\%$.  None predicts the small
set of observed deaths in the evaluable prefixes.  Single and orchestrator can still benefit from
structured scratch work, selective model use, later repair, and the actor's own reasoning.  These
pre-action diagnostics establish when an evaluated model was available and what it predicted. They do not
isolate its causal effect on the next action.  Macro-averaged, missing-data, strict-match, and repair-
recovery checks in \Cref{app:diagnostics} preserve the same qualitative contrast.

\begin{table}[t]
\centering
\small
\begin{tabular}{@{}lrrrr@{}}
\toprule
Policy & Builder calls & Recommendations & Exact followed & Action-name followed \\
\midrule
Single & 0 & 29 & 25 & 25 \\
Orchestrator & 147 & 68 & 35 & 54 \\
Trigger & 1{,}192 & 970 & 552 & 748 \\
\addlinespace
GPT transfer & 660 & 644 & 634 & 636 \\
Opus~5 & 130 & 129 & 109 & 122 \\
\bottomrule
\end{tabular}
\caption{Builder activity and behavior after pre-action model advice.  Exact following includes click
coordinates; action-name following compares only the action type.  Single recommendations are candidate
first actions from automatic planner probes, while the other rows use builder reports.}
\label{tab:planner-adherence}
\end{table}

\paragraph{World-model and planner diagnostics.}
End-of-run diagnostics show the same separation.  The end-of-run models are transition-exact on 25 of 137
evaluable completed levels under single ($18.2\%$), 62 of 155 under orchestrator ($40.0\%$), and 154 of
162 under trigger ($95.1\%$), although trigger scores below both other model-using policies.  Full replay
in \Cref{tab:appendix-rendering,tab:appendix-outcomes} reports $99.97\%$ accepted transition match for
trigger, versus $45.56\%$ for orchestrator and $22.24\%$ for single.  These end-of-run measurements cannot
determine how earlier model versions affected action, but they reinforce that a more accurate simulator is
not necessarily a better game-playing policy.

The executable pathway also remains active in the frontier runs.  GPT-5.6's end-of-run summaries report
a transition-exact model on 182 of 183 completed levels ($99.5\%$), 660 builder calls, and planner-bearing
diagnostics on 34 completed levels.  Of 644 parseable builder recommendations surfaced before a GPT action,
the actor commits the exact recommended action 634 times in the same turn (636 match at the action-name
level).  This close temporal alignment shows that builder advice is usually carried into the next action,
but it does not isolate a causal effect because actor and builder share the same evidence.  Opus~5 reports
transition-exact end-of-run models on 150 of 183 completed levels ($82.0\%$), 130 builder calls, and
planner-bearing diagnostics on 32 levels.  It exactly follows 109 of 129 surfaced builder recommendations
(122 match at the action-name level).

\begin{figure}[!htbp]
\centering
\includegraphics[width=\linewidth]{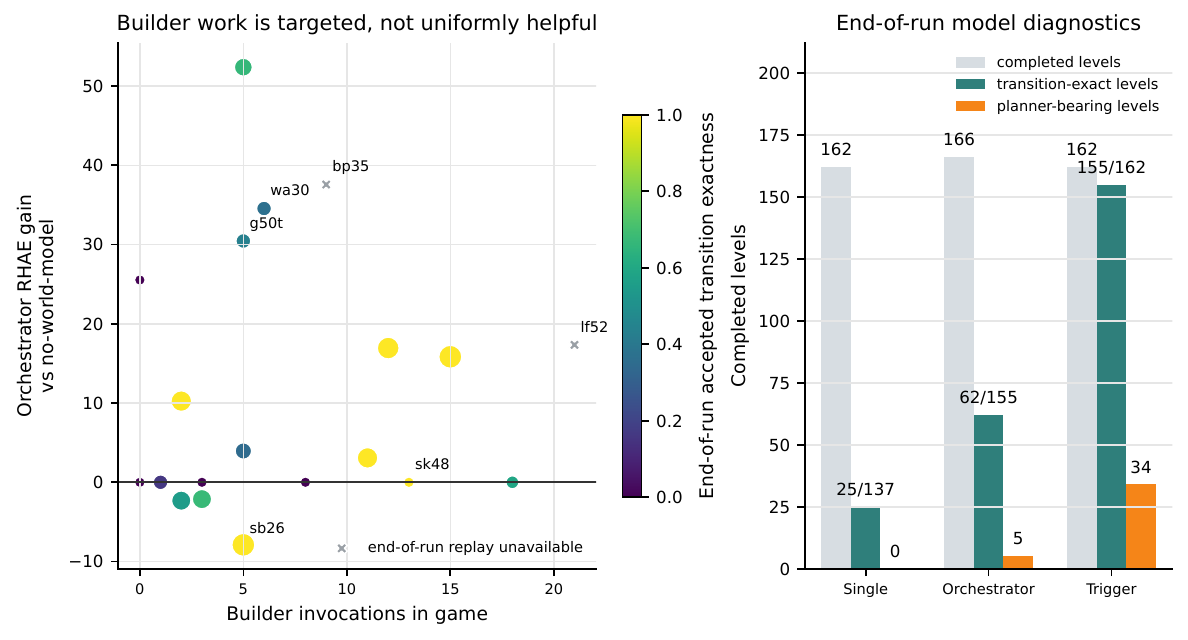}
\caption{World-model diagnostics for the matched Opus runs.  Left: builder calls versus orchestrator's
RHAE gain over direct reasoning; color encodes end-of-run accepted transition match, marker size
transition-exact levels, and crosses unavailable end-of-run replays.  Right: completed, transition-exact, and planner-bearing
levels for each model-using policy.}
\label{fig:wm-diagnostics}
\end{figure}

\paragraph{Hypothesis verdict.}
The evidence supports the three hypotheses with different strength.  First, the observed matched runs are
consistent with a benefit from selective executable-model use.  Single and orchestrator score above direct
reasoning, whereas automatic trigger repair spends more inference without attaining the best score.  Second,
accurate transition prediction is insufficient for strong gameplay.  Terminal-outcome recall remains
weaker, \code{sk48} models mechanics without the objective, and trigger combines the highest accepted
transition match with lower RHAE.  Third, the case studies illustrate how the free-form contract accommodates
task-specific combinations of representational devices, while
automatic repair trades additional builder calls and inference for better transition match.  The GPT result
provides evidence of cross-family transfer of the selected policy.  Opus~5 reaches the same public-set
score with fewer actions.

\FloatBarrier

\subsection{Executable Models in Use}
\label{sec:case-studies}

Aggregate diagnostics do not show what an executable model contributes within a trajectory.  We therefore
inspect two late-level successes with complementary state representations, one failure in which model
construction became counterproductive, and one transfer-run episode that exposes repair and replanning.
To illustrate exact replay and model-based planning, we use \code{cd82}, where the canonical renders
match all recorded graded transitions exactly and the model predicts the level-completion outcome.
We then deliberately examine two different levels of \code{tu93} to show how an executable model can support search with incomplete rendering and be locally repaired when a previously unobserved interaction reveals a missing transition.
Finally, \code{sk48} is used to show that correct mechanics are insufficient when the objective remains unresolved.

\paragraph{Exact Replay and Planning with an Executable Model.}
In the matched Opus~4.8 Orchestrator run, \code{cd82} asks the agent to reproduce a target
pattern by moving one of two paint droppers over a canvas and selecting colors.  The world model
represents the canvas at its logical resolution, the droppers and their orientations, the selected color,
and the target pattern.  Its transition program composes the different paint operations, including
rotated droppers and overlapping colors.  Before acting in level~4, the model's canonical renders strictly
matched all 38 graded transitions from levels~0--3, rendered the new initial state exactly, and simulated a 13-action
program to the goal.  The actor accepted the first recommendation and then executed the entire sequence,
completing the level on its first attempt.  \Cref{fig:case-cd82} shows four intermediate simulated states and
reports their agreement with the corresponding observations.  The case illustrates how an executable model can replace trial and
error with a verified sequence.

\begin{figure}[!htbp]
  \centering
  \includegraphics[width=\linewidth]{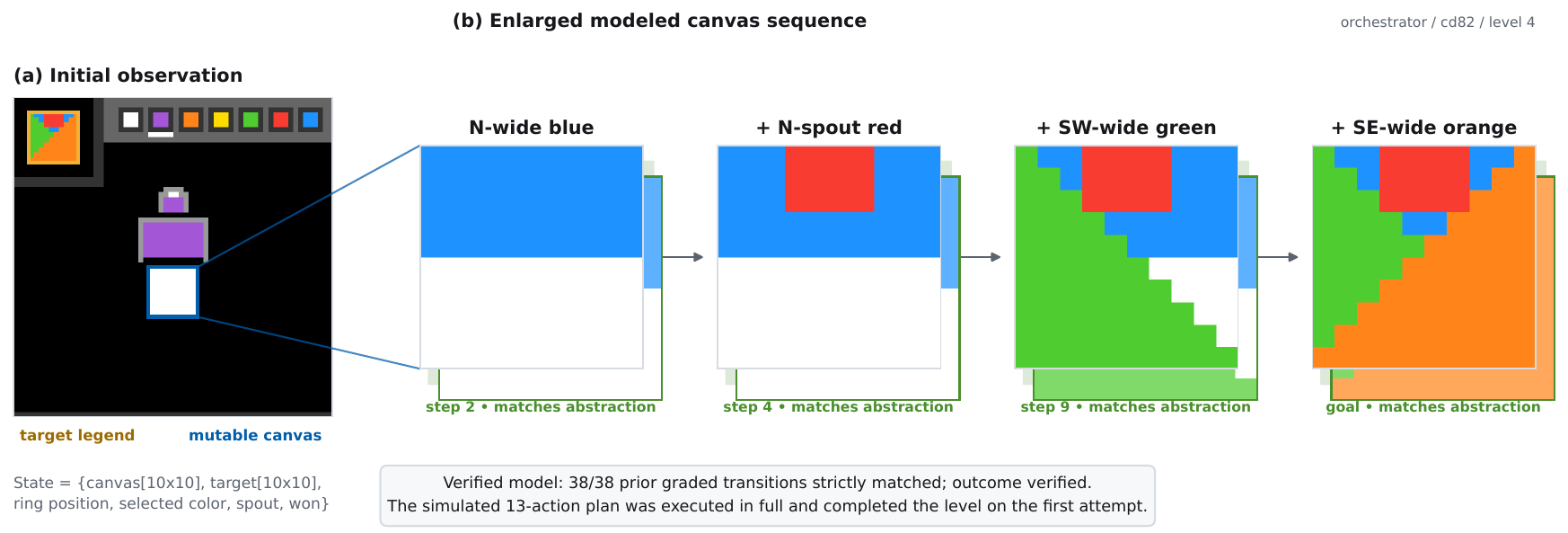}
  \caption{Planning with an Executable Model in \code{cd82}, level~4.  From left to right, the initial
  observation identifies the target and mutable canvas, the canvas is enlarged, and successive modeled
  states apply the planned actions until the final state reaches the target and returns
  \texttt{level\_complete}.  The diagonally offset cards behind the modeled states show the corresponding
  observed crops, with green shading marking their agreement.}
  \label{fig:case-cd82}
\end{figure}

\paragraph{Search over a partially rendered state.}
In the matched Opus~4.8 Single run, level~6 of \code{tu93} uses a different abstraction.  The
$64\times64$ rendered grid encodes a $9\times13$ room graph because each logical cell occupies a block
of pixels.  The modeled state tracks passability, player and goal positions, facing, hazards, move
budget, and death.  Following the partial-prediction contract in
\Cref{sec:partial-prediction}, the renderer abstains on the uncertain remaining-moves HUD row, yielding
$100\%$ accepted transition match at $98\%$ prediction coverage.  Predicting the HUD is unnecessary
for the route.  Over this state, A* expands 189 nodes and returns a 13-action plan that deliberately
removes a red hazard before reaching the goal.  The actor follows the recommended first action, and the
subsequent recorded trajectory matches the plan through completion (\Cref{fig:case-tu93}).  Thus a reduced-state
model can support reliable search without claiming every display pixel.  Tycho still invokes a language
model around every committed action.  A guarded executor could instead carry out several verified actions
and interrupt on a prediction mismatch, as discussed in \Cref{sec:future-work}.

\paragraph{Repair after a previously unobserved interaction.}
In the GPT-5.6 Sol Orchestrator transfer run, level~8 of \code{tu93} combines moving orange patrols with
two directional red hazards and a trail-following chaser.  The model had explained all prior evidence and
predicted that the player could enter an orange patrol's cell because the patrol would move away.  The
action was safe, but for a different reason:
animation frames show the perpendicular contact expanding the patrol into an explosion ring and removing
it before the other entities move (\Cref{fig:case-tu93-repair}a--b).  The resulting frame therefore
contradicted the assumed patrol transition while revealing a more general combat rule.

After the builder encoded perpendicular patrol removal, the model's canonical renders strictly matched
all 168 graded transitions, including all 17 from the current level.  From the corrected turn-17 state, exact-state BFS
visited 33 states and validated a shortest 12-action continuation.  It removes the two red hazards in
sequence and then enters the green goal, whose completion rule had already been verified on levels~0--7.
The actor took the recommended first action and subsequently followed the full route to the recorded
completion (\Cref{fig:case-tu93-repair}c--d).  Here verification does more than score the model: it
localizes a mismatch, after which the repaired state can immediately support planning.

\begin{figure}[!htbp]
  \centering
  \includegraphics[width=\linewidth]{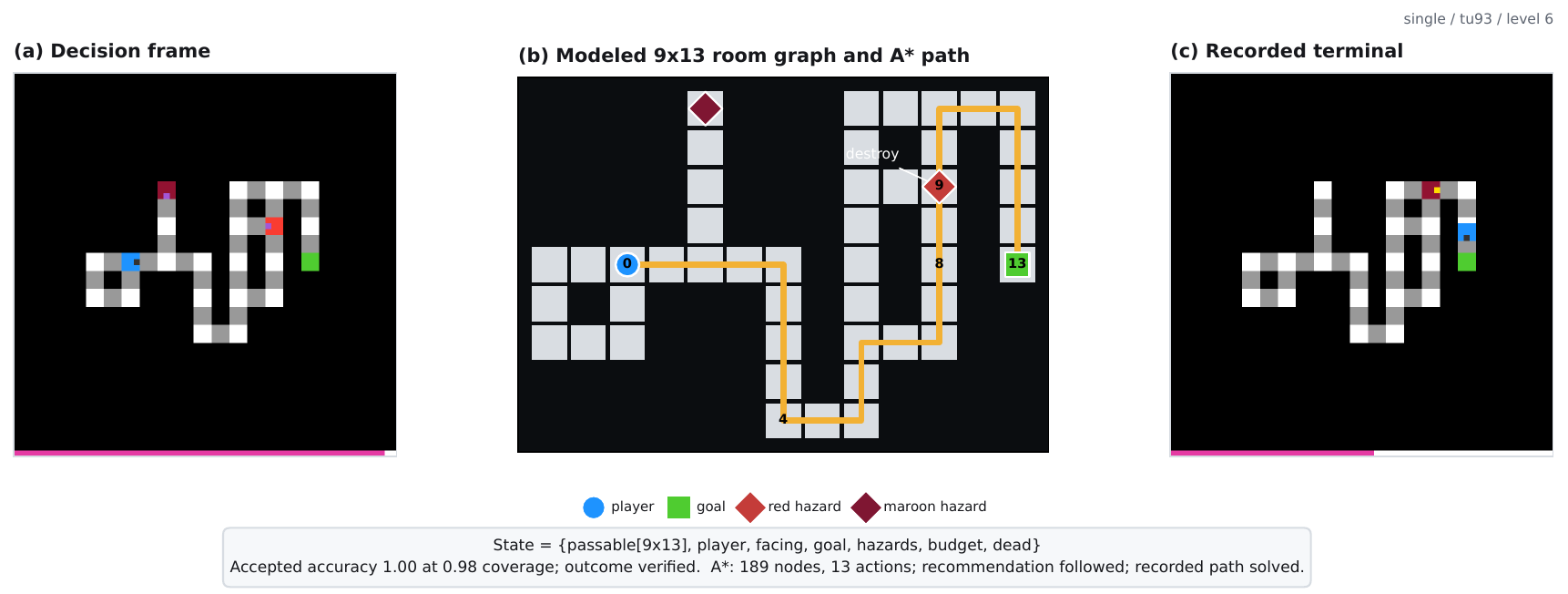}
  \caption{Planning over a reduced state representation in \code{tu93}, level~6.  The $64\times64$ observation
  is parsed into a $9\times13$ room graph in which diamonds mark hazards and the gold line is the
  13-action A* plan.  The plan removes the red hazard on its way to the goal, and the recorded trajectory
  follows it to completion.}
  \label{fig:case-tu93}
\end{figure}

\FloatBarrier
\begin{figure}[!htbp]
  \centering
  \includegraphics[width=\linewidth]{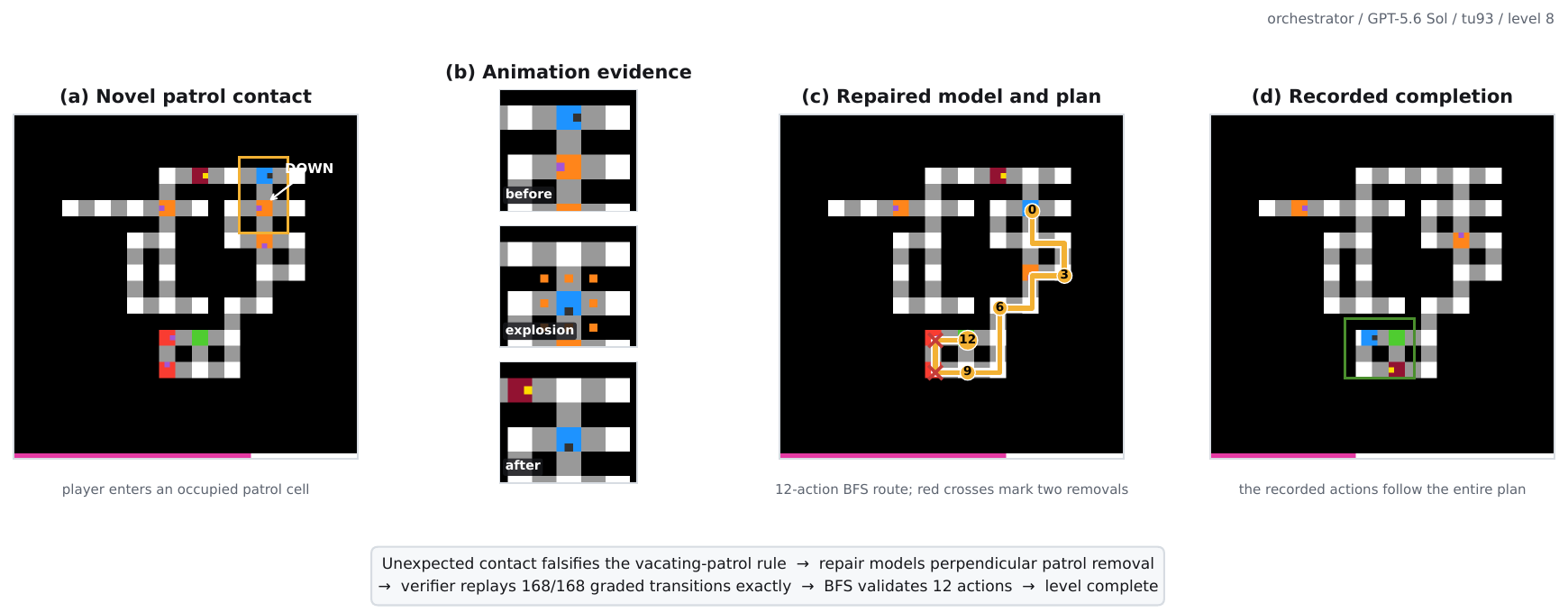}
  \caption{Transition mismatch, repair, and replanning in \code{tu93}, level~8.  The first observed player--patrol contact
  produces an observed explosion and removal.  The repaired model replays 168 graded transitions exactly,
  then validates the 12-action route that the recorded run follows to completion.}
  \label{fig:case-tu93-repair}
\end{figure}

\FloatBarrier

\paragraph{Correct mechanics, unresolved objective.}
The matched Opus~4.8 Orchestrator run on \code{sk48} demonstrates the opposite regime.  The builder
inferred a task-specific gripper-and-block simulator and repeatedly reported exact reproduction of observed
transitions, but no terminal had revealed the objective.  The actor and builder consequently treated the
pictorial HUD as a sequence of candidate goal predicates: a horizontal color train, a vertical dock, and
several variants that included gripper position or an attached arm.  Each became reachable in the
simulator, produced a long plan, but executing the plans did not produce the predicted outcomes.  Thirteen builder calls and
314 official scored actions completed no level, whereas the trigger policy completed all eight levels of the
same game in this run.  The case shows how an accurate transition model can amplify a wrong or ungrounded
objective.

\begin{figure}[!htbp]
  \centering
  \includegraphics[width=\linewidth]{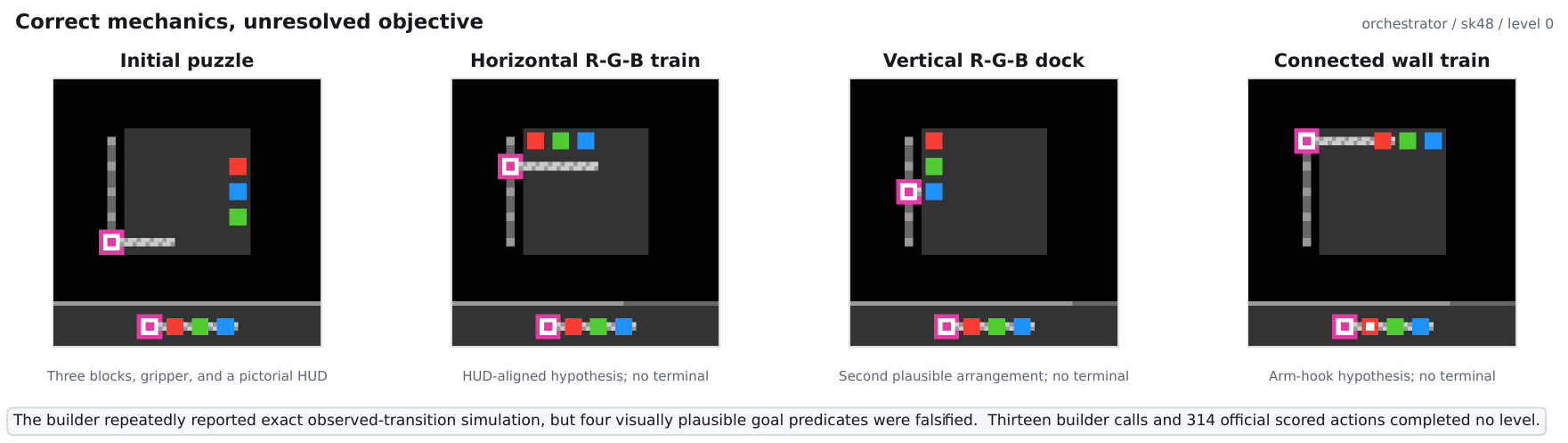}
  \caption{Outcome-identification failure in \code{sk48}, level~0.  The initial puzzle is followed by
  three simulator-reachable configurations suggested by the HUD; all remained nonterminal in the
  environment despite accurate modeled dynamics.}
  \label{fig:case-sk48}
\end{figure}

Together, the cases show both the flexibility and the risk of the model interface.  One model preserves a
nearly pixel-level UI state.  Another discards display resolution in favor of a hazard-aware graph.  The
transfer episode repairs a previously unobserved interaction, while the failure case models mechanics
without identifying the target.  What is shared is not a fixed object decomposition, but an executable
contract whose transition, outcome, and search components must be evaluated separately.

\FloatBarrier

\subsection{Behavioral Evidence of Task Simulation}
\label{sec:gpt-adaptation-cases}

A high score demonstrates acquired skill, but not by itself the process that produced it.  Chollet's
definition emphasizes skill-acquisition efficiency: how effectively a system turns
limited experience and prior knowledge into new competence \citep{chollet2019measure}.  ARC-AGI-3
operationalizes part of this idea through exploration, goal inference, dynamics learning, planning, and
action efficiency \citep{arcprize2026arcagi3}.  To look beyond aggregate scores, the ARC Prize Foundation
audited 160 frontier-model replays and traces against human-written game strategies.  The audit identified
three recurring failures: recognizing a local effect without deriving a global rule, importing the wrong
abstraction from a familiar game, and clearing a level without learning a transferable mechanic
\citep{arcprize2026failureanalysis}.  We use these distinctions to inspect the reported GPT-5.6 Sol
Orchestrator run based on the model-authored
persistent notes, executable programs and plans, recorded actions, and subsequent observations.
We select \code{tr87} to examine how mechanisms established in earlier levels are integrated into one
executable rule system, and \code{ka59} to show how useful actions can test uncertain mechanics and how
deliberation can precede execution.

\paragraph{Composing an executable rule system.}
\code{tr87} develops a visual rewrite language over six levels.  Its upper panels encode rules, while its
lower panels show a query and target.  Actions move a cursor between panels or cycle every glyph in the
selected panel through a seven-state visual alphabet.  Levels~1--3 establish the alphabets and 
variable-length rules: one query symbol can be represented by multiple target symbols and vice versa.  Level~4 first requires a fixed two-stage translation from cyan through pink to yellow, with only the answer reels editable.  Level~5 changes the interaction: the rule panels themselves become editable, and every glyph in a multi-glyph panel cycles in lockstep.

The builder handled these changes by repeatedly revising one persistent per-game program, not by
maintaining separate models for different levels.  The Level~4 revision added color-directed multi-hop
translation, while Level~5 added editable-panel state and lockstep transitions.  The final initializer
infers the applicable layout from the current grid rather than branching on the level index.

Level~6 combines these previously separate demands.  Each of its three rows contains one cyan-to-pink
rule and one pink-to-yellow rule, giving six rules represented by 12 source and output panels
(\Cref{fig:case-tr87-adaptation}a).  The agent must edit the rules so that translating the cyan query
produces a pink string which, in turn, translates to the fixed yellow target.  Its executable model parses
variable-length rules, predicts the intermediate pink string in \Cref{fig:case-tr87-adaptation}b, and
searches over cyclic shifts of the 12 panels.  The selected assignment requires 13 panel edits and 11
cursor moves.  The recorded edits follow this assignment and the level terminates on
action~24 (\Cref{fig:case-tr87-adaptation}c), compared with a 146-action human baseline.  This is
within-game compositional transfer: no new control action is introduced, but the agent must combine the
two-stage inference established in Level~4 with the editable-panel dynamics established in Level~5.

\begin{figure}[!htbp]
  \centering
  \includegraphics[width=\linewidth]{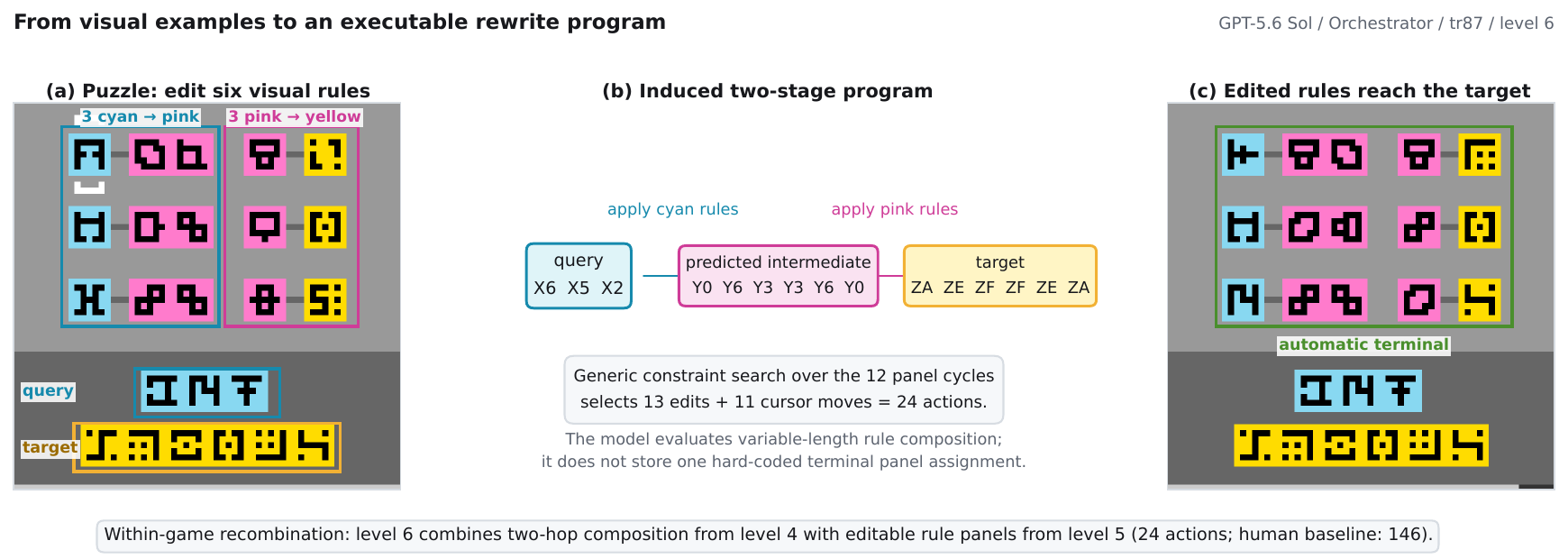}
  \caption{Within-game recombination in \code{tr87}, level~6.  Earlier levels separately introduce
  two-stage composition and editable rule panels; level~6 combines them.  The agent predicts the
  intermediate string and searches for a minimum-cost panel assignment, completing the level in 24
  actions.}
  \label{fig:case-tr87-adaptation}
\end{figure}

\paragraph{Turning a useful action into a test.}
\code{ka59} asks the agent to place colored pieces into matching sockets.  Later levels add mechanisms (pistons) in dark-red color
that periodically extend as actions advance, while purple bands block ordinary movement. 
The extension can be used for accelerated movement to bypass obstacles.  
Level~6 contains two ceiling-mounted pistons and one floor-mounted piston that can be moved between shafts
(\Cref{fig:case-ka59-adaptation}a).  Under the inherited assumption that every piston extends downward,
exhaustive search over all 6,888 reachable modeled states finds no goal.

The placement of the pistons suggests a different rule: a piston extends away from its supporting surface.  The first
action of the resulting shortest plan is also a discriminating test.  It advances the phase and reveals
orange on the top edge of both ceiling pistons but the bottom edge of the floor piston, as predicted
(\Cref{fig:case-ka59-adaptation}b).  The notes correctly preserve what this frame does \emph{not} establish:
orientation is observed, but upward transport remains a hypothesis.  Action~12 supplies the decisive
test by carrying the green token through the purple band to its predicted cell
(\Cref{fig:case-ka59-adaptation}c).  The remaining plan executes without correction and fills both sockets
in 45 actions, versus the 132-action baseline (\Cref{fig:case-ka59-adaptation}d).

\begin{figure}[!htbp]
  \centering
  \includegraphics[width=\linewidth]{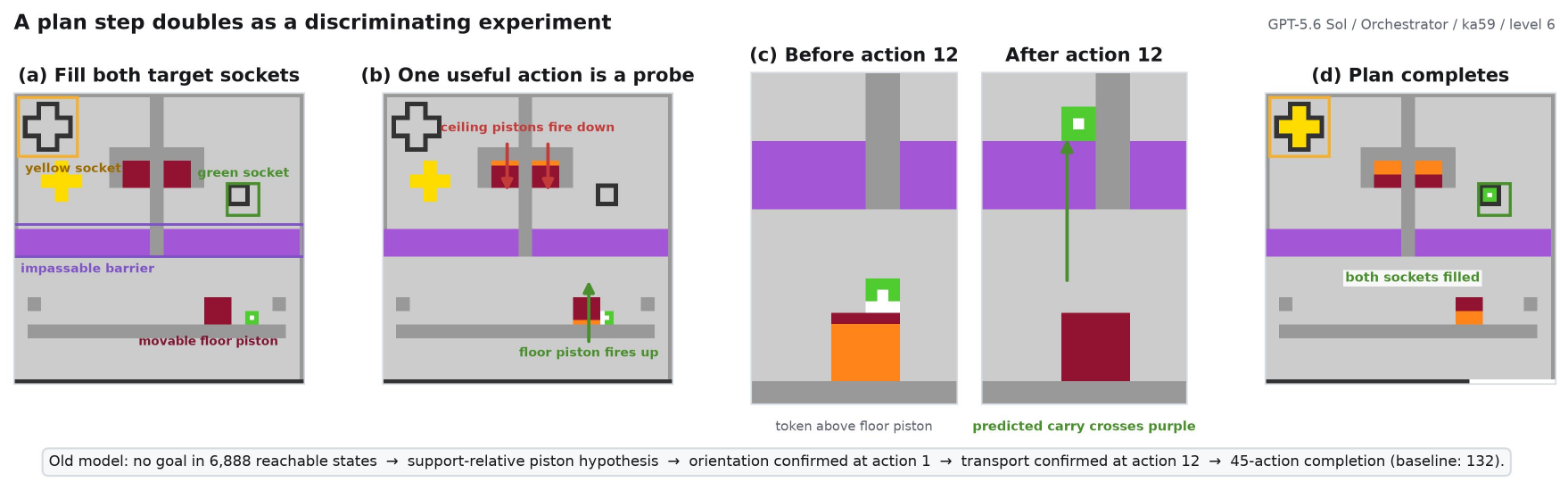}
  \caption{Hypothesis testing through useful actions in \code{ka59}, level~6.  One plan step distinguishes
  support-relative piston orientation from the inherited model; a later step tests the still-open
  transport hypothesis.  Both predictions are confirmed before the plan completes.}
  \label{fig:case-ka59-adaptation}
\end{figure}

\paragraph{Deliberation before execution.}
The final level of \code{ka59} shows a different separation between inference and action.
A prolonged actor--builder phase committed only three environment actions.  During it, the builder
corrected the transition rule for selection clicks, recovered relevant pushing and beam-transport
evidence from earlier levels, and validated an end-to-end route.  The actor then completed the level
through the remaining 88 actions in rapid succession, without another builder call.  This was not
batched control: actions were still issued through ordinary turns, but now followed the established
route with state checks and minor adjustments.  The trace therefore exposes substantial pre-action
computation that an action-based efficiency score does not measure.

Together, these traces show reusable abstraction, informative intervention, hypothesis revision,
cross-level transfer, and efficient execution once a useful model is found.  The evidence concerns the
coupled system---frontier model, Tycho harness, executable workspace, and search---on public games; it does
not separate the model's prior knowledge from online acquisition.  Its value is diagnostic: the retained
intermediate artifacts connect successful behavior to the tests that preceded it.

\FloatBarrier

\renewcommand{\topfraction}{0.70}
\renewcommand{\bottomfraction}{0.30}
\renewcommand{\textfraction}{0.20}
\renewcommand{\floatpagefraction}{0.50}

\section{From Task Adaptation in ARC-AGI-3 to Agent Adaptation: Learning from Limited Feedback}
\label{sec:agent-adaptation}

General-purpose agents must adapt to new tasks from limited task-specific evidence rather than depend on
data-heavy training for each new problem, a requirement central to skill-acquisition efficiency
\citep{chollet2019measure}.  ARC-AGI-3 makes this requirement concrete: an agent must identify hidden
mechanics and objectives through bounded interaction, while exploratory actions consume the same budget
used to complete the task.  Tycho studies one response to this constraint: constructing explicit, testable
task models during inference and using them to guide information gathering, planning, and action.

The same principle applies one level higher.  A run may expose limitations not only in its task model but
in the harness governing what the model can observe, retain, test, and execute.  Tycho records traces,
metrics, and meta-reflections to derive candidate changes for future runs.  This is adaptation from limited
evidence directed at the agent architecture rather than the task.

\begin{figure}[t]
\centering
\resizebox{0.98\linewidth}{!}{%
\begin{tikzpicture}[
  font=\small,
  box/.style={draw=NavyBlue!75, fill=NavyBlue!5, rounded corners=2pt, very thick,
    align=center, inner sep=5pt, text width=3.25cm, minimum height=1.22cm},
  modelbox/.style={draw=BrickRed!75, fill=BrickRed!5, rounded corners=2pt, very thick,
    align=center, inner sep=5pt, text width=3.25cm, minimum height=1.22cm},
  llmbox/.style={draw=black!65, fill=black!4, rounded corners=2pt, very thick,
    align=center, inner sep=5pt, text width=3.25cm, minimum height=1.22cm},
  harnessbox/.style={draw=NavyBlue!70!black, fill=NavyBlue!6, rounded corners=2pt, thick,
    align=center, inner sep=5pt, text width=3.25cm, minimum height=1.22cm},
  outerbox/.style={draw=ForestGreen!70!black, fill=ForestGreen!6, rounded corners=2pt, thick,
    align=center, inner sep=5pt, text width=3.25cm, minimum width=3.65cm, minimum height=1.65cm},
  gatebox/.style={draw=black!55, fill=black!4, rounded corners=2pt, thick,
    align=center, inner sep=5pt, text width=3.25cm, minimum width=3.65cm, minimum height=1.65cm},
  reflectbox/.style={draw=RoyalPurple!75, fill=RoyalPurple!6, rounded corners=2pt, thick,
    align=center, inner sep=5pt, text width=3.25cm, minimum height=1.22cm},
  arrow/.style={-{Latex[length=2.3mm]}, thick, draw=black!72},
  botharrow/.style={{Latex[length=2.3mm]}-{Latex[length=2.3mm]}, thick, draw=black!72},
  outerarrow/.style={-{Latex[length=2.3mm]}, thick, draw=ForestGreen!55!black},
  dashedarrow/.style={-{Latex[length=2.3mm]}, thick, dashed, draw=black!55}
]
  \node[llmbox] (llm) at (0,0) {\textbf{Foundation model}\\reasoning, tools;\\no task fine-tuning};
  \node[harnessbox] (harness) at (4.45,0) {\textbf{Frozen harness}\\tools, memory;\\verifiers};

  \node[box] (observe) at (0,-2.75) {\textbf{Observe}\\task state;\\feedback};
  \node[modelbox] (model) at (4.45,-2.75) {\textbf{World model}\\state, dynamics;\\objectives};
  \node[box] (plan) at (8.9,-2.75) {\textbf{Plan/probe}\\model-based\\search};
  \node[box] (act) at (13.35,-2.75) {\textbf{Act}\\environment\\step};
  \node[reflectbox] (reflect) at (4.45,-5.35) {\textbf{Meta-reflection}\\harness-aware;\\development note};

  \draw[botharrow] (llm) -- (harness);
  \draw[arrow] (harness.south) -- ++(0,-0.48) -| (observe.north);
  \draw[arrow] (observe) -- (model);
  \draw[arrow] (model) -- (plan);
  \draw[arrow] (plan) -- (act);
  \draw[arrow] (act.south) -- ++(0,-0.55) -| ([yshift=-0.55cm]observe.south) -- (observe.south);
  \draw[dashedarrow, draw=RoyalPurple!75!black, line width=0.9pt]
    (llm.south east) -- ++(0.22,0) |- ([xshift=-0.25cm]reflect.west) -- (reflect.west);

  \begin{scope}[on background layer]
    \node[draw=NavyBlue!45, fill=NavyBlue!2, rounded corners=4pt, inner sep=11pt,
      fit=(llm)(harness)(observe)(model)(plan)(act)(reflect), label={[font=\bfseries\small, text=NavyBlue!80!black]above:Frozen task-time agent run}] (innerfit) {};
  \end{scope}

  \node[outerbox] (artifacts) at (0.55,-8.05) {\textbf{Experience record}\\traces, metrics;\\meta-reflections};
  \node[outerbox] (redesign) at (4.8,-8.05) {\textbf{Generalize}\\failure mechanism;\\minimal change};
  \node[outerbox] (candidate) at (9.05,-8.05) {\textbf{Candidate agent}\\revised harness;\\future runs};
  \node[gatebox] (holdout) at (13.3,-8.05) {\textbf{Validation}\\transfers? clean?\\worthwhile?};

  \coordinate (runout) at (artifacts |- innerfit.south);
  \draw[dashedarrow] (runout) -- (artifacts.north);
  \draw[dashedarrow, draw=RoyalPurple!75!black, line width=0.9pt]
    (reflect.south) -- (artifacts.north east);
  \draw[outerarrow] (artifacts) -- (redesign);
  \draw[outerarrow] (redesign) -- (candidate);
  \draw[outerarrow] (candidate) -- (holdout);
  \draw[outerarrow] (holdout.east) -- ++(0.6,0) |- ([xshift=0.6cm]harness.east) -- (harness.east);
\end{tikzpicture}%
}
\caption{Two coupled adaptation loops.  During a frozen task run, the model and harness turn observations
into executable hypotheses, probes or plans, and actions.  The foundation model generates role-specific
meta-reflections from its harness-mediated context, and the harness stores these development notes with
traces and metrics in the experience record.  They do not affect the current trajectory.  Between runs,
those records motivate minimal harness changes whose transfer and integrity are checked before they enter
future agents.}
\label{fig:agent-adaptation}
\end{figure}

\Cref{fig:agent-adaptation} presents the same adaptation pattern at two scales.  Each loop turns limited
feedback into a testable hypothesis: the inner loop models the task, whereas the outer loop models a
limitation of the agent architecture.  Each then tests the hypothesis through an action or system change
and uses new evidence to retain, revise, or reject it.  Freezing the harness within a run separates the two
timescales: task adaptation can affect the current trajectory, while agent adaptation affects only future
runs.

\paragraph{The agent extends beyond the foundation model.}
An interactive agent couples a foundation model to an observation interface, memory, tools, action
protocols, verification, and recovery.  Changing the scaffolding or agent--computer interface can change
behavior with the model held fixed \citep{sghaier2026scaffolding,yang2024sweagent}.  System evaluations
must therefore report the architecture and protocol alongside the score
\citep{kapoor2024agentsmatter,zhang2026harnessdisclosure}.  Tycho is instantiated on ARC-AGI-3, but
follows a broader methodology: preserve experience, turn it into explicit hypotheses that can be tested,
and use those hypotheses selectively for action.  We therefore evaluate the compound agent rather than
the foundation model in isolation.  The official scorecard reports $1.5$ RHAE for
Opus~4.8,\footnote{\url{https://arcprize.org/scorecards/model/anthropic-opus-4-8-high}} whereas Tycho's
orchestrator reaches $88.49$ with the same foundation model.  This is not a controlled estimate of any
single component, but it shows how strongly system architecture can alter behavior.  The same methodology
can then be applied one level higher: failures in the harness become hypotheses for agent adaptation.
\FloatBarrier

\paragraph{Meta-reflection turns trajectories into candidate system changes.}
The actor and builder system prompts include a development-aid instruction inviting each role to record
one or two concrete sources of friction, such as a confusing instruction, unhelpful feedback, a missing
affordance, or a workspace obstacle.  Roles often provide these optional notes.  We call this
meta-reflection because its object is not the game but the system through which the game is approached.
The notes are written inside the run but read only afterward, so they cannot alter its trajectory or score.
They serve as compact hypotheses about system limitations that can be checked against the associated
trace.

Across development runs, meta-reflections and trace review exposed recurring limitations in Tycho.
Representative examples included low-resolution renders, missing level-boundary frames, file-operation
caps, unavailable Python libraries, incorrect reset semantics, missing verification and planning metrics,
imprecise diagnostic reports, and insufficient isolation from network or external-filesystem access.  This
list is illustrative rather than exhaustive.  None of these observations described an individual game's
mechanics.  Instead, they identified transferable deficiencies in how future agents could observe, retain,
verify, report, and act.

Related work uses trajectory feedback to revise language-model programs \citep{agrawal2025gepa}, or
searches and repairs agent architectures \citep{hu2024adas,chen2026failedtrajectories}.  In Tycho, we still
mediate this outer loop.  We cluster notes across runs, inspect the corresponding traces, and filter the
feedback for generality.  Requests for additional task hints, for example, are not treated as candidate
improvements.  For recurring issues, we infer a general failure mechanism and implement a minimal
candidate change.  We reject changes that leak task information, lack a transferable mechanism, or add
unjustified complexity, and test the remaining candidates on development games before freezing the
architecture.  This process turns local friction into an auditable architectural hypothesis and
distinguishes agent adaptation from benchmark-specific patching.

\paragraph{World models make limited experience reusable.}
The inner loop requires a corresponding abstraction over task experience.  A useful world model turns a
trajectory into a prospective computational object: it preserves task-relevant state, predicts relevant
consequences, exposes uncertainty, and supports testing candidate actions before executing them in the
environment.  ARC-AGI-3 makes Python programs and deterministic Moore machines a natural choice because
its observations and actions are discrete and exact.  The broader claim is not that useful world models
must be symbolic programs or exact simulators.  It is that an agent can make limited experience reusable
by organizing it into a predictive representation for information gathering, planning, and action.

The functional requirement is representation-agnostic.  Programmatic models can extend beyond
deterministic grid worlds \citep{piriyakulkij2025poeworld}, while object-centric states
\citep{locatello2020objectcentric}, causal models \citep{scholkopf2021toward}, and learned latent dynamics
provide alternative formalisms.  MuZero further shows that a planning model need only predict
action-relevant quantities rather than complete observations \citep{schrittwieser2020mastering}.  Any such
model must organize history for prospective action selection: what is known, what remains uncertain, which
action would resolve it, and which plan should work if the model is right.  At greater generality, choosing
the representation becomes part of agent adaptation itself.  An unfamiliar domain may call for a
symbolic simulator, belief state, relational graph, learned neural dynamics model, hybrid representation,
or no explicit model at all.
\looseness=-1

\paragraph{Toward closing the outer loop.}
Limited feedback makes audit and validation more important: a single trajectory can support a
plausible but spurious diagnosis.  Replayable traces, verifier metrics, and explicit artifacts make proposed
improvements inspectable.  Separate validation can then test whether changes transfer, remain
uncontaminated, and justify their added complexity.  In Tycho, task-time adaptation occurs through
inference-time hypotheses, tools, and memory rather than task-specific fine-tuning.  The outer loop remains
human-mediated: the actor and builder generate diagnostic notes, while we select, refine, implement, and
validate the resulting changes.

The next step is to automate these stages without relaxing their controls: compare evidence across runs,
formulate recurring failure mechanisms, implement candidate changes in isolation, and test them on held-out
tasks.  Our broader hypothesis is architectural: generality depends not only on the pretrained model, but
also on machinery that can organize limited experience into task models and revise the system supporting
that process.  Under a strict standard for general intelligence, an agent should eventually construct or
adapt whatever harness and model formalism a novel problem requires rather than depend on a human engineer
to do so.  More capable models should require fewer externally engineered iterations and progressively
close this outer loop themselves.

\section{Limitations}
\label{sec:limitations}

\paragraph{Empirical scope.}
We evaluate one stochastic run of each reported configuration on the 25 public games (183 levels), five
of which informed harness development.  These results therefore characterize performance on a fixed
public testbed, not generalization to unseen games or run-to-run reliability.  Benchmark exposure also
cannot be excluded: three games were public before the full 2026 release, and provider cutoff dates do not
constitute a benchmark-specific contamination audit.\footnote{The full public set
\href{https://arcprize.org/blog/arc-agi-3-launch}{launched on March 25, 2026}, but \code{ft09},
\code{ls20}, and \code{vc33} had been public since the
\href{https://arcprize.org/blog/arc-agi-3-preview-30-day-learnings}{July 2025 developer preview}.
The reported cutoffs for Opus~4.8 (January 2026) and
GPT-5.6 Sol (February 16, 2026) predate the full release but postdate the three preview games; Opus~5 has
a May 2026 cutoff, after the full release \citep{anthropicmodels2026,openaigpt56sol2026}.}  A strict
generalization test requires a frozen-harness evaluation on unseen games.  Estimating generation
variability additionally requires repeated runs of each policy, which are computationally and financially
costly.

\paragraph{Benchmark-specific priors.}
Tycho is not benchmark-agnostic. 
However, its task contract, belief-memory schema, action interface, and modeling tools encode ARC-AGI-3--specific structural priors.
The observations are $64\times64$ grids over 16 colors, and the games comprise multiple levels that generally increase in complexity and share mechanics.
The framework supplies conventions for RESET and numbered actions.
Action efficiency is the benchmark objective, and the levels are assumed to be human-solvable through manageable interaction.
HUD or interface cells are interpreted using possible categories such as budget lives, timers, selected tools, progress indicators, validation lights, and status bars.  
These priors do not reveal any individual game's mechanics, objective, or solution, but they narrow the interpretation and modeling problem and would require revision for transfer to other domains.

\paragraph{Agent adaptation remains human-mediated.}
The initial creation of Tycho and the reflection-based changes described in
Section~\ref{sec:agent-adaptation} are author-steered.  The actor and builder produce diagnostic notes,
and LLMs assist development, but the authors select, refine, implement, and validate the resulting harness
changes.  These results therefore demonstrate a human-mediated adaptation process, not autonomous harness
construction or self-improvement.  A stronger test would require the system itself to select, implement,
and validate generic harness changes from limited feedback.

\paragraph{Human comparisons are reference distributions.}
The released human data contain independent first-run replays rather than stable participant identities
across games.  Our game-balanced statistic therefore compares a fixed agent trajectory with a random
replay on the same public game; it does not rank one agent against a population of 25-game human
participants.  Humans also did not receive the iterative public-set development process available to the
harness.

\paragraph{Action efficiency excludes inference cost.}
ARC-AGI-3 uses action count as a common proxy for resources including data, time, compute, and risk, while
internal reasoning and tool operations are not scored \citep{arcprize2026arcagi3}.  Opus~5 uses $61\%$
fewer environment actions than the aggregate human baselines, but the run still entails 15.1k model calls
and an estimated \$2.99k in API-equivalent inference cost across the public set.  The comparison therefore
establishes efficient environment interaction, not lower total resource use: Tycho trades substantial
unscored inference for fewer scored actions.  For an order-of-magnitude comparison, integrating the
brain's estimated 20\,W metabolic power over the median successful human replay for each public game
yields roughly 0.1\,kWh \citep{arcprize2026human,yu2018brainenergy}.  Applying published estimates of
0.24\,Wh for a production text request and 3.91\,Wh for a long test-time-scaled reasoning request to
Opus~5's 15.1k calls yields a 4--60\,kWh envelope
\citep{elsworth2025googleenergy,oviedo2026inferenceenergy}.  These proxies place the run at roughly
$30$--$600\times$ the human brain-energy estimate, with $10^2$ as the central order of magnitude.
This projection compares inference with brain metabolism rather than lifecycle energy.

\paragraph{Inference latency and service reliability affect evaluation.}
Individual model calls can vary from seconds to tens of minutes during extended reasoning, while hosted
inference services can throttle, time out, or fail transiently.  Evaluation therefore depends on
operational choices such as per-call and per-game timeouts, retry and backoff policies, and durable
checkpointing for resumption.  Aggressive limits can truncate useful reasoning; permissive limits can
leave games stalled or exhaust the evaluation budget.  These choices can affect completion and
reproducibility even when the agent policy is unchanged.

\paragraph{Model and evidence limitations.}
Programmatic models can be verbose, brittle, and costly when direct spatial reasoning would suffice.
Agreement with observed transitions also leaves off-trajectory behavior unidentified, while sparse
terminal events make objective inference harder than dynamics fitting.  Tycho therefore treats replay
consistency as evidence rather than proof and relies on discriminating probes, cross-level transfer,
planned-rollout tests, and separate measurements of action and language-model cost.

\section{Conclusion and Future Work}
\label{sec:conclusion}
\label{sec:future-work}

Tycho formalizes an ARC-AGI-3 game as a rendered deterministic Moore machine rather than a stream of
screenshots.  State and action determine transitions; states produce observations, available actions, and
outcomes; and transient and boundary evidence remain distinct.  The resulting evidence contract lets an
executable hypothesis be checked against experience, used for planning, and left partial where evidence
does not support a prediction.

The experiments yield three general conclusions.  First, executable modeling is useful selectively:
direct reasoning is already strong, while actor-requested delegation has the strongest observed
performance.  Second, transition accuracy and decision quality are distinct: automatic repair produces
the highest accepted transition match without the strongest gameplay because objective inference, model allocation, and
effective use of advice remain separate problems.  Third, a useful abstraction need not reconstruct the
privileged engine or every pixel; multiple task-specific representations and partial models can support
successful action when they preserve decision-relevant distinctions along a proposed route.  This is
\emph{active abstraction}: deciding whether, when, and how to model is part of the control problem, and a
model is valuable when it improves evidence gathering, planning, or action.

The decisive next test is frozen-harness generalization to environments unavailable during development,
including newly released ARC-AGI-3 games, independently authored rendered environments, and other
interactive domains.  Repeated generations and comparisons across frontier and open-weight models can
separate stable system properties from model-specific behavior.  Technical priorities include explicit
uncertainty, better allocation between reasoning and model construction, guarded reuse of verified plans,
and lower inference cost.  More fundamentally, meta-reflection should become an agent capability:
proposing, implementing, and validating harness changes from limited feedback while preserving
independent checks.

\paragraph{A note on timing.}
The final weeks of this work coincided with an exceptional pace of change in ARC-AGI-3: new systems and
LLMs appeared, and the state of the art shifted repeatedly.  This changing empirical context led us to
revisit comparisons, analyses, and experimental priorities several times while finalizing the manuscript.
We chose to prioritize methodological quality over release speed, accepting that the paper would enter the public discussion at a later point.

\bibliographystyle{abbrvnat}
\bibliography{arxiv/refs}

\appendix

\section{Illustration of Tools and Prompt Excerpts}
\label{app:interface}

This section gives the complete tool surface and selected passages from the actor and builder system
prompts in the GPT-5.6 Sol orchestrator run.  The public release fixes the corresponding
\href{https://github.com/NIMI-research/Tycho/blob/f68912a764372ead0a610db2e1c011d41ce5197e/tycho/prompts/actor.system.j2}{actor template},
\href{https://github.com/NIMI-research/Tycho/blob/f68912a764372ead0a610db2e1c011d41ce5197e/tycho/prompts/builder.system.j2}{builder template}, and
\href{https://github.com/NIMI-research/Tycho/blob/f68912a764372ead0a610db2e1c011d41ce5197e/configs/paper/gpt56_sol_orchestrator_max.yaml}{run configuration}
at commit \code{f68912a}.  Both prompts also included the development-only meta-reflection instruction
described in Section~\ref{sec:agent-adaptation}; we omit that instruction here for brevity.  Both
agents received the same color vocabulary and access to the per-game workspace and recorded-history
library.  We show those shared operational instructions only where they clarify a role-specific
contract.  Bracketed ellipses mark omitted prompt text.

\paragraph*{Tool surface}

\Cref{tab:actor-tools} summarizes the complete actor tool surface.  A question
mark denotes an optional field.  File operations were confined to the per-game workspace.  Only
\code{take\_action} changed the environment and consumed a scored action. The other tools supported
computation, memory, context management, or delegation.

{
\footnotesize
\setlength{\LTpre}{0.35\baselineskip}
\setlength{\LTpost}{0.35\baselineskip}
\begin{longtable}{@{}
  >{\raggedright\arraybackslash}p{0.28\linewidth}
  >{\raggedright\arraybackslash}p{0.67\linewidth}@{}}
\caption{Actor tools in the GPT-5.6 Sol orchestrator run.}
\label{tab:actor-tools}\\
\toprule
Tool and arguments & Function and benchmark effect \\
\midrule
\endfirsthead
\multicolumn{2}{@{}l}{\footnotesize\itshape Table~\thetable\ continued}\\
\toprule
Tool and arguments & Function and benchmark effect \\
\midrule
\endhead
\bottomrule
\endfoot
\code{ls(path?)} &
List a workspace directory; read-only. \\
\code{read\_file(path)} &
Read a workspace file; read-only. \\
\code{write\_file(path, content)} &
Create or overwrite a workspace file; persists without an environment action. \\
\code{edit\_file(path, old, new)} &
Replace one exact, unique string in a workspace file; persists without an environment action. \\
\code{run\_python(code, timeout\_s?)} &
Run Python with the current grid, NumPy, and recorded-history library preloaded; the optional timeout is
1--300 seconds (15 by default); no environment action. \\
\code{set\_verbosity(grid?, diff?)} &
Control whether later turns inline the grid and an exact, summarized, or omitted diff; recorded evidence
is unchanged. \\
\code{invoke\_builder}\\[-1pt]\code{(reason?)} &
Ask the builder to construct or revise the executable hypothesis; invokes a separate model call but no
environment action. \\
\code{take\_action(action, row?, col?)} &
Commit a currently available game control; ends the turn and consumes one scored environment action. \\
\end{longtable}
}
\FloatBarrier

The \code{action} field of \code{take\_action} was constrained each turn to \code{RESET} and the controls
declared by the current frame.  The builder had a narrower surface: file listing, reading, writing, and
exact-string editing; \code{run\_python}; and \code{edit\_function(path, name, code)} for replacing one
top-level Python function.  It had neither \code{take\_action} nor recursive delegation, so the actor
retained control over every scored intervention.

\paragraph*{Annotated actor prompt excerpts}

\colorlet{actorpromptaccent}{NavyBlue!82!black}
\tikzset{actor prompt bracket/.style={
  draw=actorpromptaccent,
  line width=1.15pt,
  line cap=rect
}}
\mdfdefinestyle{actorprompt}{
  hidealllines=true,
  leftline=true,
  linecolor=actorpromptaccent,
  middlelinewidth=1.15pt,
  leftmargin=2.0em,
  rightmargin=0.2em,
  innerleftmargin=0.9em,
  innerrightmargin=0.2em,
  innertopmargin=0.35em,
  innerbottommargin=0.35em,
  skipabove=0.55\baselineskip,
  skipbelow=0.65\baselineskip,
  splittopskip=0.35\baselineskip,
  splitbottomskip=0.35\baselineskip
}
\newcommand{\actorpromptlabelnode}[1]{%
  \node[
    rotate=90,
    anchor=center,
    inner sep=1pt,
    align=center,
    font=\scriptsize\sffamily\bfseries,
    text=actorpromptaccent
  ] at ($(O)!0.5!(O|-P)+(-0.9em,0)$) {#1};%
}
\newcommand{\promptcut}{%
  \textcolor{actorpromptaccent!72!black}{\bfseries[\ldots]}%
}
\newcommand{\actorprompttophook}{%
  \draw[actor prompt bracket] (O|-P) -- ++(0.55em,0);%
}
\newcommand{\actorpromptbottomhook}{%
  \draw[actor prompt bracket] (O) -- ++(0.55em,0);%
}
\newenvironment{actorpromptblock}[1]{%
  \def\actorpromptblocklabel{#1}%
  \begin{mdframed}[
    style=actorprompt,
    singleextra={%
      \actorpromptlabelnode{\actorpromptblocklabel}%
      \actorprompttophook
      \actorpromptbottomhook
    },
    firstextra={%
      \actorpromptlabelnode{\actorpromptblocklabel}%
      \actorprompttophook
    },
    middleextra={},
    secondextra={\actorpromptbottomhook}
  ]%
  \small
  \setlength{\emergencystretch}{2em}%
}{%
  \end{mdframed}%
}

The selected actor passages establish the task, delegation policy, and durable belief channel.

\begin{actorpromptblock}{TASK\\CONTRACT}
\promptcut{}
You are playing an unfamiliar turn-based game on a 64x64 grid of colored cells (colors 0--15). Its
objects, mechanics, and goal are not given - you need to discover them by exploring and observing, like a
scientist. Each game has multiple levels of usually increasing complexity; a level ends when you reach
its (unknown) goal. Your objective: complete each level in as few actions as possible. Benchmark prior:
every level is solvable by human players through a manageable amount of interaction and reasoning.
\promptcut{}
The current turn header lists the valid \code{take\_action} choices: RESET plus the frame-declared game
actions. Tool calls are free for scoring; each \code{take\_action}, including RESET, spends one scored
environment action and ends the turn. Prefer RESET when the current attempt is unwinnable or clearly
more expensive than restarting. \promptcut
\end{actorpromptblock}

\begin{actorpromptblock}{BUILDER\\DELEGATION}
\promptcut{}
World modeling is delegated to a focused subagent; you do NOT write \code{world\_model.py} yourself.
Once the dynamics look learnable, call \code{invoke\_builder} with your beliefs about state, action
effects, objectives, and useful probes. \promptcut{}
The builder constructs and verifies \code{world\_model.py},
then reports its confidence, outcome hypothesis, and recommended action or subgoal. You retain action
control: discount uncertain advice, and do not treat failure to find a plan as proof that the level is
impossible. Re-invoke after informative new evidence or a failed prediction, then reuse the model.
\promptcut
\end{actorpromptblock}

\begin{actorpromptblock}{BELIEF\\MEMORY}
\promptcut{}
Keep your evolving hypotheses in \code{notes/actor\_beliefs.md}; this is both memory across turns and
the handoff read by the builder. \promptcut{}
Treat HUD and interface cells as possible observations of game state, not decoration, and infer their
role from transitions and terminal evidence. \promptcut
\end{actorpromptblock}

\paragraph*{Annotated builder prompt excerpts}

The selected builder passages contain the role-specific scientific instructions: latent-state
representation, bounded prediction, falsification feedback, cross-level generalization, outcome
inference, and validated planning.  Shared operational instructions and implementation-level cautions
are omitted.

\begin{actorpromptblock}{ROLE}
You are the WORLD-MODEL BUILDER for an agent playing an unfamiliar 64x64 grid game. You do NOT take game
actions. Your only job: construct or refine \code{world\_model.py} so it predicts the game's dynamics,
and report what you found to the ACTING agent.

The ACTING agent hands you, in the message below, its current beliefs about the mechanics (how the grid
parses into objects, how actions change the grid, the objective/outcome/subgoals). Treat that as your
starting hypothesis---verify and encode it; correct it where the observed frames disagree. \promptcut
\end{actorpromptblock}

\begin{actorpromptblock}{STATE AND\\OBSERVATION}
\promptcut{}
Define whatever latent fields the game requires in \code{State}; a single frame need not determine
the next. \promptcut{}
\code{render(state)} must project that state to the full visible grid. Use \code{-1} only
for genuinely undetermined cells. For bounded HUD quantization, \code{observation\_variants} may return
at most five full-grid alternatives; do not use it to hide uncertain dynamics, locations, off-screen
terrain, or outcomes. \promptcut
\end{actorpromptblock}

\begin{actorpromptblock}{FALSIFICATION\\FEEDBACK}
\promptcut{}
The kickoff message includes a \code{[verify state]} block with initial-render results, simulation
accuracy, the first divergence, and lossless observed and predicted deltas. Compare those deltas to
locate the error, edit the model, and use the next verification report as feedback. \promptcut
\end{actorpromptblock}

\begin{actorpromptblock}{VERIFY AND\\GENERALIZE}
\promptcut{}
Every save auto-verifies three channels: the initial render, transitions replayed from the level start,
and outcomes on solved, failed, and ongoing states. \promptcut{}
Iterate: edit $\rightarrow$ read feedback
$\rightarrow$ refine. There is ONE game engine across levels, so encode shared mechanics rather than
hardcoding level branches in \code{transition()}; use level-specific initialization or outcomes only
when the evidence requires them. \promptcut
\end{actorpromptblock}

\begin{actorpromptblock}{OUTCOME\\INFERENCE}
\promptcut{}
OUTCOME is a FIRST-CLASS inference, not an afterthought. The objective is never told to you; infer it
from what changed when a level ended. \promptcut{}
A draining move or life counter is a cost, not success. While
the objective is uncertain, retain competing terminal hypotheses and the probe that would separate them.
\promptcut
\end{actorpromptblock}

\begin{actorpromptblock}{PLANNING}
\promptcut{}
Use the generic planner first. \promptcut{}
A plan is validated only when replay through \code{world\_model.py} reaches level\_complete.
\promptcut
\end{actorpromptblock}

\section{Task-Relevant Identification and Plan Sufficiency}
\label{app:identification}

Finite interaction generally leaves many programs and underlying states compatible with the evidence.  The
relevant question is which remaining distinctions can affect benchmark performance.  We formalize two
answers: identification only up to benchmark behavior, and correctness only along a proposed route.

\subsection{What must be identified?}

After history $h_t$, let $\Bel_t$ contain the complete machine--state hypotheses consistent with it.  Each
hypothesis fixes future observations, available actions, costs, outcomes, and level progression.  A
partial executable model with abstained cells or observation alternatives denotes a set of these
completions; it does not make the environment stochastic.

\begin{definition}[Benchmark equivalence]
Two hypotheses $b,b'\in\Bel_t$ are \emph{benchmark-equivalent}, written
$b\equiv_{\mathrm{bench}}b'$, when every adaptive policy induces the same available-action sets,
terminal outcomes, per-level completion indicators, and scored-action counts under both.
\end{definition}

Since RHAE is determined by completion and action counts, replacing the game by any member of its
equivalence class preserves every policy's score.  Identification therefore need not recover source code,
privileged state, or even a unique program.  History may nevertheless leave several classes possible.
A probe is decision-relevant only if its outcomes separate classes that favor different continuations,
and its potential benefit must justify its scored cost.  Tycho does not enumerate $\Bel_t$; the actor
approximates this comparison by recording alternative hypotheses and deciding whether another interaction
or a model request is worthwhile.

\subsection{How much correctness does a plan require?}

Let $P=(a_t,\ldots,a_{t+m-1})$ be a model-generated plan.  Starting from the true protocol state $x_t$
and modeled state $\hat s_t$, consider the true and modeled executions of $P$.

\begin{definition}[Route-sufficient model]
The model is \emph{route-sufficient} for $P$ when, after every prefix, the true rendered grid belongs to
the output concretization $\Gamma_{\!O}$ of the model's predicted render and variants, and the true and
modeled outcomes agree.  At every nonfinal prefix, the next action must be available in both executions.
\end{definition}

\begin{proposition}[Route preservation]
If a model is route-sufficient for $P$, then $P$ remains executable and incurs $m$ scored actions.  A
modeled \emph{level\_complete} or \emph{game\_over} endpoint is also the true endpoint.
\end{proposition}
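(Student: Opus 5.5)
The argument is an induction on the prefix length $k=0,1,\ldots,m$. The invariant at prefix $k$ has four parts: the true execution is still inside the current level attempt, its outcome equals the modeled outcome, exactly $k$ scored actions have been charged, and (for $k<m$) the next action $a_{t+k}$ is legal. Because the environment is a deterministic transition system under the scored game protocol, the true execution of $P$ is well defined as long as each chosen action is available and the current outcome is ongoing. So the work is to show that route sufficiency guarantees both conditions at every nonfinal step.

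\textbf{Base case.} At $k=0$ the true protocol state is $x_t$ and no plan action has been charged. Route sufficiency at the empty prefix gives agreement of true and modeled outcomes. For the induction to start we need that agreed outcome to be ongoing if $m>0$; otherwise the plan has length zero and the claim is vacuous.

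\textbf{Inductive step.} Assume the invariant at a nonfinal prefix $k<m$.
\begin{enumerate}
\item Route sufficiency gives availability of $a_{t+k}$ in the true execution. Since the plan is nonfinal at $k$, the modeled execution must continue past it, so the modeled outcome there is ongoing (a modeled terminal would end the plan). By outcome agreement the true outcome is also ongoing.
\item Under the protocol, an available ordinary action at an ongoing state is therefore executable. It advances $M_\theta^\ell$ and increments $a_\ell$ by exactly one.
\item The action is not RESET. If it were, it would still cost one action, so the count is unaffected either way. No level advancement or attempt closure happens mid-route, because those occur only after a terminal output, which step 1 has just excluded at nonfinal prefixes.
\end{enumerate}
Hence after $m$ steps the plan has been executed in full, with exactly $m$ scored actions.

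\textbf{Endpoint.} Route sufficiency at prefix $m$ gives equality of true and modeled outcomes. So a modeled \emph{level\_complete} or \emph{game\_over} is the true outcome, and the protocol then applies the corresponding closure, which itself charges no action.

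\textbf{Main obstacle.} The delicate step is the claim, used in the inductive step, that nonfinal prefixes are modeled-ongoing. The definition does not state this outright. I would make it explicit: a planner-produced route stops at its first modeled terminal, so any modeled outcome before the endpoint is ongoing, and outcome agreement transfers this to the true execution. I would also note that grid membership in $\Gamma_{\!O}$ is not needed for the conclusion. It only justifies the executor's hash and frame checks; the conclusion rests on outcome agreement and action availability alone.
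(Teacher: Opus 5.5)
Your proof is correct and follows the paper's argument. The paper also runs an induction over prefixes, using action availability and outcome agreement to keep the true execution defined with one scored action per step, and then applies final outcome agreement for the endpoint. Two of your points are refinements the paper's one-line proof leaves implicit: the side condition that nonfinal prefixes are modeled-\emph{ongoing} (routes stop at their first modeled terminal), and the remark that membership in $\Gamma_{\!O}$ is never used for the stated conclusions.
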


\begin{proof}
Induction over prefixes preserves action availability, the rendered-grid relation, and outcome agreement.
Applying final outcome agreement gives the claim.
\end{proof}

Route sufficiency requires no correctness away from $P$ and is thus weaker than global transition
match.  Conversely, exact replay constrains past transitions, not the unobserved transitions on a new
route.  Accepted transition match is therefore neither necessary nor sufficient for a high RHAE.
This clarifies the matched-policy result: repair can improve replay accuracy without improving RHAE,
while an abstract model can support a successful route despite errors elsewhere.  For an explicit
validated plan, Tycho surfaces one action at a time and withholds further plan actions when the observed
frame differs from the stored canonical prediction.

\section{Public Implementation and Audit Interface}
\label{app:implementation}

The released artifact contains the agent and harness source, prompt templates, paper configurations,
tests, aggregate inputs, and the six competition-scorecard manifests.  \code{make validate} exercises
the package and its configuration and contract tests without model or ARC credentials, supporting fresh
stochastic runs and aggregate verification.  Before creating each official scorecard, deterministic
competition replay checked the recorded actions against the returned states and rejected any mismatch;
the released scorecard manifests contain the resulting public scorecard links and aggregate outcomes.

\subsection{Agent-facing evidence API}

The harness interface is the evidence layer shared by all policies, including no-world-model runs.  It is not
itself a learned model.  It is the typed view of the environment history that the actor can inspect,
reason over, and, when enabled, use to build an executable model.

\begin{description}[leftmargin=1.9em,itemsep=1pt]
  \item[Decision frames.] Each turn exposes the current playable grid and action interface.  In the
  workspace, \code{grid} and \code{wmlib.current\_grid()} give the current rendered frame as a numeric
  array; prompts may additionally carry an image, a text grid, and a text diff.
  \item[Transition and boundary history.] \code{wmlib.frames()} and \code{wmlib.transitions()} return
  decision states and ordinary action transitions.  Completed-level and fatal boundaries are represented
  separately by \code{terminal\_events()} and \code{death\_events()}, preserving the terminal frame and
  causal action needed to test \code{outcome(state)}.
  \item[Transient animation evidence.] \code{wmlib.animation\_index()} lists summary metadata for saved
  non-decision animation events.  \code{animation\_grids()} retrieves exact numeric frames for a selected
  event, optionally restricted to keyframes or requested indices.
  \item[Perceptual helpers.] \code{wmlib.diff\_text()}, \code{wmlib.segment()}, and
  \code{wmlib.segment\_summary()} provide lossless diffs and connected-component summaries.
  They are convenience views over recorded grids, not fixed representational commitments.
\end{description}

This API matters for interpretation of results.  A no-world-model actor still receives durable evidence
about current state, history, boundaries, and animations; it is not a bare language model asked to infer
from a lossy transcript.  Conversely, when executable world modeling is enabled, the model is judged
against this same record.  The policy comparison therefore asks whether model-using workflows improve the
complete agent on top of a faithful Moore-machine interface.

\subsection{Executable-model verification}

The required Python functions are \code{init\_state}, \code{transition}, \code{render}, and
\code{outcome}.  The optional \code{observation\_variants} hook represents display uncertainty;
\code{actions}, \code{subgoals}, \code{heuristic}, \code{planner\_key}, and a custom planner can support
tractable planning.  The verifier threads each level from its first frame,
checks every claimed cell, reports prediction coverage separately, and evaluates outcomes against recorded
completion and fatal boundaries.  It applies no cosmetic HUD mask, and completion animations and the next
level's initial frame remain outside the completed level's ordinary transition sequence.

The trigger configuration uses a fixed verifier gate.  It requests repair when accepted transition match is
below $0.999$, when prediction coverage is below $0.75$ or vacuous, when the model is absent or invalid, or
when the outcome classifier contradicts observed ordinary, completion, or fatal states.  New levels after the
first and fatal resets invoke the builder independently of this gate.  The coverage guard prevents a
substantially abstaining renderer from silencing repair merely because all cells it does claim are correct.
The value $0.75$ was fixed during harness development and used unchanged in the reported trigger run; it
was not selected by a threshold sweep or varied in an ablation.

\section{Run Diagnostics}
\label{app:diagnostics}

\newcommand{\AppendixOperationalTable}{%
\begin{table}[H]
\centering
\scriptsize
\setlength{\tabcolsep}{3.2pt}
\begin{tabular}{@{}lrrrrrrr@{}}
\toprule
Policy & Wins & Level limit & LM limit & Cost limit & Context comp. & Peak prompt & LM calls \\
\midrule
No world model & 19 & 3 & 0 & 3 & 3 & 850k & 22.9k \\
Single & 19 & 2 & 0 & 4 & 4 & 851k & 25.6k \\
Orchestrator & 21 & 1 & 1 & 2 & 3 & 852k & 24.1k \\
Trigger & 18 & 2 & 5 & 0 & 1 & 852k & 44.4k \\
GPT-5.6 & 25 & 0 & 0 & 0 & 5 & 219k & 26.5k \\
Opus 5 & 25 & 0 & 0 & 0 & 0 & 300k & 15.1k \\
\bottomrule
\end{tabular}
\caption{Operational diagnostics for the six evaluated runs.}
\label{tab:appendix-operational}
\end{table}
}

\newcommand{\AppendixRenderingTable}{%
\begin{table}[H]
\centering
\scriptsize
\setlength{\tabcolsep}{3.2pt}
\begin{tabular}{@{}lrrrrrrr@{}}
\toprule
Policy & Trans. & Accepted & Strict & Known-cell & Coverage & UNKNOWN & Variant \\
\midrule
No world model & -- & -- & -- & -- & -- & -- & -- \\
Single & 4,510 & 22.24 & 6.9 & 90.2 & 99.5 & 30.8 & 0.5 \\
Orchestrator & 5,355 & 45.56 & 44.1 & 97.2 & 97.8 & 3.8 & 0.8 \\
Trigger & 6,105 & 99.97 & 92.9 & 100.0 & 98.9 & 5.5 & 1.6 \\
GPT-5.6 & 6,486 & 99.49 & 98.8 & 100.0 & 100.0 & 0.3 & 0.4 \\
Opus 5 & 6,114 & 70.26 & 63.2 & 95.7 & 99.5 & 7.8 & 2.7 \\
\bottomrule
\end{tabular}
\caption{End-of-run rendering diagnostics over graded transitions (percent except Trans.).}
\label{tab:appendix-rendering}
\end{table}
}

\newcommand{\AppendixOutcomeTable}{%
\begin{table}[H]
\centering
\scriptsize
\setlength{\tabcolsep}{3.2pt}
\begin{tabular}{@{}lrrrrr@{}}
\toprule
Policy & LC recall ($n$) & GO recall ($n$) & LC false + & GO false + & Verified games \\
\midrule
No world model & -- & -- & -- & -- & -- \\
Single & 28.5 (137) & 25.0 (12) & 0.2 & 0.6 & 0/21 \\
Orchestrator & 46.5 (155) & 66.7 (6) & 0.3 & 0.0 & 4/22 \\
Trigger & 98.1 (156) & 100.0 (20) & 0.0 & 0.0 & 20/24 \\
GPT-5.6 & 99.5 (183) & 100.0 (10) & 0.0 & 0.0 & 22/25 \\
Opus 5 & 87.8 (181) & 100.0 (6) & 3.4 & 0.0 & 8/25 \\
\bottomrule
\end{tabular}
\caption{End-of-run outcome diagnostics (percent except event counts and verified games).}
\label{tab:appendix-outcomes}
\end{table}
}

\newcommand{\AppendixPlannerTable}{%
\begin{table}[H]
\centering
\scriptsize
\setlength{\tabcolsep}{3.2pt}
\begin{tabular}{@{}lrrrrrr@{}}
\toprule
Policy & Builder calls & End-run plan lvls. & Auto first & Manual first & Builder first & Plan prefix \\
\midrule
No world model & 0 & 0 & -- & -- & -- & -- \\
Single & 0 & 0 & 25/29 & -- & -- & 116/217 \\
Orchestrator & 147 & 5 & -- & -- & 35/68 & -- \\
Trigger & 1192 & 34 & -- & -- & 552/970 & -- \\
GPT-5.6 & 660 & 34 & -- & 4/5 & 634/644 & 49/150 \\
Opus 5 & 130 & 32 & -- & -- & 109/129 & -- \\
\bottomrule
\end{tabular}
\caption{Builder activity and executable-model use.}
\label{tab:appendix-planner}
\end{table}
}

These diagnostics complement the per-game RHAE, level-completion, and action counts available from the
official scorecards.  The scorecard links and per-game manifests are released in
\path{artifacts/scorecards/}; machine-readable diagnostic values and aggregate figure inputs are released
in \path{artifacts/appendix_metrics.json} and \path{artifacts/figure_data.json}.

\subsection{Operational and end-of-run checks}

\Cref{tab:appendix-operational} separates successful engine \code{win} terminations from stops imposed by
the level-action, language-model-call, and inference-cost limits.  Context reductions count emergency
prompt reductions; peak prompt is the largest recorded prompt-token estimate; and LM calls count
language-model requests.  No run ended through the all-levels-accounted fallback, an exception, or the
output-length cap.

\AppendixOperationalTable

The rendering diagnostics in \Cref{tab:appendix-rendering} grade a transition when either the observed
grid changes or the model predicts a change.  Accepted transition match permits a bounded observation
variant and ignores cells explicitly marked \code{UNKNOWN}; strict match requires the canonical full
render.  Known-cell accuracy is weighted by claimed cells, coverage by transitions, and the last two
columns report how often each uncertainty mechanism was used.  The no-world-model policy has no
executable renderer.

\AppendixRenderingTable

\Cref{tab:appendix-outcomes} evaluates \code{level\_complete} (LC) and \code{game\_over} (GO).  Recall is
measured on recorded terminal evidence, false-positive rates on ordinary decision states, and parentheses
give terminal-event counts.  Inference-cost-capped games are excluded from all end-of-run model
aggregates.  Among the remaining games, a game enters the verified-game denominator only when its
end-of-run model provides a usable observable outcome report, and it is verified only when every such
outcome check passes.

\AppendixOutcomeTable

\subsection{Model use and pre-action repair}

We count only explicit harness events with stable trace-level definitions; arbitrary Python calls and file
rewrites are excluded.  In \Cref{tab:appendix-planner}, end-run plan levels count completed levels for
which model replay exposes at least one plan.  The remaining columns use recommendations surfaced before
the committed action.  A first action is followed only when the actor commits the same fully specified
action later in that harness turn.  Manual recommendations are explicit \code{plan.py} results, automatic
recommendations come from verifier feedback, and builder advice is separate because it need not originate
in search.  An explicit \code{plan.py} invocation creates the validated cross-turn artifact.  Automatic
feedback never does so.  Plan prefix counts surfaced actions until the first deviation or boundary.  A dash
means that the policy exposed no parseable recommendation through that channel, not that no planning
occurred.

\AppendixPlannerTable

The main text reports micro-averaged accepted transition match.  Game-macro accepted transition match is $16.6\%$,
$17.8\%$, and $83.0\%$ for single, orchestrator, and trigger, preserving the same ordering.  A conservative
lower bound that counts every action without an available pre-action model as inexact is $12.4\%$,
$13.5\%$, and $45.5\%$.

Repair-recovery measurements give a second view.  After a rejected prediction, orchestrator restores an
accepted prediction on the first post-builder action in 29 of 73 evaluable cases ($39.7\%$) and within
five predictions in 35 ($47.9\%$).  Trigger does so in 354 of 531 ($66.7\%$) and 503 of 531 ($94.7\%$),
respectively.  The evaluable subset contains 103 of 147 orchestrator builder calls and 716 of 1{,}192
trigger calls.  These rates are associational: a builder call can coincide with an already informative
trajectory, and the five-step window can include another call.

\FloatBarrier

\section{Generated Program versus Game Engine: \code{ls20}}
\label{app:ls20-generated-program-audit}

Replay metrics show whether a final simulator reproduces recorded transitions,
but not which engine rules it encodes or where its behavior diverges off the
recorded trajectory. We therefore compare the generated program, notes,
auxiliary files, and action trace from one completed \code{ls20} run with the
public engine source. This retrospective case study asks what Tycho constructed
and how builder recommendations appear in the action stream.

The run uses the Orchestrator policy described in
\Cref{tab:orchestration-policies}. 
We did not supply the actor or builder with an explicit description of
\code{ls20}'s mechanics. We did supply our ARC-AGI-3-specific prompts and
initial workspace, documented in \Cref{app:interface}. These specify the action interface,
interaction history retained across levels, explicit
state/transition/render/outcome interfaces, attention to hidden state and
heads-up display (HUD) variables, hypothesis-discriminating probes, replay
verification, uncertainty handling, and simulator-based search. The
\code{ls20} hypotheses and code changes were produced during interaction
within this workflow.

For ground truth, we use \path|ls20.py| from public engine package version
\code{9607627b}. 
The alphanumeric tags shown below are non-semantic identifiers
from the obfuscated engine source, not names assigned by Tycho. We include them only as searchable source
anchors.

\subsection{Program-engine correspondence}

\begin{table}[H]
    \centering
    \small
    \setlength{\tabcolsep}{5pt}
    \begin{tabular}{@{}
    >{\raggedright\arraybackslash}p{0.34\linewidth}
    >{\raggedright\arraybackslash}p{0.57\linewidth}@{}}
    \toprule
    Item & Audited value \\
    \midrule
    Inference activity &
    GPT-5.6 Sol at maximum reasoning effort; 1,749 completed language-model
    calls: 661 in
    the actor role, 1,068 in the builder role, 18 summarizing completed levels,
    and 2 summarizing animations; \$264.77 recorded list-price cost \\
    Benchmark result &
    7/7 levels, Relative Human Action Efficiency (RHAE; completion and action
    efficiency) 100 under the official scoring rule, \code{WIN}; 478 scored
    environment actions:
    17, 103, 41, 43, 75, 111, and 88 by level, including 3 \code{RESET}
    commands \\
    Program construction &
    98-line initial simulator template present at the first scored action; 38
    actor-requested builder invocations; 1,319-line final
    \path|world_model.py|; three generated auxiliary files totaling 347
    lines \\
    Engine source inspected for this audit &
    \path|ls20.py|; 2,060 lines \\
    \bottomrule
    \end{tabular}
    \caption{Execution and artifact scope of the audited \code{ls20} run.}
    \label{tab:ls20-audit-scope}
\end{table}

During the run, Tycho produced an executable Python simulator, implementing the
prescribed state, transition, rendering, and outcome interfaces.
Its \code{State} dataclass represents
avatar position, the remaining action-counter value (named \code{fuel} in the
generated code), carried shape and color, refills, game-board transformation
operators, launchers, target chambers, moving-object positions and directions,
and an internal map of terrain hidden by fog
(\path|world_model.py|). 
\code{init\_state} parses an observed grid, \code{transition} applies one
action, \code{render} produces the predicted grid, and \code{outcome} predicts
ongoing, completed, or game-over status (\path|world_model.py|, lines 240, 605,
910, and 1006). \code{actions} lists candidate moves; \code{subgoals} and
\code{heuristic} expose targets and cost estimates to search
(\path|world_model.py|).
Replay follows the typed decision-frame and verification protocol in
\Cref{sec:formal,sec:programmatic}; transient animation frames remain separate
evidence and are not graded as state transitions.

\Cref{tab:ls20-audit-mechanics,tab:ls20-audit-representations} use five audit
labels. A \emph{match} means that the generated code implements the stated
engine mechanic within the listed scope. A \emph{lookup table} maps a finite,
enumerated set of inputs to successor patterns and defines a fallback for all
other inputs. An \emph{approximation} agrees on the successful action sequences
replayed here but differs in mechanism or scope. An \emph{omission}
is an engine behavior with no counterpart in the generated transition.
\emph{Data} denotes fixed per-level engine configuration represented as
inferred constants or per-level state. Such values may be valid state
estimates, but they do not constitute a general rule for generating levels.

\begin{table}[H]
    \centering
    \scriptsize
    \setlength{\tabcolsep}{3pt}
    \renewcommand{\arraystretch}{1.08}
    \begin{tabular}{@{}
    >{\raggedright\arraybackslash}p{0.37\linewidth}
        >{\raggedright\arraybackslash}p{0.45\linewidth}
        >{\raggedright\arraybackslash}p{0.09\linewidth}@{}}
        \toprule
        Engine mechanic (source tag) &
        Counterpart in \code{world\_model.py} & Label \\
        \midrule
        $5\times5$ avatar moves one tile; wall tiles block
        (\code{ihdgageizm}) &
        The parser first matches the observed avatar: two rows of color 12 above three
        rows of color 9. If its palette changes, a fallback accepts the same
        geometry in any two non-terrain colors. Movement permits floor and
        operator cells, blocks walls, and tests target entry at tile centers &
        match \\
        \addlinespace
        Ordinary and wall-blocked movement decrements the counter.
        \code{StepsDecrement} is 1 on levels 0, 3, and 5, and defaults to 2 &
        Shared decrement and blocked-move logic; values inferred from counter
        shortening are 1, 2, 2, 1, 2, 1, and 2 for levels 0--6 & match $+$ data \\
        \addlinespace
        Hollow rings refill the counter once and disappear on contact
        (\code{npxgalaybz}) &
        Ring detection by an exact $3\times3$ ring outline; contact refills the
        counter and removes the ring. Respawn restoration is assessed below &
        match \\
        \addlinespace
        Rotation contact advances by $90^\circ$ (\code{rhsxkxzdjz}) &
        Clockwise \code{np.rot90} of the carried shape shown in the HUD when
        the avatar newly contacts the operator; all recorded rotations agree &
        match \\
        \addlinespace
        Color-operator contact advances through ARC palette color IDs
        $12\to9\to14\to8\to12$
        (\code{soyhouuebz}) &
        The colors are read clockwise from the visible wheel and applied in the
        same order & match \\
        \addlinespace
        Matching contact consumes a chamber; the level completes when no chambers
        remain, with two sequential target chambers on level 5
        (\code{rjlbuycveu}) &
        The visible $3\times3$ pattern and color are compared, so rotation is
        included in the pattern;
        the first target is removed when matched and the last completes the
        level & match \\
        \addlinespace
        Entry into a mismatched chamber is rejected without decrementing the
        counter; moving operators undo their tentative step &
        The avatar, counter, and moving-operator positions and directions also
        remain unchanged &
        match \\
        \bottomrule
    \end{tabular}
    \caption{
        Engine rules matched by executable mechanics in the generated
        program.
    }
    \label{tab:ls20-audit-mechanics}
\end{table}

\begin{table}[H]
\centering
\scriptsize
\setlength{\tabcolsep}{3pt}
\renewcommand{\arraystretch}{1.08}
\begin{tabular}{@{}
  >{\raggedright\arraybackslash}p{0.37\linewidth}
  >{\raggedright\arraybackslash}p{0.45\linewidth}
  >{\raggedright\arraybackslash}p{0.09\linewidth}@{}}
\toprule
Engine rule or representation (source tag) &
Counterpart in \code{world\_model.py} & Label \\
\midrule
Shape contact advances an unobserved internal index through six fixed
$3\times3$ patterns, independent of entry direction (\code{ttfwljgohq}) &
The program generates a 24-entry lookup table. It
produces the correct successor for every engine-valid shape and rotation;
an input outside that table is predicted to remain unchanged &
match $+$ lookup \\
\addlinespace
When a command starts with the counter at zero, movement and contact handling
still occur before possible life loss; respawn restores the avatar,
carried shape and color, counter, consumed rings, chambers, and moving-operator
positions and directions; the last life ends the game &
No exhaustion or respawn transition and no life counter are implemented;
\code{transition} cannot reach \code{game\_over} from \code{init\_state}.
\code{RESET} is selected by the actor, handled by the outer harness, and not
modeled by \code{transition} & omission \\
\addlinespace
Launch tiles move to the tile before the next wall or target-chamber coordinate;
a transformation operator on an intermediate tile does not shorten the move
(\code{gbvqrjtaqo}) &
A loop follows visible road until a wall, chamber, or non-road tile. It
matches recorded lanes and crosses operator tiles that retain road pixels,
but tests pixels in the rendered map rather than the engine's precomputed wall
and target coordinates &
match $+$ approx. \\
\addlinespace
Moving operators advance before the avatar and undo on rejected movement.
Invisible engine objects mark allowed path cells; at each step the engine tries
straight, then right, left, and reverse
(\code{xfmluydglp}) &
The generated program also advances operators before the avatar, but its
line-segment patrols and eight-position road loop are inferred from visible
geometry and observed positions rather than the hidden path objects &
match $+$ approx. \\
\addlinespace
Level-6 fog masks pixels farther than 20 pixels from the avatar and renders the
HUD after masking so it remains visible &
The same 20-pixel visibility rule is applied to a $64\times64$ terrain map
reconstructed from observations, with \code{?} for cells never made visible &
match $+$ data \\
\addlinespace
On level 0, an operator contact that produces a target match skips that command's
decrement &
Every rotator contact that begins after the previous state had no rotator
contact is free whenever the initial grid contains no refill rings, whether or
not it produces a target match & approx. \\
\addlinespace
All three transformation operators apply whenever the command's destination
contains them, including contact with the same moving operator on consecutive
commands &
The color wheel applies on every such contact. Shape and rotation trigger only
when contact begins, so the generated program would miss a repeated contact on
consecutive commands; this case was not exercised in the trace &
approx. \\
\addlinespace
Counter cost, terrain, starts, target attributes, and fog are per-level
fixed engine data &
Costs, target state, and observed terrain are retained as constants or
internal per-level state, not produced by a general rule for generating levels &
data \\
\bottomrule
\end{tabular}
\caption{Finite lookup structure, unimplemented transitions, per-level data,
and approximations in the generated
\code{ls20} program. These forms can support successful play without being identical to
the engine's state representation or implementation.}
\label{tab:ls20-audit-representations}
\end{table}

The generated program encodes each visible $3\times3$ shape row by row as a
9-bit integer, one bit per cell. This abridged excerpt from \path|world_model.py|
shows how it maps one shape to the next:
\begin{quote}
\small
\begin{verbatim}
family = (179, 461, 413, 151, 367, 234)
known_successor = {}
for index, source in enumerate(family):
    target = family[(index + 1) % len(family)]
    for quarter_turns in range(4):
        known_successor[bitmap_value(np.rot90(
            bitmap(source), -quarter_turns
        ))] = bitmap_value(np.rot90(
            bitmap(target), -quarter_turns
        ))
new_value = known_successor.get(value, value)
\end{verbatim}
\end{quote}
The engine instead increments an unobserved internal shape index cyclically
through six values. The generated code constructs 24 entries:
the six shapes under all four rotations. It produces the same successor as the
engine for every valid visible shape, while using a finite lookup table rather
than separate shape and rotation indices. For an integer outside the table, it
predicts no shape change instead of marking the carried-shape cells
\code{UNKNOWN}, the simulator's marker for cells on which it makes no claim.
The engine explicitly assigns each mobile operator an invisible path object.
The generated program does not represent these objects, it infers bounded
segments and a road loop from visible geometry and observed positions. Because
the path objects are never rendered, replay can test the resulting operator
positions on the recorded action sequence, but cannot establish recovery of
the engine's path representation or behavior on unsampled branches. The
generated paths are therefore behavioral approximations even where recorded
movement is exact.

\subsection{Generated support code and action use}

The three generated auxiliary files make their level-specific content explicit.
\path|compact_l5_search.py| hard-codes three eight-step operator paths, goal
coordinate $(50,54)$, 9-bit target value 335, ARC palette color ID 9, and an
intermediate successor table. \path|search_mobile.cpp| hard-codes
70-node movement graphs, 9-bit operator value 282, 9-bit target value 371, and the
subsequently rejected modular addition rule. \path|fog_template.py| stores the
terrain map reconstructed from level-6 observations as 64
row strings with \code{?} for unobserved cells. These are useful
level-specific search programs and map estimates, not a shared,
level-independent transition program. This observation is specific to the
audited \code{ls20} run. Determining whether auxiliary-file use generally
increases on later or harder levels would require a corpus file creation
analysis.

The action trace records each builder recommendation alongside the action
submitted in the same actor turn. Each of the 38 builder invocations returned a
\code{recommended\_action}; 35 recommended a directional movement action and
three recommended \code{RESET}. In all 38 cases the subsequently submitted
action matched that recommendation.
This establishes that builder
recommendations were followed in the recorded action stream. It does not
show which actions or level completions would have occurred without them.

\subsection{Recorded hypothesis tests and revisions}

The notes in \path|notes/world_model.md| record a sequence of explicit,
testable revisions:

\begin{enumerate}
  \item On level 3, one contact maps the 9-bit shape value 179 to 461 while
  the visible operator pattern encodes 282. Because $179+282=461$, the notes
  propose arithmetic addition, then record that this single transition is also consistent with
  OR, XOR, or direct replacement.
  \item On level 4, the notes predict that a second contact should yield 231
  under addition modulo 512. The observed value is 413, rejecting that rule.
  A hypothesis that the result depends on approach direction is retained
  provisionally because approach side and the number of prior contacts changed
  together and could not yet be separated.
  \item Deliberately repeated level-5 contacts produce
  $371\to466\to493\to174\to410\to359$. Intermediate observations reject a
  proposed four-value cycle and fixed arithmetic or bitwise rules. Before the final
  contact, a rotation-consistency hypothesis---rotating an input should also
  rotate its successor---predicts $410\to359$; the next observation matches
  that prediction.
  \item Level 6 then supplies
  $151\to367\to234\to179\to461\to413$. Together with the rotated level-5
  transitions, these observations support the six-shape transition mapping,
  applied consistently under rotation, retained in the final program.
\end{enumerate}
Inspection of the engine after the run confirms that the operator contains no arithmetic
rule and that entry direction is not an input to its shape-index update. The
record therefore contains explicit predictions, observations that contradict
or match them, and corresponding code revisions retained in the workspace.

\subsection{Replay}

The aggregate diagnostics in \Cref{sec:evaluation,app:diagnostics} distinguish
pre-action from end-of-run models. This case study instead uses the final
\code{ls20} program for a mechanic-level comparison with the engine, replaying
each level's final successful attempt. Historical program versions preserved
in workspace snapshots were not replayed here; the figures below therefore
describe final fit, not the accuracy available when each action was selected.

The verifier accepts all 394 graded transitions: every concretely predicted
cell matches the recorded next grid without an optional alternative grid. This
set excludes 74 transitions from three \code{RESET}-abandoned attempts and the
\code{RESET} commands themselves, although all three count in the 478-action
benchmark total. Seven level-ending transitions are checked separately through
terminal rendering and outcome. Strict full-grid exactness (all $64\times64$
cells concrete and correct) is 94.92\% at 99.97\% coverage. Only level 6 uses
\code{UNKNOWN}: 20 transitions contain at least one such cell, and strict
exactness is 77.01\%. Every abstained cell is marked \code{?} in the static
level-6 terrain template and appears when the moving fog window reaches terrain
not encoded in that template. Abstention avoids guessing at first exposure,
but some coordinates remain \code{UNKNOWN} on later visits because the
simulator does not assimilate observed successor grids into its threaded
terrain state. Those repeated abstentions are a model limitation rather than
unavoidable fog uncertainty. Outcomes match all seven completions; six
terminal renders are strictly exact and the seventh is accepted with
\code{UNKNOWN}. No game-over terminal was observed.

Replay covers only final successful attempts. A stronger audit would initialize
both systems from the same level start, replay a common prefix, and compare
untaken branches over multiple steps. That test was not run, so no accuracy is
reported for those branches.

\subsection{Claim assessment}

\begin{table}[H]
\centering
\scriptsize
\setlength{\tabcolsep}{5pt}
\begin{tabular}{@{}
  >{\raggedright\arraybackslash}p{0.43\linewidth}
  >{\raggedright\arraybackslash}p{0.48\linewidth}@{}}
\toprule
Supported by this case & Not supported by this case \\
\midrule
Tycho produced executable Python code matching several engine mechanics,
including a 24-entry lookup correct for every engine-valid visible shape state. &
This does not show exact recovery of all engine rules or equivalent recovery
on other games; life loss and respawn are absent. \\
\addlinespace
Notes and code record proposals, distinguishing tests, and resulting revisions
to mechanics. &
The trace cannot determine whether GPT-5.6 Sol encountered this game during
training or establish adaptation independent of ARC-AGI-3 specific system
design. \\
\addlinespace
In accepted replay, every concretely predicted cell matches; \code{UNKNOWN}
occurs only in level 6's incomplete terrain map. &
Replay does not establish agreement on untried states, other action sequences,
or long multi-step predictions. \\
\addlinespace
Within Tycho's human-designed workflow, GPT-5.6 Sol calls produced game specific
hypotheses, code, searches, and actions; all 38 builder recommendations were followed. &
The score does not isolate GPT-5.6 Sol from ARC-AGI-3 specific prompts,
interfaces, workspace, tools, or actor--builder workflow, and does not measure
component contributions. \\
\bottomrule
\end{tabular}
\caption{Claims warranted by the inspected \code{ls20} artifact and important
non-implications.}
\label{tab:ls20-audit-claims}
\end{table}

This case shows where the adaptation occurred. During play, GPT-5.6 Sol,
operating through Tycho, generated and revised an \code{ls20}-specific
simulator, supporting code, and hypotheses. The prompts, action and evidence
interface, persistent workspace, actor--builder protocol, verifier, and planner
were fixed and designed by us for ARC-AGI-3. The result therefore demonstrates
game-specific program construction within a human-designed harness.

\end{document}